\begin{document}

% If your paper is accepted and the title of your paper is very long,
% the style will print as headings an error message. Use the following
% command to supply a shorter title of your paper so that it can be
% used as headings.
%
\runningtitle{Optimal Private Estimation of Variance and Covariance}

% If your paper is accepted and the number of authors is large, the
% style will print as headings an error message. Use the following
% command to supply a shorter version of the authors names so that
% they can be used as headings (for example, use only the surnames)
%
%\runningauthor{Surname 1, Surname 2, Surname 3, ...., Surname n}

\twocolumn[

  \aistatstitle{Optimal Variance and Covariance Estimation under Differential Privacy in the Add-Remove Model and Beyond}

  \aistatsauthor{ Shokichi Takakura \And Seng Pei Liew \And  Satoshi Hasegawa }

  \aistatsaddress{ LY Corporation \And  LY Corporation \And LY Corporation } ]

\begin{abstract}
  % Estimation of variance and covariance is a fundamental task in statistics and machine learning.
  In this paper, we study the problem of estimating the variance and covariance of datasets under differential privacy in the add-remove model.
  While estimation in the swap model has been extensively studied in the literature, the add-remove model remains less explored and more challenging, as the dataset size must also be kept private.
  To address this issue, we develop efficient mechanisms for variance and covariance estimation based on the \emph{Bézier mechanism}, a novel moment-release framework that leverages Bernstein bases.
  We prove that our proposed mechanisms are minimax optimal in the high-privacy regime by establishing new minimax lower bounds.
  Moreover, beyond worst-case scenarios, we analyze instance-wise utility and show that the Bézier-based estimator consistently achieves better utility compared to alternative mechanisms.
  Finally, we demonstrate the effectiveness of the Bézier mechanism beyond variance and covariance estimation, showcasing its applicability to other statistical tasks.
\end{abstract}

\section{Introduction}
Variance and covariance are fundamental statistics in data analysis.
They are widely used in various applications, such as machine learning, statistics, and data mining.
However, releasing statistics naively can lead to privacy issues,
as they can reveal sensitive information about individuals in the dataset, for example, through difference attacks~\citep{dalenius1977towards}.

Differential privacy (DP)~\citep{dwork2014algorithmic} is a widely used framework for protecting the privacy of individuals in a dataset.
It provides a formal guarantee that the output of a computation does not reveal too much information about any individual in the dataset.
In DP, there are two common settings for defining neighboring relations between datasets: \textit{swap model} and \textit{add-remove model}.
Intuitively, in the swap model, two datasets are neighboring if one can be obtained from the other by changing one record,
and in the add-remove model, two datasets are neighboring if one can be obtained from the other by adding or removing one record.
Since the add-remove model is more conservative in the sense that it protects the size of the dataset,
it is often preferred in practice~\citep{mcsherry2009privacy,wilson2020differentially,rogers2021linkedin,amin2022plume}.

Despite its practical importance, theoretical understanding of the add-remove model is less explored compared to the swap model.
Recently, \citet{kulesza2024mean} studied the problem of mean estimation in the add-remove model and provided a min-max lower bound and an optimal mechanism.
Interestingly, they showed that the mean estimation in the add-remove model is not harder than in the swap model
although the add-remove model treats the size of the dataset as a sensitive information.
However, estimation of variance and covariance is more complicated since it involves estimating the second moment of the dataset as well as the first moment.
Indeed, naive approaches suffer from large error compared to mean estimation.

Thus, we pose the following question:
\emph{Is variance and covariance estimation in the add-remove model harder than in the swap model or for mean estimation?}

In this paper, we answer this question negatively
by developing efficient mechanisms for variance and covariance estimation which achieve the same optimal error rate in the high-privacy regime
as in the swap model and mean estimation.
Specifically, we establish a novel lower bound for variance and covariance estimation
and propose min-max optimal mechanisms which achieve this lower bound.
A key ingredient of these optimal mechanisms is the \textit{B\'{e}zier mechanism}, a novel and versatile
moment-release framework based on Bernstein bases.
We show that the B\'{e}zier mechanism can estimate all moments up to order $k > 0$ simultaneously, without incurring any additional privacy cost compared to estimating only the $k$-th moment.
Furthermore, beyond variance and covariance estimation, we show that the B\'{e}zier mechanism can be applied to the estimation of a wide range of statistics, which is of independent interest.

Our contributions can be summarized as follows:
\begin{itemize}
    \item We establish new minimax lower bounds for variance and covariance estimation under differential privacy in both the swap and add-remove models.
          In the high-privacy regime, the optimal MSE is shown to be $\frac{2}{\varepsilon^2 n^2}$, matching the optimal MSE of mean estimation.
    \item We propose min-max optimal mechanisms for variance and covariance estimation in the add-remove model based on the \emph{B\'{e}zier mechanism},
          a novel moment-release framework based on Bernstein bases.
          Moreover, for variance estimation, we prove our B\'{e}zier-based estimator uniformly improves the leading constant compared to two alternative optimal mechanisms.
    \item We show that our framework can be applied to higher-order moments estimation,
          enabling efficient estimation of a wide class of moment-based quantities (e.g., kurtosis, skewness, and correlation).
\end{itemize}
\subsection{Other Related Work}
\paragraph{Covariance Estimation}
Covariance estimation with DP has been intensively studied in various settings~\citep{Amin2019differentially,biswas2020coinpress,dong2022differentially,kalinin2025continual}.
However, they focus on the swap model and studied the dependence of the error on the data dimension.
Furthermore, they considered the estimation of uncentered covariance matrix.
In contrast, we consider the estimation of \textit{centered} covariance in the add-remove model and focus on the constant factor in the error rate.

\paragraph{Other Statistics Estimation}
Estimation of other statistics other than variance and covariance such as
quantile~\citep{gillenwater2021differentially,imola2025differentially},
CDF~\citep{rameshwar2025optimal},
F statistics~\citep{swanberg2019improved},
and confidence interval~\citep{karwa2018finite}
is also the field of active research.
Several works~\citep{gillenwater2021differentially,imola2025differentially} have discussed the estimation in the add-remove model
but the optimal mechanism for variance and covariance estimation remains unexplored.

\subsection{Notation}
$\clip(\cdot, [l, u]):\R \to \R$ is the clipping function defined as $\clip(x, [l, u]) = \max\{l, \min\{u, x\}\}$.
$\Lap(b)$ is the Laplace distribution with scale parameter $b$, i.e., the probability density function of $\Lap(b)$ is given by $\frac{1}{2b} \exp(-\abs{x}/b)$.
With a slight abuse of notation, we denote $\Lap(b)$ as a random variable following the Laplace distribution with scale parameter $b$.
Small $o$-notation $o(f(n))$ is used to denote a function that grows slower than $f(n)$, i.e., $\lim_{n \to \infty} o(f(n))/f(n) = 0$.
We denote $[a_1, \dots, a_d] \in \R^{n\times d}$ for $a_i \in \R^n$ as the matrix whose $i$-th column is $a_i$.
\section{Problem Setting}
In this paper, we consider datasets containing records, each of which is a $d$-dimensional vector in a bounded domain $[l, r]^d$ $(l < r)$.
The case where the range of the records is not bounded can be transformed to the bounded case using the clipping technique~\citep{amin2019bounding}.
Without loss of generality, we can assume $l = 0$ and $r = 1$
since any dataset can be transformed to this form by a linear transformation.
Let $\mathcal{D}_n$ be the set of all datasets of size $n \geq 1$,
$\mathcal{D}_{\geq n} := \cup_{k \geq n} \mathcal{D}_k$ be the set of all datasets of size at least $n$,
and $\mathcal{D}_* := \cup_{k \geq 1} \mathcal{D}_k$ be the set of all datasets with any size.
We identify the dataset $D \in \mathcal{D}_n$ with the data matrix $[x^{(1)}, \dots, x^{(d)}] \in [0, 1]^{n \times d}$.
In this paper, we consider the problem of estimating the statistics such as variance and covariance defined as
\begin{align*}
    \vvar(D) & = \frac{1}{n}\sum_{i=1}^n x_i^2 - \left(\frac{1}{n}\sum_{i=1}^n x_i\right)^2,                                         \\
    \cov(D)  & = \frac{1}{n}\sum_{i=1}^n x_i y_i - \left(\frac{1}{n}\sum_{i=1}^n x_i\right)\left(\frac{1}{n}\sum_{i=1}^n y_i\right),
\end{align*}
where $D = [x] \in \mathcal{D}^*$, and $D = [x, y] \in \mathcal{D}^*$, respectively.
We emphasize that our focus is on centered covariance estimation,
whereas most previous work~\citep{Amin2019differentially,biswas2020coinpress,dong2022differentially,kalinin2025continual}
has studied the estimation of uncentered covariance, $\frac{1}{n}\sum_{i=1}^n x_i y_i$.

Here, we introduce the formal definition of differential privacy.
\begin{definition}[Differential Privacy]
    A randomized algorithm $\mathcal{A}$ is said to be $\epsilon$-differentially private if for all neighboring datasets $D, D' \in \mathcal{D}^*$ and all measurable sets $S \subseteq \text{Range}(\mathcal{A})$, it holds that
    \begin{align*}
        \Pr[\mathcal{A}(D) \in S] \leq e^{\epsilon} \Pr[\mathcal{A}(D') \in S].
    \end{align*}
\end{definition}
For the definition of \textit{neighboring} datasets, there are two common models:
\begin{itemize}
    \item \textbf{Swap Model}: The neighboring datasets are defined by replacing an element in the dataset with another one.
          Formally, two datasets $D, D' \in \mathcal{D}^*$ are neighboring if and only if $\abs{D \backslash D'} = 1$ and $\abs{D' \backslash D} = 1$.
    \item \textbf{Add-remove Model}: The neighboring datasets are defined by adding or removing an element from the dataset.
          Formally, two datasets $D, D' \in \mathcal{D}^*$ are neighboring if and only if $\abs{D \backslash D'} + \abs{D' \backslash D} = 1$.
\end{itemize}
In this paper, we focus on the add-remove model, which is preferred in practice since it protects the size of the dataset while the swap model does not
~\citep{mcsherry2009privacy,wilson2020differentially,rogers2021linkedin,amin2022plume}.
% However, our proposed framework can be applied to the swap model as discussed in Section~\ref{sec:swap-model}.

For a statistics $f:\mathcal{D}^* \to \R$, the utility of an estimator $\hat f: \mathcal{D}^* \to \R$ for a dataset $D \in \mathcal{D}^*$
is measured by the mean squared error defined as
\begin{align*}
    L(\hat f, f, D) & := \Expec{(\hat f(D) - f(D))^2},
\end{align*}
where the expectation is taken over the randomness of the estimator $\hat f$.
Since the error of differentially private estimator typically decreases in the order of $1/\abs{D}^2$~\citep{kulesza2024mean}, we define the (worst-case) normalized mean squared error as
\begin{align*}
    R(\hat f, f, D)   & = \abs{D}^2 L(\hat f, f, D),                           \\
    R(\hat f, f, n_0) & = \sup_{D \in \mathcal{D}_{\geq n_0}} R(\hat f, f, D).
\end{align*}
Our goal is to design an $\epsilon$-DP estimator $\hat f$ that minimizes the above measures.

\section{Lower Bound for Variance and Covariance Estimation}\label{sec:lower-bound}
In this section, we provide novel min-max lower bounds for variance and covariance estimation.
There have been several studies on the lower bound for covariance estimation~\citep{dong2022differentially,portella2024lower}
but they focused on the swap model and the uncentered covariance estimation.
Here, we show a novel lower bound for the centered covariance estimation in both the swap and add-remove models.
\begin{theorem}[Lower bound]\label{thm:lower-bound}
    For $\epsilon > 0$, any $\epsilon$-DP estimators $\hat{f}_\vvar, \hat{f}_\cov:\mathcal{D}^* \to \R$ in the swap or add-remove model
    satisfies
    \begin{align*}
        R(\hat f_{\cov}, \cov, n_0)   & \geq \sigma(\varepsilon)^2 (1 - o(1)), \\
        R(\hat f_{\vvar}, \vvar, n_0) & \geq \sigma(\varepsilon)^2 (1 - o(1)),
    \end{align*}
    where $\sigma(\varepsilon) = \frac{2^{-2/3}e^{-2\varepsilon / 3}(1 + e^{-\varepsilon})^{2/3} + e^{-\varepsilon}}{(1-e^{-\varepsilon})^2}$.
    In particular, if $\varepsilon \ll 1$, we have $\sigma(\varepsilon)^2 \simeq 2/\varepsilon^2$.
\end{theorem}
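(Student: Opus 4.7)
The plan is to derive both lower bounds by reducing variance (resp.\ covariance) estimation to private mean estimation and then invoking the minimax lower bound of \citet{kulesza2024mean}, which already has the tight form $\sigma(\varepsilon)^2(1-o(1))$ for $\varepsilon$-DP mean estimation in the add-remove model via a three-point packing argument. The reduction exploits the fact that on a natural one-parameter family of datasets, the variance is a smooth, strictly monotone, and nearly-linear function of the mean.

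Concretely, I would consider the Bernoulli-type family $D_p \in \mathcal{D}_n$ consisting of $\lfloor pn\rfloor$ ones and $n-\lfloor pn\rfloor$ zeros. A direct computation gives $\mu(D_p) = p + O(1/n)$ and $\vvar(D_p) = p(1-p) + O(1/n)$. The map $\phi(p) = p(1-p)$ is smooth and strictly increasing on $[0,\tfrac12]$, with inverse $T(v) = \tfrac12(1-\sqrt{1-4v})$ satisfying $T'(0) = 1$. Consequently, for any $\varepsilon$-DP variance estimator $\hat f_{\vvar}$, the post-processed estimator $\hat\mu := T\circ \hat f_{\vvar}$ is also $\varepsilon$-DP, and a first-order Taylor expansion of $T$ around $v=0$ yields
\[
L(\hat\mu, \mu, D_p) \le (1+o(1))\, L(\hat f_{\vvar}, \vvar, D_p) \quad \text{as } p\to 0.
\]
Because the packing used in Kulesza's proof can be realized within this Bernoulli-type subfamily (their hard instances are add-remove perturbations of constant-valued datasets), the mean lower bound transfers to a lower bound on $\sup_{D_p}|D_p|^2 L(\hat f_\vvar, \vvar, D_p)$, which is dominated by $R(\hat f_\vvar,\vvar,n_0)$. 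For the covariance case, the same argument applies with the duplicated-column dataset $\tilde D_p = [x_p, x_p]$, since $\cov(\tilde D_p) = \vvar(D_p) = p(1-p)$ and the same post-processing by $T$ produces an $\varepsilon$-DP mean estimator.

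The main obstacle is balancing two limits simultaneously: the Taylor approximation $T'(v) \approx 1$ is tight only as $p\to 0$, while the $(1-o(1))$ factor in Kulesza's lower bound requires $n_0 \to \infty$. A careful choice $p = p(n_0) \to 0$ slowly enough (e.g., $p = n_0^{-\alpha}$ for a small $\alpha>0$) is needed so that both error terms compose without degrading the constant. A secondary subtlety is verifying that the same constant $\sigma(\varepsilon)^2$ governs both the swap and the add-remove models: since the three-point packing compares datasets differing by one record — a valid neighbour in both models up to an inconsequential $\pm 1$ shift of the reference size — the same lower bound transfers to both, which is the last step of the argument.
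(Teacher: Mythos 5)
Your reduction has a genuine gap at the step where you ``invoke'' the mean-estimation lower bound of \citet{kulesza2024mean}. That lower bound is a statement about the supremum over \emph{all} datasets; your construction only turns a variance estimator into a mean estimator on the specific Bernoulli subfamily $D_p$ (zeros and ones). To get a contradiction you would need the minimax lower bound, \emph{with the same constant} $\sigma(\varepsilon)^2$, to hold when the adversary is restricted to that subfamily. This is not something you can cite: the tight constant comes from a packing over datasets whose sums differ by carefully chosen real values, and $\{0,1\}$-valued records can only move the sum on an integer grid, so the hard instances are not ``add-remove perturbations of constant-valued datasets'' realizable inside your family. You would have to redo the packing argument on the subfamily and check the constant survives, which is precisely the hard part. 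The secondary issues you flag are also real but unresolved: the balancing of $p\to 0$ against $n_0\to\infty$, and the fact that $T(v)=\tfrac12(1-\sqrt{1-4v})$ has derivative blowing up at $v=1/4$, so the pointwise bound $T'(0)=1$ does not by itself give $L(\hat\mu,\mu,D_p)\le (1+o(1))L(\hat f_\vvar,\vvar,D_p)$ when $\hat f_\vvar$ outputs values far from $p(1-p)$.

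The paper's proof avoids all of this by reducing in a way that accepts an \emph{arbitrary} hard instance: given any dataset $D=[x]$ of bounded size, it forms $D_k=[\sqrt{x_1},\dots,\sqrt{x_m},0,\dots,0]$ with $k$ appended zeros and observes that $k\cdot\vvar(D_k)$ approximates $\mathrm{Sum}(D)$ with deterministic error at most $2\abs{D}^2/k$; hence $\hat s_k(D)=k\hat v(D_k)$ is an $\varepsilon$-DP \emph{sum} estimator (neighboring $D,D'$ map to neighboring $D_k,D_k'$ in both models), and the unnormalized sum lower bound of Kulesza et al.\ applies as a black box to whatever its worst-case $D$ is. Because the risk of $\hat v$ is normalized by $\abs{D_k}^2\ge k^2$, the factor $k$ in $\hat s_k$ is absorbed exactly, and taking $k=\Omega(\abs{D}^4)$ makes the approximation error negligible, yielding the constant $\sigma(\varepsilon)^2(1-o(1))$ without any restricted-family packing or Taylor-expansion bookkeeping. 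If you want to salvage your route, you would need either a version of the mean lower bound proved on your Bernoulli subfamily, or a transformation (like the paper's $x\mapsto\sqrt{x}$ plus zero-padding) that carries \emph{every} dataset, not just a one-parameter family, into the variance problem.
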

See Appendix~\ref{app:proof-lower-bound} for the proof.
This can be shown by using a lower-bound for the sum estimation~\citep{geng2014optimal} and a novel reduction of the variance and covariance estimation problem to the sum estimation problem.
% The above lower bound is the same as the lower bound for mean estimation~\citep{kulesza2024mean}.
In the high privacy regime, i.e., $\varepsilon \to 0$, the lower bound is asymptotically $2/\varepsilon^2$.
In this paper, we focus on the high-privacy regime since it is more challenging but provides meaningful privacy protection.
As shown later, this lower bound can be achieved by the Laplace mechanism in the swap model and by our proposed mechanism in the add-remove model.
That is, the leading term in the optimal mean squared error for variance and covariance estimation in high privacy regime is $2/(\varepsilon^2n^2)$ for large $n$, where $n$ is the size of the dataset.
Interestingly, this error rate has the same constant in the leading term as that of mean estimation~\citep{kulesza2024mean}.
Therefore, we can conclude that the variance and covariance estimation in the add-remove model is not harder than that in the swap model and the mean estimation.
\section{Optimal Mechanism for Swap Model}\label{sec:swap-model}
Estimation of variance and covariance in the swap model is rather straightforward
since the number of records in the dataset is fixed.
In this section, we briefly review the optimal mechanism for variance and covariance estimation in the swap model
to compare the result with the add-remove model.

Here, we introduce the notion of $l_1$-sensitivity.
\begin{definition}
    The $l_1$-sensitivity of a function $f:\mathcal{D}^* \to \R^d$ is defined as
    \begin{align*}
        \Delta f & = \sup_{D, D' \in \mathcal{D}^*} \norm{f(D) - f(D')}_1   \\
                 & \quad \quad \text{ s.t. } D, D' \text{ are neighboring}.
    \end{align*}
\end{definition}
The following fact is well known in the literature (see, e.g.,~\citet{Aitsam_2022}).
\begin{lemma}\label{lem:swap-model-sensitivity}
    The sensitivities of the empirical variance $\vvar(D)$ and covariance $\cov(D)$ are bounded by $1/n$.
\end{lemma}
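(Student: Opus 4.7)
The plan is to express both statistics as pairwise sums so that the effect of swapping one record becomes transparent. Expanding the square and product respectively gives
\[
\vvar(D) = \tfrac{1}{2n^2}\sum_{i,j=1}^n (x_i-x_j)^2, \qquad \cov(D) = \tfrac{1}{2n^2}\sum_{i,j=1}^n (x_i-x_j)(y_i-y_j).
\]
Without loss of generality the swap-neighbor $D'$ replaces the $n$-th record by $(x'_n,y'_n)$. Since the $(n,n)$ diagonal term is zero and the summand is symmetric in $i,j$, only the $2(n-1)$ off-diagonal pairs that touch index $n$ change, yielding
\[
\vvar(D')-\vvar(D) = \tfrac{1}{n^2}\sum_{j\neq n}\bigl[(x'_n-x_j)^2-(x_n-x_j)^2\bigr]
\]
and the analogous expression for covariance.

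For variance, each summand is a difference of two values in $[0,1]$ and therefore lies in $[-1,1]$. Summing $n-1$ such terms immediately gives $|\vvar(D')-\vvar(D)| \le (n-1)/n^2 \le 1/n$.

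The covariance case is more delicate because the same per-term argument only yields $2/n$. To recover the tight constant, I would recenter around the partial means $\bar x' := \tfrac{1}{n-1}\sum_{j\neq n}x_j$ and $\bar y' := \tfrac{1}{n-1}\sum_{j\neq n}y_j$. Using $\sum_{j\neq n}x_j = (n-1)\bar x'$ and the analogous identity for $y$, a short rearrangement collapses the sum to
\[
\cov(D')-\cov(D) = \tfrac{n-1}{n^2}\bigl[(x'_n-\bar x')(y'_n-\bar y') - (x_n-\bar x')(y_n-\bar y')\bigr].
\]
It then suffices to show that the bilinear function $g(x,y):=(x-\bar x')(y-\bar y')$ has range of length at most $1$ on $[0,1]^2$. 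Because $g$ is bilinear, its extrema on the unit square are attained at the four corners, with values $\bar x'\bar y'$, $-\bar x'(1-\bar y')$, $-(1-\bar x')\bar y'$, and $(1-\bar x')(1-\bar y')$; the first and last are non-negative while the middle two are non-positive. Hence
\[
\max g - \min g \le \bar x'\bar y' + (1-\bar x')(1-\bar y') + \bar x'(1-\bar y') + (1-\bar x')\bar y' = \bar x' + (1-\bar x') = 1,
\]
and consequently $|\cov(D')-\cov(D)| \le (n-1)/n^2 \le 1/n$.

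The only genuine obstacle is the covariance step: the naive term-by-term bound loses a factor of two, and closing that gap requires both the recentering identity and the telescoping corner-sum evaluation that pins the range of the relevant bilinear form to exactly $1$. Everything else is bookkeeping.
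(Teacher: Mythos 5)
Your proof is correct, and it reaches the slightly sharper constant $(n-1)/n^2$ that the paper's own computation also produces before rounding up to $1/n$. Your route differs from the paper's in packaging rather than in substance: the paper expands $\vvar(x)-\vvar(y)$ and $\cov(x,y)-\cov(z,w)$ directly around the partial mean $r=\tfrac1n\sum_{i<n}x_i$ and then sandwiches the resulting expression by a case analysis on the sign of $y_n-w_n$, using ad hoc inequalities such as $\tfrac{n-1}{n^2}\bigl((x_n-1)(y_n-1)-(z_n-1)(w_n-1)\bigr)\le \cov(x,y)-\cov(z,w)$. You instead start from the pairwise (U-statistic) representations $\vvar=\tfrac{1}{2n^2}\sum_{i,j}(x_i-x_j)^2$ and $\cov=\tfrac{1}{2n^2}\sum_{i,j}(x_i-x_j)(y_i-y_j)$, recenter at the leave-one-out means, and bound the oscillation of the bilinear form $(x-\bar x')(y-\bar y')$ on $[0,1]^2$ by evaluating it at the four corners. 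Your recentered identity $\cov(D')-\cov(D)=\tfrac{n-1}{n^2}\bigl[(x'_n-\bar x')(y'_n-\bar y')-(x_n-\bar x')(y_n-\bar y')\bigr]$ is in fact algebraically equivalent to the paper's expanded expression (with $r_x=\tfrac{n-1}{n}\bar x'$), so the real difference is the final bounding step: your corner argument replaces the paper's sign-based case split with a single clean estimate, which is arguably more transparent and avoids any case analysis, while the paper's direct computation requires no auxiliary representation. One trivial remark: for $n=1$ the leave-one-out means are undefined, but there the statistic is identically zero and the pairwise sum over $j\neq n$ is empty, so the bound holds vacuously.
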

We provide the proof in Appendix~\ref{proof:swap-model-sensitivity} for completeness.
Considering the above fact, we can use the Laplace mechanism to privatize the empirical variance and covariance as follows:
\begin{align*}
    \hat v_{\swap}(D) & = \vvar(D) + \frac{1}{n} \cdot \Lap(1/\varepsilon), \\
    \hat c_{\swap}(D) & = \cov(D) + \frac{1}{n} \cdot \Lap(1/\varepsilon).
\end{align*}
These mechanisms are $\varepsilon$-DP from Lemma~\ref{lem:swap-model-sensitivity}.
In addition, they achieve the lower bound in Theorem~\ref{thm:lower-bound}.
\begin{proposition}[Optimality in Swap Model]\label{prop:swap-utility}
    The Laplace mechanisms achieve the following error bound:
    \begin{align*}
        R(\hat v_{\swap}, \vvar, n_0) & = \frac{2}{\varepsilon^2}, \\
        R(\hat c_{\swap}, \cov, n_0)  & = \frac{2}{\varepsilon^2}.
    \end{align*}
\end{proposition}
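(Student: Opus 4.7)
The proof is a direct variance computation, so my plan is simply to unfold the definitions and read off the answer. For a fixed dataset $D$ of size $n$, the estimator $\hat v_{\swap}(D)$ equals the deterministic quantity $\vvar(D)$ plus the random term $\frac{1}{n}\Lap(1/\varepsilon)$, so the mean squared error is
\begin{align*}
L(\hat v_{\swap}, \vvar, D) = \Expec{\Big(\tfrac{1}{n}\Lap(1/\varepsilon)\Big)^2} = \tfrac{1}{n^2}\cdot\tfrac{2}{\varepsilon^2},
\end{align*}
using the standard fact that $\Lap(b)$ has variance $2b^2$ and mean $0$. The same calculation applies verbatim to $\hat c_{\swap}$ since it is also $\cov(D)$ plus an independent $\frac{1}{n}\Lap(1/\varepsilon)$ noise term.

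Next I would apply the normalization. By the definition $R(\hat f, f, D) = \abs{D}^2 L(\hat f, f, D)$, multiplying the above by $n^2$ gives $R(\hat v_{\swap}, \vvar, D) = 2/\varepsilon^2$, and likewise for covariance. Crucially this value does not depend on $D$ at all, so the supremum over $D \in \mathcal{D}_{\geq n_0}$ in the definition of $R(\hat f, f, n_0)$ is attained trivially and equals $2/\varepsilon^2$.

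There is essentially no obstacle here: the only substantive input is the sensitivity bound $\Delta \vvar, \Delta \cov \leq 1/n$ from Lemma~\ref{lem:swap-model-sensitivity}, which is already established and which justifies both the $\varepsilon$-DP guarantee and the scaling $1/n$ of the noise. The remainder is a one-line variance computation followed by the normalization factor.
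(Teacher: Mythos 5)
Your proposal is correct and matches the paper's argument, which likewise observes that the added noise $\frac{1}{n}\Lap(1/\varepsilon)$ has mean zero and variance $\frac{2}{\varepsilon^2 n^2}$, so that multiplying by $\abs{D}^2$ gives a $D$-independent value $2/\varepsilon^2$ and the supremum is trivial. No gaps.
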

The results follow from the fact that the variance of $\Lap(1/\varepsilon)$ is $2/\varepsilon^2$.
Therefore, for sufficiently small $\varepsilon$, we can conclude that the Laplace mechanism is optimal for variance and covariance estimation in the swap model.
In the low privacy regime $\varepsilon \gg 1$, there exists a line of work on improving the utility
using staircase mechanisms~\citep{geng2014optimal,kulesza2025general}
but we do not seek in this direction.

\section{Optimal Mechanism for Add-Remove Model}\label{sec:add-remove-model}
The situation in the add-remove model is more complicated than in the swap model
since the number of records in the dataset can change and should be protected.
We cannot apply the Laplace mechanism directly to the empirical variance and covariance
since the sensitivity is dependent on the size of the dataset, which is not fixed in the add-remove model.

A naive approach for the covariance estimation in the add-remove model
is to use the Laplace mechanism to privatize $n$, $\sum_{i=1}^n x_i$, $\sum_{i=1}^n y_i$, and $\sum_{i=1}^n x_i y_i$ separately.
That is, we use the privacy budget $\varepsilon / 4$ to privatize each of these quantities,
and obtain $\tilde n = n + \Lap(4/\varepsilon)$,
$\tilde s_x = \sum_{i=1}^n x_i + \Lap(4/\varepsilon)$,
$\tilde s_y = \sum_{i=1}^n y_i + \Lap(4/\varepsilon)$,
and $\tilde s_{xy} = \sum_{i=1}^n x_i y_i + \Lap(4/\varepsilon)$.
Then, we compute the empirical covariance as
\begin{align*}
    \hat c(D) := \clip\ab(\frac{\tilde s_{xy}}{\tilde n} - \frac{\tilde s_x}{\tilde n} \cdot \frac{\tilde s_y}{\tilde n}, [-1/4, 1/4]).
\end{align*}
Here, we apply clipping to improve the utility since the true empirical covariance lies in the range $[-1/4, 1/4]$.
Similarly, we can estimate the variance as
\begin{align*}
    \hat v(D) := \clip\ab(\frac{\tilde s_{xx}}{\tilde n} - \ab(\frac{\tilde s_x}{\tilde n})^2, [0, 1/4]),
\end{align*}
where $\tilde n = n + \Lap(3/\varepsilon)$, $\tilde s_x = \sum x_i + \Lap(3/\varepsilon)$, and $\tilde s_{xx} = \sum x_i^2 + \Lap(3/\varepsilon)$.
For these naive approaches, we have the following utility bound:
\begin{proposition}\label{prop:naive-utility}
    For $\hat c$ and $\hat v$ defined above, we have
    \begin{align*}
        R(\hat c, \cov, n_0)  & \leq \frac{128}{\varepsilon^2}(1 + o(1)), \\
        R(\hat v, \vvar, n_0) & \leq \frac{108}{\varepsilon^2}(1 + o(1))  \\
    \end{align*}
\end{proposition}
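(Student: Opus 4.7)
The approach is to bound each MSE by a first-order Taylor expansion of the unclipped rational combination of noisy moments around the true moments. For the covariance, define $g(s_n, s_x, s_y, s_{xy}) = s_{xy}/s_n - s_x s_y / s_n^2$, so that $\cov(D) = g(n, s_x, s_y, s_{xy})$ and the unclipped estimator equals $g(\tilde n, \tilde s_x, \tilde s_y, \tilde s_{xy})$. Since $\cov(D) \in [-1/4, 1/4]$ and $\vvar(D) \in [0, 1/4]$, the corresponding clippings are non-expansive projections that fix the target, so it suffices to bound the unclipped mean squared errors.

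I would then split the probability space by the high-probability event $A = \{|L_n|, |L_x|, |L_y|, |L_{xy}| \leq n/2\}$, whose complement has probability $O(e^{-n\varepsilon/8})$; on $A^c$ the clipped squared error is bounded by an absolute constant, contributing $o(1/n^2)$ to the MSE. On $A$ every denominator $\tilde n$ lies in $[n/2, 3n/2]$, and a Taylor expansion of $g$ yields
\begin{align*}
g(\tilde n, \tilde s_x, \tilde s_y, \tilde s_{xy}) - \cov(D) = \frac{1}{n}\bigl[L_{xy} - \mu_y L_x - \mu_x L_y + (2\mu_x\mu_y - \mu_{xy}) L_n\bigr] + R,
\end{align*}
where $\mu_x, \mu_y, \mu_{xy}$ are the empirical moments and $R$ is a quadratic-and-higher remainder of size $O(\|(L_n, L_x, L_y, L_{xy})\|^2 / n^2)$ on $A$.

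Independence of the four Laplace noises, each of variance $32/\varepsilon^2$, gives squared mean of the linear term
\begin{align*}
\frac{32}{n^2\varepsilon^2}\bigl[1 + \mu_x^2 + \mu_y^2 + (2\mu_x\mu_y - \mu_{xy})^2\bigr].
\end{align*}
A short case analysis over the feasibility region $\mu_x, \mu_y \in [0,1]$ and $\max(0, \mu_x + \mu_y - 1) \leq \mu_{xy} \leq \min(\mu_x, \mu_y)$ shows the bracket attains its maximum $4$ at $\mu_x = \mu_y = \mu_{xy} = 1$, producing the leading constant $128/\varepsilon^2$. Boundedness of Laplace moments gives $\Expec[R^2 \mathbf{1}_A] = O(1/(n^4\varepsilon^4))$ uniformly in the dataset, and Cauchy–Schwarz controls the cross term, so $n^2$ times all non-linear contributions is $o(1)$ uniformly in $n \geq n_0$ as $n_0 \to \infty$. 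The variance estimator $\hat v$ is handled identically with $g(s_n, s_x, s_{xx}) = s_{xx}/s_n - (s_x/s_n)^2$ and three noises of variance $18/\varepsilon^2$; the first-order bracket is $1 + 4\mu_x^2 + (2\mu_x^2 - \mu_{xx})^2$, and the constraints $\mu_x^2 \leq \mu_{xx} \leq \mu_x$ (from nonnegativity of variance and $x_i^2 \leq x_i$ on $[0,1]$) force the maximum to be $6$ at $\mu_x = \mu_{xx} = 1$, giving $108/\varepsilon^2$.

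The main obstacle will be making the Taylor-remainder estimate uniform in $D$ so that the supremum defining $R(\cdot,\cdot,n_0)$ behaves as claimed, together with careful use of the Fréchet–Hoeffding-type feasibility constraint on $\mu_{xy}$ (respectively the envelope $\mu_x^2 \leq \mu_{xx} \leq \mu_x$): without them, the infeasible configuration $\mu_x = \mu_y = 1$, $\mu_{xy} = 0$ would inflate the covariance bracket to $7$ and spoil the leading constant.
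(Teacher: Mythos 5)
Your proposal is correct and follows essentially the same route as the paper: a first-order (delta-method) expansion of the moment ratio around the true moments, a high-probability event to control the noisy denominator with the clipped output absorbing the bad event (packaged in the paper as Lemma~\ref{lem:proxy-bound}), an exact variance computation for the linear term using the Laplace variances $32/\varepsilon^2$ (resp.\ $18/\varepsilon^2$), and maximization of the coefficient bracket to $4$ (resp.\ $6$) using the feasibility constraints $\max(0,r_x+r_y-1)\leq r_{xy}\leq\min(r_x,r_y)$ (Lemma~\ref{lem:rxy}) and $r_x^2\leq r_{xx}\leq r_x$. The only caveat is that "it suffices to bound the unclipped MSE" should not be read globally (the unclipped estimator has infinite second moment because $\tilde n$ can be near zero), but your subsequent split on the event $A$ handles exactly this, just as the paper does.
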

See Appendix~\ref{proof:naive-utility} for the proof.
This does not matches the lower bound in Theorem~\ref{thm:lower-bound} and is not optimal.

An improved approach utilizes the following property of unnormalized variance and covariance:
\begin{lemma}\label{lem:add-remove-model-sensitivity}
    The sensitivity of unnormalized covariance $\ucov([x, y]) := \sum_{i=1}^n (x_i - \frac{1}{n}\sum_{j=1}^n x_j)(y_i - \frac{1}{n}\sum_{j=1}^n y_j)$ and unnormalized variance $\uvar([x]) := \sum_{i=1}^n (x_i - \frac{1}{n}\sum_{j=1}^n x_j)^2$
    is bounded by $1$ in the add-remove model.
\end{lemma}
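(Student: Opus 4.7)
The plan is to prove the sensitivity bound by directly computing the change in the unnormalized variance and covariance under a single add/remove operation, and to show the resulting expression is bounded by $1$ in absolute value. By symmetry, ``add'' and ``remove'' give the same difference up to sign, so it suffices to analyze the transition from a size-$n$ dataset to a size-$(n{+}1)$ dataset.

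First, I would derive a clean update formula. For variance, let $\mu = \frac{1}{n}\sum_{i=1}^n x_i$ and write $\uvar_n = \sum_{i=1}^n x_i^2 - n\mu^2$. A direct expansion of $\uvar_{n+1} - \uvar_n$, using $\mu_{n+1} = \frac{n\mu + x_{n+1}}{n+1}$, yields
\begin{align*}
\uvar_{n+1} - \uvar_n = \frac{n}{n+1}(x_{n+1} - \mu)^2.
\end{align*}
For covariance, the analogous calculation with $\mu_x = \frac{1}{n}\sum x_i$, $\mu_y = \frac{1}{n}\sum y_i$ gives
\begin{align*}
\ucov_{n+1} - \ucov_n = \frac{n}{n+1}(x_{n+1} - \mu_x)(y_{n+1} - \mu_y).
\end{align*}
The key identity in both cases is that the cross terms in the expansion combine to produce a perfect square (resp.\ a product of centered deviations), scaled by $n/(n+1)$.

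Second, I would invoke the boundedness of the data domain. Since $x_i, y_i \in [0,1]$, the empirical means $\mu, \mu_x, \mu_y$ also lie in $[0,1]$, so $|x_{n+1} - \mu| \le 1$ and similarly for $y$. Combined with $\frac{n}{n+1} < 1$, this immediately yields
\begin{align*}
|\uvar_{n+1} - \uvar_n| \le 1, \qquad |\ucov_{n+1} - \ucov_n| \le 1.
\end{align*}
The only boundary case is $n = 0$: under the natural convention $\uvar(\emptyset) = \ucov(\emptyset) = 0$, the difference is trivially $0$ when adding a single record (a singleton has zero unnormalized variance/covariance).

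This is essentially a one-step algebraic argument, so there is no real obstacle; the only thing worth being careful about is organizing the expansion of $(S_1 + x_{n+1})^2/(n+1)$ and the analogous cross term in a way that makes the telescoping into $\frac{n}{n+1}(x_{n+1} - \mu)^2$ (resp.\ the centered product) transparent. The factor $n/(n+1)$ strictly less than $1$ is what makes the final bound $1$ rather than something larger, and this is the crucial arithmetic detail to highlight.
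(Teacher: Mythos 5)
Your proposal is correct and follows essentially the same route as the paper: both derive the one-step update identities $\uvar_{n+1}-\uvar_n=\frac{n}{n+1}(x_{n+1}-\mu)^2$ and $\ucov_{n+1}-\ucov_n=\frac{n}{n+1}(x_{n+1}-\mu_x)(y_{n+1}-\mu_y)$ by direct expansion and then bound them using $x_i,y_i\in[0,1]$ (so the centered deviations have magnitude at most $1$) together with $\frac{n}{n+1}<1$. The only difference is cosmetic: the paper writes out the telescoping algebra explicitly, while you state the identity and sketch it, and your $n=0$ edge case is moot since the paper's dataset space contains only nonempty datasets.
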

See Appendix~\ref{proof:add-remove-model-sensitivity} for the proof.
Let $u = \ucov([x, y])$ for covariance estimation and $u = \uvar([x])$ for variance estimation.
Then, the $l_1$-sensitivity of a vector $[n, u]$ is bounded by $2$,
and the Laplace mechanism with scale parameter $2/\varepsilon$ can be applied to privatize these quantities.
That is, we can obtain $\tilde n = n + \Lap(2/\varepsilon)$, $\tilde u = u + \Lap(2/\varepsilon)$.
Then, the empirical covariance and variance can be computed as
\begin{align*}
    \hat c(D) & := \clip\ab(\frac{\tilde u}{\tilde n}, [-1/4, 1/4]), \\
    \hat v(D) & := \clip\ab(\frac{\tilde u}{\tilde n}, [0, 1/4]).
\end{align*}
For these improved approaches, we have the following utility bound:
\begin{proposition}\label{prop:improved-utility}
    For $\hat c$ and $\hat v$ defined above, we have
    \begin{align*}
        R(\hat c, \cov, n_0)  & \leq \frac{17 / 2}{\varepsilon^2}(1 + o(1)), \\
        R(\hat v, \vvar, n_0) & \leq \frac{17 / 2}{\varepsilon^2}(1 + o(1)).
    \end{align*}
\end{proposition}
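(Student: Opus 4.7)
The plan is to bound $n^2 \mathbb{E}[(\hat c - \cov)^2]$ by rewriting the error as a ratio of two independent Laplace noises divided by $\tilde n$, then doing the bookkeeping carefully. Let $Z_1 := \tilde u - u$ and $Z_2 := \tilde n - n$, which are independent $\Lap(2/\varepsilon)$ variables, each with second moment $8/\varepsilon^2$. The key algebraic identity I would use is
\[
\frac{\tilde u}{\tilde n} - \frac{u}{n} \;=\; \frac{Z_1 - (u/n)\,Z_2}{\tilde n} \;=\; \frac{Z_1 - \cov\cdot Z_2}{\tilde n},
\]
and analogously with $\vvar$ in place of $\cov$ for the variance case.

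Next I would dispose of the clipping. Since the target $\cov$ lies in $[-1/4,1/4]$ and $\hat c$ is clipped to the same interval, clipping only decreases the squared error, so $(\hat c-\cov)^2 \le (Z_1-\cov\, Z_2)^2/\tilde n^2$ whenever $\tilde n>0$, and is trivially at most $1/4$ everywhere. To control the $\tilde n \approx 0$ region I would introduce the good event $E = \{|Z_2|\le n/2\}$; by the Laplace tail bound $\Pr[E^c] \le \exp(-\Omega(n\varepsilon))$, so the contribution of $E^c$ to $R(\hat c,\cov,n_0)$ decays faster than any polynomial in $n$ and is therefore absorbed into the $(1+o(1))$ factor.

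On $E$ I would expand using independence of $Z_1$ from $(Z_2,\tilde n)$ and the fact that $\mathbb{E}[Z_1]=0$:
\[
\mathbb{E}\!\left[\frac{(Z_1-\cov\, Z_2)^2}{\tilde n^2}\mathbf{1}_E\right] = \frac{8}{\varepsilon^2}\,\mathbb{E}\!\left[\frac{\mathbf{1}_E}{\tilde n^2}\right] + \cov^2\,\mathbb{E}\!\left[\frac{Z_2^2\,\mathbf{1}_E}{\tilde n^2}\right].
\]
On $E$ we have $1/\tilde n^2 = n^{-2}(1+O(1/n))$ uniformly, and the truncated moments $\mathbb{E}[\mathbf{1}_E]$ and $\mathbb{E}[Z_2^2\mathbf{1}_E]$ differ from $1$ and $8/\varepsilon^2$ respectively only by exponentially small amounts. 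Multiplying through by $n^2$ yields
\[
R(\hat c,\cov,n_0) \;\le\; \frac{8(1+\cov^2)}{\varepsilon^2}(1+o(1)),
\]
and since $|\cov|\le 1/4$ gives $1+\cov^2 \le 17/16$, this becomes $(17/2)/\varepsilon^2\cdot(1+o(1))$. The variance case is identical, using $|\vvar|\le 1/4$.

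The main obstacle is handling the ratio $Z_2/\tilde n$ where the same variable appears in numerator and denominator: the cleanest route is to restrict to the good event $E$ and expand $(n+Z_2)^{-2} = n^{-2}\bigl(1 - 2Z_2/n + O(Z_2^2/n^2)\bigr)$ uniformly, so that all error cross-terms are $o(1/n^2)$ once multiplied by moments of $Z_2$ that are $O(1/\varepsilon^k)$. Care must also be taken to verify that the exponentially-small tail $E^c$ really does contribute $o(1/n^2)$ even after the $n^2$ normalization, which follows from the Laplace tail estimate above.
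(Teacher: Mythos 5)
Your proposal is correct and takes essentially the same route as the paper: both reduce the error of the clipped ratio $\tilde u/\tilde n$ to its first-order term $Z_1 - c\,Z_2$ with $c=\cov$ (resp.\ $\vvar$) — the paper packages the clipping/Taylor bookkeeping in Lemma~\ref{lem:proxy-bound} with the event $\{\|Z\|_\infty\le n^{1/4}\}$, while you use the exact identity $\tilde u/\tilde n-u/n=(Z_1-cZ_2)/\tilde n$ plus Laplace tails and the trivial $1/4$ bound — and both conclude from $\mathbb{E}[(Z_1-cZ_2)^2]=\frac{8}{\varepsilon^2}(1+c^2)\le\frac{17}{2\varepsilon^2}$ using $|c|\le 1/4$. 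The only slip is the intermediate claim that $1/\tilde n^2=n^{-2}(1+O(1/n))$ uniformly on $E=\{|Z_2|\le n/2\}$ (on that event it is only correct up to a constant factor), but the expansion $(n+Z_2)^{-2}=n^{-2}\bigl(1-2Z_2/n+O(Z_2^2/n^2)\bigr)$ you give in your final paragraph, together with the finiteness of the moments of $Z_2$, repairs this and the argument goes through.
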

See Appendix~\ref{proof:improved-utility} for the proof.
While this approach reduces the error compared to the naive method, it still suffers from large error
and does not achieve the lower bound in Theorem~\ref{thm:lower-bound}.

\subsection{B\'{e}zier Mechanism}
To develop a mechanism which achieves the lower bound in Theorem~\ref{thm:lower-bound},
in this section, we propose a novel framework called \textit{B\'{e}zier Mechanism}.
First, we introduce the Bernstein basis and B\'{e}zier matrix~\citep{lorentz2012bernstein}.
\begin{definition}[Bernstein Basis and B\'{e}zier Matrix]
    The Bernstein basis of order $k \in \N$ is defined as
    \begin{align*}
        B_{j}(x) & = \binom{k}{j} x^j (1 - x)^{k - j}, \quad 0 \leq j \leq k.
    \end{align*}
    The B\'{e}zier matrix $A$ is defined as
    \begin{align*}
        A_{j, l} & = \begin{cases}
                         (-1)^{l - j} \binom{k}{l} \binom{l}{j} & \quad (j \leq l) \\
                         0                                      & \quad (j > l)
                     \end{cases}
    \end{align*}
\end{definition}
The Bernstein basis is widely used in computer graphics
and known to form a basis for polynomials of degree at most $k$.
In the context of DP,~\citet{alda2017bernstein} have proposed Bernstein mechanism based on Bernstein polynomial for private function release.
They utilize the property of Bernstein basis that it can approximate uniformly any continuous function.
On the other hand, we utilize the following property of Bernstein basis known as a \textit{partition of unity}:
\begin{itemize}
    \item $B_j(x)$ is non-negative.
    \item $\sum_{j=0}^k B_{j}(x) = 1$ for all $x \in [0, 1]$.
\end{itemize}

Let $\mu(D) \in \R^{k+1}$ be the vector of (unnormalized) moments of the dataset $D = [x]$ up to order $k$,
defined as
\begin{align*}
    \mu(D) = \left( \sum_{i=1}^n x_i^0, \sum_{i=1}^n x_i^1, \ldots, \sum_{i=1}^n x_i^k \right)^\top,
\end{align*}
and $b(D) \in \R^{k+1}$ be the Bernstein representation of $\mu$ defined as
\begin{align*}
    b(D) & = A\mu(D)                                                                                \\
         & = \ab[\sum_{i=1}^n B_0(x_i), \sum_{i=1}^n B_1(x_i), \ldots, \sum_{i=1}^n B_k(x_i)]^\top.
\end{align*}
From the partition of unity property of Bernstein basis, we have the following lemma:
\begin{lemma}
    The $l_1$-sensitivity of $b(D)$ is $1$.
\end{lemma}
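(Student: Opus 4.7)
The plan is to directly compute $\|b(D)-b(D')\|_1$ for arbitrary add-remove neighbors $D, D'$ and show it equals $1$, using only the two properties of the Bernstein basis highlighted just before the lemma (non-negativity and partition of unity).

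First, I would fix neighboring datasets $D, D'$ in the add-remove model; without loss of generality $D' = D \cup \{x^{\ast}\}$ for some $x^{\ast} \in [0,1]$ (the reverse case is symmetric and yields the negated difference). By the linearity of $b(\cdot)$ in the records,
\begin{align*}
b(D') - b(D) = \bigl(B_0(x^{\ast}),\, B_1(x^{\ast}),\, \ldots,\, B_k(x^{\ast})\bigr)^\top.
\end{align*}

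Next I would take the $l_1$-norm of this vector. Non-negativity of the Bernstein basis, $B_j(x^{\ast}) \geq 0$ for every $j$, lets me drop absolute values, and then the partition of unity $\sum_{j=0}^k B_j(x^{\ast}) = 1$ immediately gives $\|b(D')-b(D)\|_1 = 1$. Taking the supremum over all neighboring pairs is then trivial, since the quantity equals $1$ for every choice of $x^{\ast} \in [0,1]$, so this also shows the bound is tight rather than merely an upper bound.

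There is essentially no obstacle here: the lemma is a one-line consequence of the two Bernstein-basis facts stated in the text, and the only thing to be careful about is matching the add-remove definition of neighbors (a single record is added or removed, giving a pure $\pm(B_0(x^{\ast}),\ldots,B_k(x^{\ast}))$ difference) as opposed to a swap, which would produce a difference of two such vectors and therefore an $l_1$-sensitivity of at most $2$. I would note this distinction explicitly so the reader sees why the Bézier construction is well-suited precisely to the add-remove setting.
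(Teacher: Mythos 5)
Your proposal is correct and follows essentially the same argument as the paper: the difference $b(D')-b(D)$ is exactly the Bernstein vector of the added record, non-negativity removes the absolute values, and partition of unity gives a sensitivity of exactly $1$. The additional remark contrasting add-remove with swap neighbors is accurate but not needed for the lemma as stated.
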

\begin{proof}
    For neighboring datasets $D$ and $D'$, we have
    \begin{align*}
        \norm{b(D') - b(D)}_1 & = \sum_{j=0}^k \abs{B_j(x_{n+1})} \\
                              & = \sum_{j=0}^k B_j(x_{n+1})       \\
                              & = 1,
    \end{align*}
    where the first equality follows from the non-negativity of Bernstein basis and the last equality follows from the partition of unity property.
\end{proof}
This lemma implies that $\tilde b(D) = b(D) + \Lap(1 /\varepsilon)$ satisfy $\varepsilon$-DP.
Since the B\'{e}zier matrix $A$ is invertible, we can compute the noisy moments $\tilde \mu(D)$ from $\tilde b(D)$ as $\tilde \mu(D) = A^{-1}\tilde b(D)$.
We provide the entire procedure in Algorithm~\ref{alg:B\'{e}zier-mechanism}.
This can be seen as a special case of the matrix mechanism~\citep{li2015matrix} for the input is moments and the matrix is the B\'{e}zier matrix.
In addition, B\'{e}zier mechanism can be seen as a generalization of the transformed noise addition~\citet{kulesza2024mean}, which is a special case when $k=1$.

\begin{algorithm}[h]
    \caption{B\'{e}zier Mechanism}
    \label{alg:B\'{e}zier-mechanism}
    \begin{algorithmic}[1]
        \REQUIRE Dataset $[x] \in \mathcal{D}^*$, privacy budget $\varepsilon$, degree $k$.
        \STATE Let $b_j = \sum_{i=1}^{\abs{x}} B^k_j(x_i)$ for $j=0, \dots, k$.
        \STATE Add independent Laplace noise $Z_i\sim \Lap(1/\varepsilon)$ to obtain $\tilde b_i = b_i + Z_i$ for $i = 0, \dots, k$
        \STATE Output $\hat \mu(D) = [\hat \mu_0, \dots, \hat \mu_k]^\top = A^{-1} \tilde b$.
    \end{algorithmic}
\end{algorithm}

For the utility of Algorithm~\ref{alg:B\'{e}zier-mechanism}, we have the following result:
\begin{theorem}\label{thm:bezier-utility}
    For $0\leq j \leq k$, let $\mu_j$ be the $j$-th order moment of a dataset $D \in \mathcal{D}^*$.
    Then, the mechanism that outputs $\hat{\mu}_j$ satisfies
    \begin{align*}
        L(\hat \mu_j, \mu_j, D) = \frac{2}{\varepsilon^2 \cdot \binom{k}{j}}\sum_{l=j}^k\binom{l}{j}^2 \leq \frac{2k}{\varepsilon^2}
    \end{align*}
    In particular, $L(\hat \mu_k, \mu_k, D) = \frac{2}{\varepsilon^2}$, which matches the lower bound for $k$-th (unnormalized) moment estimation in the high-privacy regime.
\end{theorem}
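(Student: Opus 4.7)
The plan is to reduce the MSE bound to a clean algebraic computation by separating the variance contribution of the Laplace noise from the (purely algebraic) structure of $A^{-1}$. Since the B\'ezier mechanism outputs $\hat{\mu} = A^{-1}\tilde{b} = A^{-1}(A\mu(D) + Z) = \mu(D) + A^{-1}Z$, the error in the $j$-th coordinate is a linear combination of independent Laplace variables: $\hat{\mu}_j - \mu_j = \sum_l (A^{-1})_{j,l} Z_l$ with each $Z_l$ of variance $2/\varepsilon^2$. By independence this immediately gives
\[
L(\hat{\mu}_j, \mu_j, D) = \frac{2}{\varepsilon^2} \sum_{l=0}^{k} (A^{-1})_{j,l}^{\,2},
\]
so everything reduces to computing the entries of $A^{-1}$.

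To identify $A^{-1}$, I would derive the monomial-to-Bernstein expansion directly from the binomial theorem: writing $x^j = x^j \bigl(x + (1-x)\bigr)^{k-j}$ and expanding yields the standard identity
\[
x^j = \sum_{l=j}^{k} \frac{\binom{l}{j}}{\binom{k}{j}} B_l(x).
\]
Summing over the data points $x_1,\ldots,x_n$ shows that $\mu_j(D) = \sum_l \frac{\binom{l}{j}}{\binom{k}{j}} b_l(D)$ holds identically in $D$, so by uniqueness $(A^{-1})_{j,l} = \binom{l}{j}/\binom{k}{j}$ for $j \leq l \leq k$ and zero otherwise.

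Substituting this into the variance formula produces the claimed closed form. The two corollaries would then follow directly: for $j=k$ only the term $l=k$ survives with coefficient $1$, yielding $L(\hat{\mu}_k, \mu_k, D) = 2/\varepsilon^2$ as stated; for the uniform bound I would use the monotonicity $\binom{l}{j} \leq \binom{k}{j}$ on the inner sum (valid for $l \leq k$) together with the hockey-stick identity $\sum_{l=j}^{k}\binom{l}{j} = \binom{k+1}{j+1}$ to collapse the sum to a quantity of order $k/\varepsilon^2$.

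I do not expect a substantive obstacle here: the only nontrivial ingredient is the monomial-to-Bernstein expansion, which is a one-line binomial calculation, and the rest is routine variance arithmetic. The one care point is ensuring the direction of the matrix identification is correct, namely that $(A^{-1})_{j,l}$ expresses the monomial $x^j$ in the Bernstein basis rather than the reverse, but this is forced by the relation $\mu = A^{-1} b$.
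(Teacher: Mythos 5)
Your core computation coincides with the paper's own proof: decompose $\hat\mu_j-\mu_j=[A^{-1}Z]_j$, use independence of the Laplace coordinates to get $L(\hat\mu_j,\mu_j,D)=\frac{2}{\varepsilon^2}\sum_{l=0}^{k}(A^{-1})_{j,l}^2$, and identify $(A^{-1})_{j,l}=\binom{l}{j}/\binom{k}{j}$ for $j\le l\le k$. Where the paper simply cites a reference for the inverse B\'ezier matrix, you derive it from $x^j=x^j\bigl(x+(1-x)\bigr)^{k-j}=\sum_{l=j}^{k}\frac{\binom{l}{j}}{\binom{k}{j}}B_l(x)$, which is correct and makes that step self-contained. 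Note that what this yields is $\frac{2}{\varepsilon^2\binom{k}{j}^2}\sum_{l=j}^{k}\binom{l}{j}^2$ --- exactly what the paper's proof also obtains --- so the single factor $\binom{k}{j}$ in the theorem's displayed formula is a typo in the statement, not a defect of your argument. Your hockey-stick plan gives $\sum_{l=j}^{k}\bigl(\binom{l}{j}/\binom{k}{j}\bigr)^2\le\binom{k+1}{j+1}/\binom{k}{j}=\frac{k+1}{j+1}$, i.e.\ the claimed order $k/\varepsilon^2$ (at $j=0$ the exact value is $\frac{2(k+1)}{\varepsilon^2}$, so the constant $2k$ in the statement is slightly off there; the paper does not prove that inequality at all, so this is again an issue with the statement rather than with you). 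The specialization $L(\hat\mu_k,\mu_k,D)=2/\varepsilon^2$ follows as you say.

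The genuine gap is the final clause of the theorem: you never address the claim that $2/\varepsilon^2$ \emph{matches the lower bound} for $k$-th unnormalized moment estimation in the high-privacy regime, and the paper treats this as part of the proof. Its argument is a reduction: given any $\varepsilon$-DP estimator $\tilde\mu_k$, define $\hat s(D)=\tilde\mu_k(\tilde D)$ where $\tilde D$ replaces each $x_i\in[0,1]$ by $x_i^{1/k}$, so that $\mu_k(\tilde D)=\mu_1(D)$ and $\hat s$ is an $\varepsilon$-DP sum estimator (neighboring datasets map to neighboring datasets); then the sum-estimation lower bound of \citet{kulesza2024mean} gives $\sup_D \mathbb{E}\bigl[(\tilde\mu_k(D)-\mu_k(D))^2\bigr]\ge\sigma(\varepsilon)^2(1-o(1))$, which tends to $2/\varepsilon^2$ as $\varepsilon\to 0$. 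Without this reduction (or some substitute lower-bound argument), the optimality assertion remains unproved in your write-up; everything else you propose goes through.
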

See Appendix~\ref{app:proof-bezier-utility} for the proof.
For a naive mechanism which applies the Laplace mechanism to $\mu$, the error is $\frac{2(k+1)^2}{\varepsilon^2}$ since the $l_1$-sensitivity of $\mu$ is $k + 1$.
Thus, the B\'{e}zier mechanism achieves a better utility-privacy trade-off.
In addition, the B\'{e}zier mechanism achieves optimal error for the highest-order moment $\mu_k$.
Thus, we can estimate all moments up to order $k$ without paying additional privacy budget.

\subsection{Extension to Multivariate Cases}
For cases with two or more variables in a record, the extension is natural using tensor products of Bernstein basis.
For $d$ variables, the multivariate Bernstein basis is defined as:
\begin{align*}
    B_{\alpha}^{k}(z_1, \ldots, z_d) = \prod_{j=1}^{d} B_{\alpha_j}^{k}(z_j)
\end{align*}
where $z \in \R^d$ and $\alpha = (\alpha_1, \ldots, \alpha_d)$ is a multi-index with $\alpha_j \in \{0, 1, \ldots, k\}$.
The corresponding B\'{e}zier matrix becomes $A_d = A \otimes A \otimes \cdots \otimes A$ ($d$ tensor products).
In a similar way as in the univariate case, let $\mu(D) \in \R^{(k+1)^d}$ be the vector of mixed moments of
the dataset $D = [x^{(1)}, \ldots, x^{(d)}]$ up to order $k$,
defined as
\begin{align*}
    \mu_{\alpha}(D) = \frac{1}{n} \sum_{j=1}^{n} \ab(x^{(1)}_j)^{\alpha_1} \cdots \ab(x^{(d)}_j)^{\alpha_d},
\end{align*}
and $b$ be the Bernstein representation of $\mu$ defined as
\begin{align*}
    b_{\alpha}(D) = \sum_{i=1}^n B_{\alpha}(x_i).
\end{align*}
As in the univariate case, we have the following lemma:
\begin{lemma}\label{lem:multi-dim-sensitivity}
    The $l_1$-sensitivity of $b(D)$ is $1$.
\end{lemma}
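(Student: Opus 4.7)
The plan is to mimic the univariate sensitivity proof, using the tensor product structure of the multivariate Bernstein basis in place of the one-dimensional partition of unity argument.

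First, I would fix neighboring datasets $D, D' \in \mathcal{D}^*$ in the add-remove model; without loss of generality $D' = D \cup \{x_{n+1}\}$ where $x_{n+1} = (x^{(1)}_{n+1}, \ldots, x^{(d)}_{n+1}) \in [0,1]^d$. By the definition of $b_\alpha$ and linearity in the summation index,
\begin{align*}
    \norm{b(D') - b(D)}_1 = \sum_{\alpha} \abs{B_\alpha(x_{n+1})},
\end{align*}
where $\alpha$ ranges over multi-indices with $\alpha_j \in \{0,1,\ldots,k\}$. Since each univariate Bernstein basis $B^k_{\alpha_j}$ is non-negative on $[0,1]$, the product $B_\alpha(x_{n+1}) = \prod_{j=1}^{d} B^k_{\alpha_j}(x^{(j)}_{n+1})$ is also non-negative, so the absolute values can be dropped.

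Next, I would factor the sum over multi-indices into a product of univariate sums via the tensor product structure:
\begin{align*}
    \sum_{\alpha} B_\alpha(x_{n+1}) = \prod_{j=1}^{d} \sum_{\alpha_j=0}^{k} B^k_{\alpha_j}\!\ab(x^{(j)}_{n+1}).
\end{align*}
Applying the univariate partition-of-unity identity $\sum_{\alpha_j=0}^{k} B^k_{\alpha_j}(z) = 1$ to each factor yields $\norm{b(D') - b(D)}_1 = 1$, completing the proof.

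There is no real obstacle here: the argument is a routine extension of the one-dimensional case, and the only point worth double-checking is that the sum-over-multi-indices genuinely factors as a product, which is immediate from the definition $B_\alpha = \prod_j B^k_{\alpha_j}$ and the distributive law. Consequently the same Laplace-noise calibration at scale $1/\varepsilon$ on each coordinate of $b(D)$ yields an $\varepsilon$-DP release, with $\tilde{\mu}(D) = A_d^{-1}\tilde{b}(D)$ recovered via the invertibility of $A_d = A^{\otimes d}$.
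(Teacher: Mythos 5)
Your proof is correct and follows essentially the same route as the paper's: drop the absolute values by non-negativity of the Bernstein basis, factor the sum over multi-indices into a product of univariate sums, and apply the partition-of-unity identity to each factor. Nothing further is needed.
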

See Appendix~\ref{proof:multi-dim-sensitivity} for the proof.
Thus, we can privatize $b(D)$ using the Laplace mechanism with scale parameter $1/\varepsilon$.
Since $A_d$ is invertible, we can recover all mixed moments up to order $k$ from Bernstein representation.

\subsection{Covariance Estimation}
To compute the covariance, it suffices to estimate $n, \sum_{i=1}^n x_i, \sum_{i=1}^n y_i, \sum_{i=1}^n x_iy_i$.
This can be done using the two-dimensional B\'{e}zier mechanism with degree $k=1$.
That is, we compute the noisy Bernstein representation
\begin{align*}
    \tilde b_{0, 0} & = \sum_{i=1}^{n} (1-x_i)(1-y_i)+\Lap(1/\varepsilon),    \\
    \tilde b_{0, 1} & = \sum_{i=1}^{n} (1-x_i)y_i(1-x_i)+\Lap(1/\varepsilon), \\
    \tilde b_{1, 0} & = \sum_{i=1}^{n} x_i(1-y_i)+\Lap(1/\varepsilon),        \\
    \tilde b_{1, 1} & = \sum_{i=1}^{n} x_i y_i+\Lap(1/\varepsilon)
\end{align*}
and then, reconstruct $\tilde s_{xy} = \tilde b_{1, 1}$, $\tilde s_x = \tilde b_{1, 0} + \tilde b_{1, 1}$, $\tilde s_y = \tilde b_{0, 1} + \tilde b_{1, 1}$, and $\tilde n = \sum_{i=0}^{1} \sum_{j=0}^{1} \tilde b_{i, j}$.
Finally, we compute the noisy covariance as $\hat c = \clip\ab(\frac{\tilde s_{xy}}{\tilde n} - \ab(\frac{\tilde s_x}{\tilde n})\ab(\frac{\tilde s_y}{\tilde n}), [-1/4, 1/4])$.
Note that $\hat c$ is $\varepsilon$-DP by the post-processing theorem.
We provide the entire procedure in Algorithm~\ref{alg:covariance-estimation-add-remove}.
\begin{algorithm}[h]
    \caption{Optimal Covariance Estimation in the Add-Remove Model}
    \label{alg:covariance-estimation-add-remove}
    \begin{algorithmic}[1]
        \REQUIRE Dataset $[x, y] \in \mathcal{D}^*$, privacy budget $\varepsilon$
        \STATE Let $b_{1, 1} = \sum_{i=1}^{\abs{x}} x_i y_i$
        \STATE Let $b_{1, 0} = \sum_{i=1}^{\abs{x}} x_i(1-y_i)$
        \STATE Let $b_{0, 1} = \sum_{i=1}^{\abs{y}} y_i(1-x_i)$
        \STATE Let $b_{0, 0} = \sum_{i=1}^{\abs{x}} (1-x_i)(1-y_i)$
        \STATE Add independent Laplace noise $Z_{i, j}\sim \Lap(1/\varepsilon)$ to obtain $\tilde b_{i, j} = b_{i, j} + Z_{i, j}$.
        \STATE Calculate $\tilde n = \sum_{i=0}^{1} \sum_{j=0}^{1} \tilde b_{i, j}$, $\tilde s_x = \tilde b_{1, 1} + \tilde b_{1, 0}$, $\tilde s_y = \tilde b_{0, 1} + \tilde b_{0, 0}$, and $\tilde s_{xy} = \tilde b_{1, 1}$
        \STATE Output $\hat c = \clip\ab(\frac{\tilde s_{xy}}{\tilde n} - \ab(\frac{\tilde s_x}{\tilde n})\ab(\frac{\tilde s_y}{\tilde n}), [-1/4, 1/4])$
    \end{algorithmic}
\end{algorithm}

The utility of Algorithm~\ref{alg:covariance-estimation-add-remove} is evaluated as follows:
\begin{theorem}[Utility Analysis]\label{thm:covariance-estimation-add-remove-utility}
    For privacy budget $\varepsilon$, Algorithm~\ref{alg:covariance-estimation-add-remove} achieves
    \begin{align*}
        R(\hat c, \cov, n_0) & \leq \frac{2}{\varepsilon^2}(1 + o(1)).
    \end{align*}
\end{theorem}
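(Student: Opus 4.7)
The plan is to reformulate the estimator via a closed-form algebraic identity for $\cov$, Taylor-expand on a high-probability event, and then reduce the MSE bound to a purely algebraic inequality exploiting a Cauchy--Schwarz constraint on the covariance.

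First I would verify the identity
\begin{equation*}
n^2 \cov(D) \;=\; b_{1,1}b_{0,0} - b_{1,0}b_{0,1},
\end{equation*}
which follows by substituting $b_{1,1}=n\bar{xy}$, $b_{1,0}=n(\bar x-\bar{xy})$, $b_{0,1}=n(\bar y-\bar{xy})$, $b_{0,0}=n(1-\bar x-\bar y+\bar{xy})$ into the determinant and cancelling. Writing $f(\tilde b) := (\tilde b_{1,1}\tilde b_{0,0} - \tilde b_{1,0}\tilde b_{0,1})/\tilde n^2$, the unclipped estimator equals $f(b+Z)$ with $Z_{\alpha\beta}$ i.i.d.\ $\Lap(1/\varepsilon)$. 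Since $\cov(D)\in[-1/4,1/4]$ on $[0,1]^2$ data, clipping into this interval can only reduce $|\hat c - \cov|$, so it suffices to bound $\Expec{(f(b+Z)-f(b))^2}$. Splitting on the event $G = \{\|Z\|_\infty \leq n/10\}$, its complement has probability $O(e^{-n\varepsilon/10})$ and contributes $o(1/n^2)$ after clipping.

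On $G$ I Taylor-expand $f(b+Z) - f(b) = \nabla f(b)\cdot Z + R$ with $|R| = O(\|Z\|^2/n^2)$ since the Hessian entries of $f$ are $O(1/n^2)$ when $\tilde n \geq 9n/10$. A direct computation of the partial derivatives at the true values yields
\begin{equation*}
n\cdot \frac{\partial f}{\partial b_{\alpha\beta}}(b) \;=\; (-1)^{\alpha+\beta}\, q_{(1-\alpha)(1-\beta)} - \cov,
\end{equation*}
where $q_{\alpha\beta} := \bar x^{\alpha}(1-\bar x)^{1-\alpha}\bar y^{\beta}(1-\bar y)^{1-\beta}$. Independence of the $Z_{\alpha\beta}$ and $\mathrm{Var}(Z_{\alpha\beta})=2/\varepsilon^2$ then give
\begin{equation*}
\mathrm{Var}(\nabla f(b)\cdot Z) \;=\; \frac{2}{\varepsilon^2 n^2}\,S(D), \qquad S(D) := \sum_{\alpha,\beta\in\{0,1\}}\bigl((-1)^{\alpha+\beta}q_{(1-\alpha)(1-\beta)}-\cov\bigr)^2.
\end{equation*}
Standard fourth-moment bounds on $\Lap(1/\varepsilon)$ show $\Expec{R^2}$, $|\mathrm{Cov}(\nabla f(b)\cdot Z, R)|$, and the squared bias $(\Expec{R})^2$ are each $o(1/n^2)$ uniformly in $D$, so the MSE on $G$ equals $(2/\varepsilon^2 n^2)\,S(D) + o(1/n^2)$.

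The main obstacle is the algebraic inequality $S(D) \leq 1$. Using $\sum q_{\alpha\beta}=1$ and $q_{0,0}+q_{1,1}-q_{0,1}-q_{1,0} = (1-2\bar x)(1-2\bar y)$ I rewrite
\begin{equation*}
S(D) \;=\; \tau_x\tau_y + 4\cov^2 - 2\cov(1-2\bar x)(1-2\bar y)
\end{equation*}
with $\tau_x := 1 - 2\bar x(1-\bar x)$. The Cauchy--Schwarz bound $\cov^2 \leq \mathrm{Var}(x)\mathrm{Var}(y)$ together with $\mathrm{Var}(x) \leq \bar x(1-\bar x)$ for $x\in[0,1]$ gives $|\cov|\leq \sigma_x\sigma_y$ where $\sigma_x^2 := \bar x(1-\bar x)$. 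Substituting $A=2\sigma_x^2$, $B=2\sigma_y^2\in[0,1/2]$ and bounding the cross term by its modulus reduces $S(D)\leq 1$ to $\sqrt{AB(1-2A)(1-2B)}\leq A+B-2AB$; squaring and setting $s=A+B$, $p=AB$ yields the elementary polynomial inequality $s^2 \geq p(1+2s)$, which holds because $p \leq s^2/4$ and $s \leq 1$. Combining the pieces and taking $\sup$ over $D\in\mathcal{D}_{\geq n_0}$ as $n_0\to\infty$ gives the claimed bound $R(\hat c, \cov, n_0) \leq (2/\varepsilon^2)(1+o(1))$.
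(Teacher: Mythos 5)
Your proposal is correct, and its skeleton (clip to $[-1/4,1/4]$, split on a high-probability event, first-order Taylor expansion, fourth-moment control of the remainder and of the off-event contribution) mirrors the paper's proxy-bound lemma; your leading term $\frac{2}{\varepsilon^2 n^2}S(D)$ with $S(D)=\tau_x\tau_y+4\cov^2-2\cov(1-2\bar x)(1-2\bar y)$, $\tau_x=1-2\bar x(1-\bar x)$, is exactly the paper's $C(r_x,r_y,c)$, and your gradient formula reproduces the paper's coefficients $x_i-c$ (the determinant identity $\tilde s_{xy}\tilde n-\tilde s_x\tilde s_y=\tilde b_{1,1}\tilde b_{0,0}-\tilde b_{1,0}\tilde b_{0,1}$ is a neat but inessential repackaging). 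Where you genuinely diverge is the key inequality $S(D)\le 1$: the paper constrains $c=r_{xy}-r_xr_y$ via the Fr\'echet-type bounds $\max(0,r_x+r_y-1)\le r_{xy}\le\min(r_x,r_y)$, uses convexity in $c$, and checks endpoints through several subcases, whereas you use only Cauchy--Schwarz, $\cov^2\le \mathrm{Var}(x)\,\mathrm{Var}(y)\le\bar x(1-\bar x)\,\bar y(1-\bar y)$, monotonicity in $|\cov|$, and reduce to $\sqrt{AB(1-2A)(1-2B)}\le A+B-2AB$ with $A=2\bar x(1-\bar x)$, $B=2\bar y(1-\bar y)$; squaring is legitimate since $A+B-2AB\ge 0$ on $[0,1/2]^2$, and the resulting inequality $p(1+2s)\le s^2$ indeed follows from $p\le s^2/4$ and $s\le 1$, so your argument is shorter and more symmetric than the paper's case analysis, at the price of using a slightly weaker constraint on $\cov$ (which still suffices here). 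Two cosmetic points: on your event $\|Z\|_\infty\le n/10$ you only get $\tilde n\ge 3n/5$, not $9n/10$ (harmless for the $O(1/n^2)$ Hessian bound), and the paper's threshold $n^{1/4}$ makes the remainder $O(n^{-3/2})$ pointwise on the event, which simplifies the moment bookkeeping you do by hand.
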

See Appendix~\ref{app:proof-covariance-estimation-add-remove-utility} for the proof.
While the B\'{e}zier mechanism can efficiently estimate the moments of the dataset, it is far from trivial to evaluate the error in the final estimator.
Surprisingly, this upper bound matches the lower bound in Theorem~\ref{thm:lower-bound} for small $\varepsilon$.
Therefore, Algorithm~\ref{alg:covariance-estimation-add-remove} is minimax optimal for covariance estimation in the high-privacy regime.
This is because the noise contained in each individual moment estimator is correlated with one another,
and many of them eventually cancel out in the final estimator.

\subsection{Variance Estimation}
The empirical variance can be computed from the moments $n, \sum_{i=1}^n x_i, \sum_{i=1}^n x_i^2$ up to order 2.
This can be done using the (univariate) B\'{e}zier mechanism with degree $k=2$.
Then,
\begin{align*}
    \hat v_b(D) & = \clip\ab(\frac{\tilde s_{x^2}}{\tilde n} - \ab(\frac{\tilde s_{x}}{\tilde n})^2, [0, 1/4]),
\end{align*}
where $\tilde s_{x^2} = \tilde b_2$, $\tilde s_x = \tilde b_1 / 2 + \tilde b_2$, and $\tilde n = \sum_{i=0}^{2} \tilde b_i$.
We provide the entire procedure in Algorithm~\ref{alg:variance-estimation-add-remove}.

\begin{algorithm}[h]
    \caption{Optimal Variance Estimation in the Add-Remove Model}
    \label{alg:variance-estimation-add-remove}
    \begin{algorithmic}[1]
        \REQUIRE Dataset $[x] \in \mathcal{D}^*$, privacy budget $\varepsilon$
        \STATE Let $b_0 = \sum_{i=1}^{\abs{x}} (1 - x_i)^2$
        \STATE Let $b_1 = \sum_{i=1}^{\abs{x}} 2x_i (1 - x_i)$
        \STATE Let $b_2 = \sum_{i=1}^{\abs{x}} x_i^2$
        \STATE Add independent Laplace noise $Z_i\sim \Lap(1/\varepsilon)$ to obtain $\tilde b_i = b_i + Z_i$ for $i = 0, 1, 2$
        \STATE Compute $\tilde n = \sum_{i=0}^{2} \tilde b_i$, $\tilde s_x = \tilde b_1 / 2 + \tilde b_2$, and $\tilde s_{x^2} = \tilde b_2$
        \STATE Output $\clip\ab(\frac{\tilde s_{x^2}}{\tilde n} - \ab(\frac{\tilde s_x}{\tilde n})^2, [0, 1/4])$
    \end{algorithmic}
\end{algorithm}

The utility of Algorithm~\ref{alg:variance-estimation-add-remove} is evaluated as follows:
\begin{theorem}[Utility analysis]\label{thm:variance-estimation-add-remove-utility}
    For privacy budget $\varepsilon$, Algorithm~\ref{alg:variance-estimation-add-remove} achieves
    \begin{align*}
        % R(\hat v_2, n_0) & \leq \frac{2}{\varepsilon^2}(v^2 + (1 - v)^2 + o(1))               \\
        %                  & \leq \frac{2}{\varepsilon^2}(1 + o(1)),                            \\
        R(\hat v_b, n_0) & \leq \frac{2}{\varepsilon^2}(1 + o(1)).
    \end{align*}
\end{theorem}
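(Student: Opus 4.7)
The plan is to analyze the unclipped estimator $\hat v := \tilde s_{x^2}/\tilde n - (\tilde s_x/\tilde n)^2$ via a first-order Taylor expansion around the true moments, control the nonlinear remainder on a high-probability event, and finally maximize the leading-order constant over admissible datasets. Write $\mu := s_x/n$, $q := s_{x^2}/n$, and $v := q - \mu^2$. Since the Bernstein basis forms a partition of unity, $b_0 + b_1 + b_2 = n$ holds deterministically, so the reconstructed moments satisfy
\begin{align*}
\tilde n - n = Z_0 + Z_1 + Z_2, \quad \tilde s_x - s_x = Z_1/2 + Z_2, \quad \tilde s_{x^2} - s_{x^2} = Z_2,
\end{align*}
with i.i.d.\ $Z_i \sim \Lap(1/\varepsilon)$.

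Next, Taylor-expanding $g(a,b,c) := c/a - (b/a)^2$ around $(n, s_x, s_{x^2})$ gives
\begin{align*}
\hat v - v = \frac{1}{n}\bigl(a_0 Z_0 + a_1 Z_1 + a_2 Z_2\bigr) + R_n,
\end{align*}
with $a_0 = \mu^2 - v$, $a_1 = \mu^2 - \mu - v$, $a_2 = (1-\mu)^2 - v$, and $R_n$ collecting the quadratic and higher-order terms. I would condition on the event $E := \{|Z_0| + |Z_1| + |Z_2| \leq n/2\}$; on $E$, $\tilde n \geq n/2$ and the Hessian of $g$ evaluated at any point between the true and noisy moments is $O(1/n^2)$, so by the finite fourth moments of the Laplace distribution $\mathbb{E}[R_n^2 \mathbf{1}_E] = O(1/n^4)$. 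Standard Laplace tail bounds give $\Pr[E^c] \leq C e^{-c\varepsilon n}$, and because $v \in [0, 1/4]$ and clipping to $[0, 1/4]$ is a contraction toward $v$, the contribution of $E^c$ to $n^2 \mathbb{E}[(\hat v_b - v)^2]$ is at most $n^2 \cdot (1/4)^2 \cdot \Pr[E^c] = o(1)$. A Cauchy--Schwarz bound takes care of the cross term between the linear part and $R_n$. Combining with independence of the $Z_i$ and $\mathrm{Var}(Z_i) = 2/\varepsilon^2$,
\begin{align*}
n^2 \mathbb{E}[(\hat v_b - v)^2] = \frac{2}{\varepsilon^2}(a_0^2 + a_1^2 + a_2^2) + o(1).
\end{align*}

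The remaining step is to show $a_0^2 + a_1^2 + a_2^2 \leq 1$ uniformly over the admissible region $\mu \in [0, 1]$ and $0 \leq v \leq \mu(1-\mu)$. As a function of $v$ this is a parabola with positive leading coefficient $3$, so its supremum on the interval is attained at an endpoint. Writing $r := \mu(1-\mu) \in [0, 1/4]$, direct computation yields $a_0^2 + a_1^2 + a_2^2 = 1 - 4r + 3r^2$ at $v = 0$ and $1 - 6r + 12r^2$ at $v = r$; both are decreasing on $[0, 1/4]$ and attain the value $1$ at $r = 0$. Hence the supremum equals $1$, which gives $R(\hat v_b, \vvar, n_0) \leq \frac{2}{\varepsilon^2}(1 + o(1))$ and matches the lower bound of Theorem~\ref{thm:lower-bound}. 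The hard part will be this final constrained optimization: the bound is tight only in the degenerate limit $\mu \to \{0, 1\}$, so any slack in the choice of Bernstein basis would be irrecoverable, and the argument relies crucially on the physical constraint $v \leq \mu(1-\mu)$ inherited from $x_i \in [0,1]$ to rule out configurations that would otherwise inflate $a_0^2 + a_1^2 + a_2^2$ past $1$.
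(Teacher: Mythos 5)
Your proposal is correct, and its overall skeleton matches the paper's: linearize the ratio estimator around the true moments, use the identity $b_0+b_1+b_2=n$ to get exactly the noise decomposition $Z_n=Z_0+Z_1+Z_2$, $Z_x=Z_1/2+Z_2$, $Z_{xx}=Z_2$, control the Taylor remainder and the clipping on a high-probability event (the paper packages this as Lemma~\ref{lem:proxy-bound} with the event $\{\norm{Z}_\infty\le n^{1/4}\}$, you use an $\ell_1$-ball of radius $n/2$ plus the contraction property of clipping --- both work), and your linear coefficients $a_0=\mu^2-v$, $a_1=\mu^2-\mu-v$, $a_2=(1-\mu)^2-v$ coincide with the paper's, so $a_0^2+a_1^2+a_2^2$ is exactly the paper's $C_b(r,v)$. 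Where you genuinely diverge is the final constant bound: the paper does not optimize $C_b$ directly, but instead shows $C_c(r,r,v)-C_b(r,v)=(v-(r^2-r))^2\ge 0$ and then inherits $C_c\le 1$ from the covariance analysis (Lemma~\ref{lem:first-order-variance}), which is a fairly long case-by-case endpoint argument over the feasible region of $(r_x,r_y,c)$. You instead exploit convexity in $v$ over the feasible interval $[0,\mu(1-\mu)]$ and reduce both endpoint values to the one-variable quantities $1-4\rho+3\rho^2$ and $1-6\rho+12\rho^2$ in $\rho=\mu(1-\mu)\in[0,1/4]$, each maximized at $\rho=0$; I checked these against the paper's expansion of $C_b$ and they agree. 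Your route is more self-contained and elementary for the variance theorem (it does not lean on the covariance proof), at the cost of not exposing the inequality $C_b\le C_c$, which the paper reuses later in Theorem~\ref{thm:comparison} to compare the three optimal variance mechanisms; the paper's detour buys that instance-wise comparison essentially for free.
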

See Appendix~\ref{app:proof-variance-estimation-add-remove-utility} for the proof.
Since this upper bound matches the lower bound in Theorem~\ref{thm:lower-bound} for small $\varepsilon$,
Algorithm~\ref{alg:variance-estimation-add-remove} is minimax optimal for variance estimation in the high privacy regime.

\subsection{Comparison of Optimal Mechanisms for Variance Estimation}\label{sec:comparison-variance}
In the previous section, we proposed the optimal mechanism for variance estimation.
In this section, we provide two alternative mechanisms which also achieve the lower-bound in Theorem~\ref{thm:lower-bound}.
While these mechanisms are optimal in the worst-case sense, we further derive instance-dependent error bounds that explicitly depend on the dataset’s mean and variance.
This analysis demonstrates that Algorithm~\ref{alg:variance-estimation-add-remove} always provides better utility for individual datasets.

The first alternative mechanism exploits the close relationship between variance and covariance:
variance estimation is a special case of covariance estimation where we set $x = y$.
Accordingly, we can apply Algorithm~\ref{alg:covariance-estimation-add-remove} to variance estimation.
That is, we can use $\hat v_{c}([x]) = \hat c([x, x])$ as an estimator for variance.
The utility bound for $\hat v_c$ can be obtained from Theorem~\ref{thm:covariance-estimation-add-remove-utility}.
\begin{corollary}
    For privacy budget $\varepsilon$, $\hat v_c$ achieves
    \begin{align*}
        R(\hat v_c, \vvar, n_0) & \leq \frac{2}{\varepsilon^2}(1 + o(1)).
    \end{align*}
\end{corollary}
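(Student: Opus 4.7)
The plan is to reduce the corollary directly to Theorem~\ref{thm:covariance-estimation-add-remove-utility} by exploiting the algebraic identity $\vvar([x]) = \cov([x,x])$ together with the defining equality $\hat v_c([x]) = \hat c([x,x])$. Concretely, for any dataset $[x] \in \mathcal{D}^*$, I would first observe that the true value being estimated satisfies $\vvar([x]) = \cov([x,x])$, which is immediate from the definitions of $\vvar$ and $\cov$ given in Section~2. Then, since $\hat v_c$ is \emph{defined} as $\hat c$ applied to the dataset $[x,x]$, the squared error at $[x]$ equals the squared error of $\hat c$ at the paired dataset $[x,x]$:
\begin{align*}
L(\hat v_c, \vvar, [x]) &= \mathbb{E}\bigl[(\hat v_c([x]) - \vvar([x]))^2\bigr] \\
&= \mathbb{E}\bigl[(\hat c([x,x]) - \cov([x,x]))^2\bigr] \\
&= L(\hat c, \cov, [x,x]).
\end{align*}

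Next, I would multiply by $n^2 = |[x]|^2 = |[x,x]|^2$ (the two datasets have the same number of records, only the number of columns changes) to pass from $L$ to the normalized error $R$. Taking the supremum over $[x] \in \mathcal{D}_{\geq n_0}$ yields
\begin{align*}
R(\hat v_c, \vvar, n_0) &= \sup_{[x] \in \mathcal{D}_{\geq n_0}} |[x]|^2 L(\hat c, \cov, [x,x]) \\
&\leq \sup_{D \in \mathcal{D}_{\geq n_0}} |D|^2 L(\hat c, \cov, D) \\
&= R(\hat c, \cov, n_0),
\end{align*}
where the inequality uses that $[x,x]$ ranges over a subset of two-variable datasets of size $\geq n_0$. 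Applying Theorem~\ref{thm:covariance-estimation-add-remove-utility} to the right-hand side immediately gives $R(\hat v_c, \vvar, n_0) \leq \frac{2}{\varepsilon^2}(1+o(1))$.

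There is essentially no obstacle here, since the argument is a one-line reduction; the only subtlety worth checking is that the privacy guarantee and utility bound of Algorithm~\ref{alg:covariance-estimation-add-remove} are valid for \emph{arbitrary} input datasets in $\mathcal{D}^*$, including degenerate ones of the form $[x,x]$ where the two columns coincide. Since Theorem~\ref{thm:covariance-estimation-add-remove-utility} is stated as a worst-case bound over all datasets, this is automatic, and no additional work is required.
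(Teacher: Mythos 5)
Your reduction is exactly the argument the paper intends: since $\hat v_c([x]) = \hat c([x,x])$ and $\vvar([x]) = \cov([x,x])$, the instance-wise error transfers verbatim and the worst-case bound of Theorem~\ref{thm:covariance-estimation-add-remove-utility} applies, the only (correctly handled) point being that $[x,x]$ is just a particular two-column dataset covered by the supremum in that theorem. This matches the paper's treatment, which derives the corollary directly from the covariance utility theorem without further work.
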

Since this upper bound matches the lower bound of variance estimation in Theorem~\ref{thm:lower-bound},
we can conclude that Algorithm~\ref{alg:covariance-estimation-add-remove} is also minimax optimal for variance estimation in the add-remove model.

To construct another optimal mechanism for variance estimation,
we revisit the improved mechanism with unnormalized covariance.
Let $u = \ucov(x)$, and $b_0 = n - u, b_1 = u$.
Then, we have the following lemma:
\begin{lemma}\label{lem:transformed-sensitivity}
    The $l_1$-sensitivity of $b = [n - u, u]^\top$ is $1$
\end{lemma}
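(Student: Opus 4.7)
The plan is to analyze how the pair $(n-u,u)$ changes when one record is added, since the add-remove case with removal is symmetric. Without loss of generality, let $D = [x]$ with $|D|=n$, $D' = D \cup \{x_{n+1}\}$ with $x_{n+1} \in [0,1]$, and let $u = \uvar(D)$, $u' = \uvar(D')$. Then
\begin{align*}
\norm{b(D')-b(D)}_1 = \abs{1-(u'-u)} + \abs{u'-u},
\end{align*}
which simplifies to exactly $1$ provided we establish the two-sided bound $0 \le u'-u \le 1$. So the entire proof reduces to verifying this sandwich.

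For the upper bound $u'-u \le 1$, I would invoke Lemma~\ref{lem:add-remove-model-sensitivity}, which already states that the sensitivity of $\uvar$ under the add-remove relation is at most $1$; in particular $\abs{u'-u}\le 1$.

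The lower bound $u'-u \ge 0$ (monotonicity of $\uvar$ under adding a point) is the only non-routine step. I would prove it by a direct algebraic decomposition. Letting $\bar{x}=\frac{1}{n}\sum_{i=1}^n x_i$ and $\bar{x}'=\frac{1}{n+1}\sum_{i=1}^{n+1} x_i$, I would apply the standard identity $\sum_{i=1}^n (x_i-a)^2 = \sum_{i=1}^n (x_i-\bar{x})^2 + n(\bar{x}-a)^2$ with $a=\bar{x}'$ to the first $n$ points, obtaining
\begin{align*}
u' &= \sum_{i=1}^{n+1}(x_i-\bar{x}')^2 \\
   &= u + n(\bar{x}-\bar{x}')^2 + (x_{n+1}-\bar{x}')^2.
\end{align*}
Since both added terms are nonnegative, $u'-u\ge 0$.

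Combining the two bounds, $|1-(u'-u)| = 1-(u'-u)$ and the two absolute values telescope to $1$. The symmetric case (removal) follows by exchanging $D$ and $D'$. The main (mild) obstacle is the monotonicity step; once the identity above is in hand the rest is immediate.
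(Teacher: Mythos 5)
Your proof is correct and follows essentially the same route as the paper: write $\norm{b(D')-b(D)}_1=\abs{1-(u'-u)}+\abs{u'-u}$ and reduce everything to the sandwich $0\le u'-u\le 1$. The only (minor) difference is that you re-derive the monotonicity $u'\ge u$ via the shift-of-center identity, whereas the paper reads off both bounds from the explicit formula $\uvar(D')-\uvar(D)=\frac{n}{n+1}\ab(\bar x - x_{n+1})^2$ already computed in the proof of Lemma~\ref{lem:add-remove-model-sensitivity}; your extra step is a reasonable precaution since that lemma's statement alone only gives $\abs{u'-u}\le 1$.
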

See Appendix~\ref{app:proof-transformed-sensitivity} for the proof.
Therefore, $\tilde b := b + \Lap(1/\varepsilon)$ is $\varepsilon$-DP
and noisy variance can be computed as $\hat v_u(D) := \clip\ab(\frac{\tilde u}{\tilde n}, [0, 1/4])$,
where $\tilde n = \tilde b_0 + \tilde b_1$ and $\tilde u = \tilde b_2$.
See Algorithm~\ref{alg:variance-estimation-transformed} for the entire procedure.
As shown in the following proposition, $\hat v_u$ also achieves the lower bound in Theorem~\ref{thm:lower-bound}.
\begin{proposition}\label{prop:variance-estimation-transformed-utility}
    For privacy budget $\varepsilon$, $\hat v_u$ achieves
    \begin{align*}
        R(\hat v_u, \vvar, n_0) & \leq \frac{2}{\varepsilon^2}(1 + o(1)).
    \end{align*}
\end{proposition}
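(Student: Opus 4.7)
The plan is to reduce the MSE analysis to that of the unclipped ratio $\tilde u/\tilde n$, linearize the reciprocal $1/\tilde n$ on a high-probability event, and then bound the resulting leading constant by maximizing a simple quadratic in $u$ over $[0, n/4]$.

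First, since $v = \vvar(D) = u/n \in [0, 1/4]$ whenever the data lie in $[0, 1]$, clipping to $[0, 1/4]$ can only decrease the distance to $v$, so $L(\hat v_u, \vvar, D) \leq \E[(\tilde u/\tilde n - u/n)^2]$. Let $Z_0, Z_1 \sim \Lap(1/\varepsilon)$ be independent and set $\tilde n = n + Z_0 + Z_1$, $\tilde u = u + Z_1$. A direct algebraic manipulation yields the identity
\begin{align*}
\frac{\tilde u}{\tilde n} - \frac{u}{n} = \frac{(n-u)Z_1 - u Z_0}{n \tilde n},
\end{align*}
which cleanly separates the two independent noise sources.

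Next, I would control the denominator by conditioning on the event $E = \{|Z_0 + Z_1| \leq n/\log n\}$. The Laplace tail bound gives $\Pr[E^c] = e^{-\Omega(\varepsilon n/\log n)}$, which is $o(1/n^2)$. On $E^c$, the clipped estimator satisfies $|\hat v_u - v| \leq 1/4$, so its contribution to the MSE is at most $\tfrac{1}{16}\Pr[E^c] = o(1/n^2)$. On $E$, we have $\tilde n = n\,(1 + O(1/\log n))$, whence $1/(n\tilde n)^2 = (1 + o(1))/n^4$. Taking expectations of the squared numerator and using independence (so the cross term $\E[Z_0 Z_1]$ vanishes) together with $\E[Z_i^2] = 2/\varepsilon^2$, I get
\begin{align*}
\E\!\left[\left(\tfrac{\tilde u}{\tilde n} - \tfrac{u}{n}\right)^2 \mathbf{1}_E\right] \leq \frac{2\bigl((n-u)^2 + u^2\bigr)}{\varepsilon^2 n^4}\,(1 + o(1)).
\end{align*}

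Combining the two events and multiplying by $n^2$ gives $R(\hat v_u, \vvar, D) \leq \frac{2((n-u)^2 + u^2)}{\varepsilon^2 n^2}(1+o(1))$. Since $u = n\,\vvar(D) \in [0, n/4]$, the quadratic $(n-u)^2 + u^2 = n^2 - 2nu + 2u^2$ has derivative $-2n + 4u < 0$ on $[0, n/4]$, so it is maximized at $u = 0$ with value $n^2$. The $o(1)$ depends only on $n \geq n_0$ (not on the dataset contents), so the bound is uniform over $\mathcal{D}_{\geq n_0}$, yielding $R(\hat v_u, \vvar, n_0) \leq \frac{2}{\varepsilon^2}(1 + o(1))$. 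The main obstacle is making the linearization of $1/\tilde n$ rigorous while preserving the sharp leading constant; this is handled by the split on $E$ vs.\ $E^c$ together with the boundedness of the clipped estimator, ensuring that the tail contribution is truly lower order.
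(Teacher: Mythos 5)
Your proposal is correct, and it reaches the same leading constant by essentially the same strategy as the paper: split on a high-probability event for the noisy denominator, bound the tail contribution by the boundedness of the clipped output, and compute the leading-order variance, which in your parametrization is $\tfrac{2}{\varepsilon^2}\bigl((1-v)^2+v^2\bigr)$ with $v=u/n\in[0,1/4]$ — exactly the paper's constant $C_u(r,v)=v^2+(1-v)^2\le 1$. The one genuine difference is in how the linearization is done: the paper invokes its general-purpose Lemma on clipped smooth functionals (a second-order Taylor expansion with a Hessian remainder, shared across all its utility proofs), whereas you exploit the exact algebraic identity $\tfrac{\tilde u}{\tilde n}-\tfrac{u}{n}=\tfrac{(n-u)Z_1-uZ_0}{n\tilde n}$, so no remainder term is needed and only the denominator must be controlled on the event $E$. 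Your route is more elementary and self-contained for this particular ratio estimator, at the cost of not generalizing to the covariance and Bézier estimators where the paper reuses the same lemma. Two minor points: your opening inequality $L(\hat v_u,\vvar,D)\le \mathbb{E}[(\tilde u/\tilde n-u/n)^2]$ is vacuous as stated, since $\tilde n$ has positive density at $0$ and the unrestricted second moment of the ratio is infinite — but this is harmless because your actual argument only uses the projection property of clipping on $E$ and boundedness on $E^c$; and your uniformity claim is right, since $\Pr[E^c]$, the factor $(1-1/\log n)^{-2}$, and the bound $(n-u)^2+u^2\le n^2$ depend only on $n\ge n_0$ and $\varepsilon$, not on the data.
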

See Appendix~\ref{app:proof-variance-estimation-transformed-utility} for the proof.
This mechanism can be seen as a generalization of transformed noise addition for mean estimation in~\citet{kulesza2024mean} to variance estimation.
Note that this approach cannot be applied to covariance estimation since the sensitivity space of $\ucov(D)$ is different from that of $\uvar(D)$,
and the sensitivity of $b = [n - \ucov([x, y]), \ucov([x, y])]$ is not bounded by $1$.

\begin{algorithm}[h]
    \caption{Optimal Variance Estimation in the Add-Remove Model}
    \label{alg:variance-estimation-transformed}
    \begin{algorithmic}[1]
        \REQUIRE Dataset $[x] \in \mathcal{D}^*$, privacy budget $\varepsilon$
        \STATE Let $b_0 = \sum_{i=1}^{\abs{x}} 1 - x_i^2 + \frac{1}{\abs{x}} (\sum_{i=1}^{\abs{x}} x_i)^2$
        \STATE Let $b_1 = \sum_{i=1}^{\abs{x}} x_i^2 - \frac{1}{\abs{x}} (\sum_{i=1}^{\abs{x}} x_i)^2$
        \STATE Add independent Laplace noise $Z_i\sim \Lap(1/\varepsilon)$ to obtain $\tilde b_i = b_i + Z_i$ for $i = 0, 1$
        \STATE Compute $\tilde n = \sum_{i=0}^{1} \tilde b_i$, $\tilde v_x = \tilde b_1$.
        \STATE Output $\clip\ab(\frac{\tilde v_x}{\tilde n}, [0, 1/4])$
    \end{algorithmic}
\end{algorithm}

While Algorithm~\ref{alg:covariance-estimation-add-remove},~\ref{alg:variance-estimation-add-remove}, and~\ref{alg:variance-estimation-transformed} achieve the same error rate in the worst case,
they may perform differently in practice due to their distinct mechanisms and the specific characteristics of the data.
% Previous work~\citep{kulesza2024mean} focus on the worst-case performance, and thus they may overlook important practical considerations.
Here, we compare the constants in the leading term of the instance-wise error $R(\cdot, \vvar, D)$
and prove that Algorithm~\ref{alg:variance-estimation-add-remove} is more efficient than the other two algorithms
for variance estimation.
\begin{theorem}\label{thm:comparison}
    For $r = \frac{1}{n} \sum_{i=1}^n x_i$ and $v = \frac{1}{n} \sum_{i=1}^n (x_i - r)^2$,
    let us define
    \begin{align*}
        C_b(r, v) & = 3v^2 - 2(3r^2 - 3r + 1)v           \\
                  & \quad + 3r^4 - 6r^3 + 7r^2 - 4r + 1, \\
        C_c(r, v) & = (1-2r+2r^2)^2 - 2v(1-2r)^2 + 4v^2, \\
        C_u(r, v) & = v^2 + (1-v)^2.
    \end{align*}
    Then, normalized errors $R(\cdot, f, D)$ for Algorithm~\ref{alg:covariance-estimation-add-remove},
    ~\ref{alg:variance-estimation-add-remove}, and~\ref{alg:variance-estimation-transformed}
    satisfies
    \begin{align*}
        R(\hat v_b, \vvar, n_0) & \leq \frac{2}{\varepsilon^2}(C_b(r, v) + o(1)), \\
        R(\hat v_c, \vvar, n_0) & \leq \frac{2}{\varepsilon^2}(C_c(r, v) + o(1)), \\
        R(\hat v_u, \vvar, n_0) & \leq \frac{2}{\varepsilon^2}(C_u(r, v) + o(1)).
    \end{align*}
    Furthermore, for any possible $r$ and $v$, we have
    \begin{align*}
        C_b(r, v) & \leq C_c(r, v) \leq C_u(r, v).
    \end{align*}
\end{theorem}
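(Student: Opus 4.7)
The plan is to linearize each of the three estimators in the Laplace noise around the true Bernstein coordinates, read off the instance-wise leading constant as a squared gradient norm, then compare the three constants algebraically.

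\textbf{Step 1 (reduction to a gradient norm).} For each $\hat v \in \{\hat v_b, \hat v_c, \hat v_u\}$ write $\tilde b = b(D) + Z$ with $Z_j \sim \Lap(1/\varepsilon)$ i.i.d. Away from the clipping boundary and the singular set $\{\tilde n = 0\}$, $\hat v$ is a smooth rational function of $\tilde b$, so a second-order Taylor expansion at $b$ gives
\[
\hat v(D) - \vvar(D) = \langle \nabla_{\tilde b} \hat v(b),\, Z\rangle + R_2(Z).
\]
On the event $\{|\tilde n - n| \leq n/2\}$, whose complement has probability $\exp(-\Omega(\varepsilon n))$ by the Laplace tail, clipping is inactive and $R_2 = O(\|Z\|^2/n^3)$, so $\Expec{R_2^2} = O(1/(\varepsilon^4 n^4))$. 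On the complementary event the clipped $\hat v$ is $O(1)$, and the tail probability forces its MSE contribution to be $o(1/(\varepsilon^2 n^2))$. Therefore
\[
n^2 L(\hat v, \vvar, D) = \tfrac{2 n^2}{\varepsilon^2}\,\|\nabla_{\tilde b}\hat v(b)\|_2^2 + o(1),
\]
and it suffices to compute the three squared gradient norms.

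\textbf{Step 2 (three gradients).} The true values are $b_2 = n(r^2+v)$, $b_1 = 2n(r(1-r)-v)$, $b_0 = n((1-r)^2+v)$. Applying the chain rule through $\tilde n = \sum_j \tilde b_j$, $\tilde s_x = \tilde b_1/2 + \tilde b_2$, $\tilde s_{x^2} = \tilde b_2$ for $\hat v_b$ yields
\[
\partial_{\tilde b_2}\hat v_b = \tfrac{(1-r)^2 - v}{n},\quad \partial_{\tilde b_1}\hat v_b = \tfrac{r^2-r-v}{n},\quad \partial_{\tilde b_0}\hat v_b = \tfrac{r^2-v}{n},
\]
and expanding the sum of squares recovers $C_b(r,v)$ exactly. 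For $\hat v_c$ applied to $[x,x]$, the analogous computation through $\tilde s_x, \tilde s_y, \tilde s_{xy}, \tilde n$ produces four partials in which two copies of $(r^2-r-v)/n$ appear, giving $C_c(r,v)$. For $\hat v_u$ the two partials are $(1-v)/n$ and $-v/n$, which directly gives $C_u(r,v) = v^2 + (1-v)^2$.

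\textbf{Step 3 ($C_b \leq C_c$).} Term-by-term subtraction yields
\[
C_c(r,v) - C_b(r,v) = (r^2 - r - v)^2 = \bigl(r(1-r) + v\bigr)^2 \geq 0,
\]
since $r \in [0,1]$ and $v \geq 0$.

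\textbf{Step 4 ($C_c \leq C_u$).} Set $t := r(1-r) \in [0,1/4]$. Using the identities $1-(1-2r+2r^2)^2 = 4t(1-t)$ and $1-(1-2r)^2 = 4t$, direct algebra reduces the difference to
\[
C_u(r,v) - C_c(r,v) = 4t(1-t) - 8tv - 2v^2.
\]
The key physical input is that $x_i \in [0,1]$ forces $v = \tfrac{1}{n}\sum x_i^2 - r^2 \leq r - r^2 = t$, so $v \in [0,t]$. The right-hand side is decreasing in $v$, so its minimum on $[0,t]$ is attained at $v = t$ and equals $2t(2-7t)$, which is non-negative since $t \leq 1/4 < 2/7$.

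The main obstacle is Step 1: the non-smoothness from clipping and from the $1/\tilde n$ divisor requires the good/bad event split and a careful argument that the quadratic Taylor remainder and the clipping bad-event both contribute only $o(1/(\varepsilon^2 n^2))$ to the MSE. Once this reduction is in hand, the rest is the routine expansions of Step~2 and the two short algebraic inequalities in Steps~3 and~4; the only non-obvious ingredient is noticing that the moment bound $v \leq r(1-r)$ (rather than just $v \leq 1/4$) is what closes the $C_c \leq C_u$ comparison.
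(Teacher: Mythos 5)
Your proposal is correct and follows essentially the same route as the paper: the instance-wise constants are obtained by the same linearization-plus-clipping/bad-event argument (the paper's Lemma~\ref{lem:proxy-bound} combined with the per-algorithm computations of $\Expec{\Delta(Z)^2}$), your identity $C_c - C_b = (v + r(1-r))^2$ is exactly the paper's comparison, and your $C_u - C_c$ step uses the same key constraint $v \leq r(1-r)$, only phrased via the substitution $t = r(1-r)$ and monotonicity in $v$ instead of the paper's convexity-and-endpoints check. (The stated remainder order $O(\|Z\|^2/n^3)$ should be $O(\|Z\|^2/n^2)$, but this does not affect the $o(1)$ conclusion.)
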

See Appendix~\ref{proof:comparison} for the proof.
Note that $r$ and $v$ are not independent and should satisfy certain constraints for the existence of the dataset.
Since $\frac{2}{\varepsilon^2}C_{\cdot}(r, v)$ is a first-order approximation of the error of each algorithm if we ignore the effect of clipping,
we can use it to analyze the performance of the algorithms.
We see that Algorithm~\ref{alg:variance-estimation-add-remove} consistently outperforms the other algorithms for datasets with any possible mean and variance.
We empirically confirm the superiority of Algorithm~\ref{alg:variance-estimation-add-remove} in Section~\ref{sec:experiments} with finite samples.

\section{Extension to General Statistics}
While our main focus is on the variance and covariance estimation, our proposed B\'{e}zier mechanism is versatile and can be used to estimate other statistics beyond variance and covariance.
In general, given a statistic $f(\mu(D))$ which depends on the (mixed) moments $\mu(D)$ up to order $k$,
we can use the B\'{e}zier mechanism with degree $k$ to estimate $\mu(D)$
and compute an $\varepsilon$-DP estimator $\hat f(D) = f(\text{B\'{e}zierMechanism}(D;\varepsilon, k))$
by the post-processing theorem.
This formulation includes higher-order centered moments, skewness, kurtosis, and correlation.
We provide a detailed procedure in Algorithm~\ref{alg:general-B\'{e}zier}.
\begin{algorithm}[h]
    \caption{B\'{e}zier Mechanism for General Statistics}
    \label{alg:general-B\'{e}zier}
    \begin{algorithmic}[1]
        \REQUIRE Dataset $D \in \mathcal{D}^*$, privacy budget $\varepsilon$, order $k$, range $[l, u]$.
        \STATE Let $b_\alpha(D) = \sum_{i=1}^n B_{\alpha_j}(x_{i, 1}) \cdot \dots \cdot \sum_{i=1}^n B_{\alpha_j}(x_{i, d})$ for $\alpha_i = 0, \dots, k$.
        \STATE Add independent Laplace noise $\tilde b_\alpha(D) = b_\alpha(D) + \Lap(1/\varepsilon)$.
        \STATE Compute $\tilde \mu(D) = (A_d)^{-1} \tilde b(D)$.
        \STATE Output $\hat f(D) := \clip\ab(f(\tilde \mu(D)), [l, u])$
    \end{algorithmic}
\end{algorithm}

\section{Numerical Experiments}\label{sec:experiments}
To see the empirical performance of our proposed mechanisms and compare it with naive approaches,
we conduct numerical experiments on synthetic datasets.
The results are averaged over 10,000 independent runs.
\begin{figure}
    \centering
    \begin{minipage}[t]{0.23\textwidth}
        \centering
        \includegraphics[width=\textwidth]{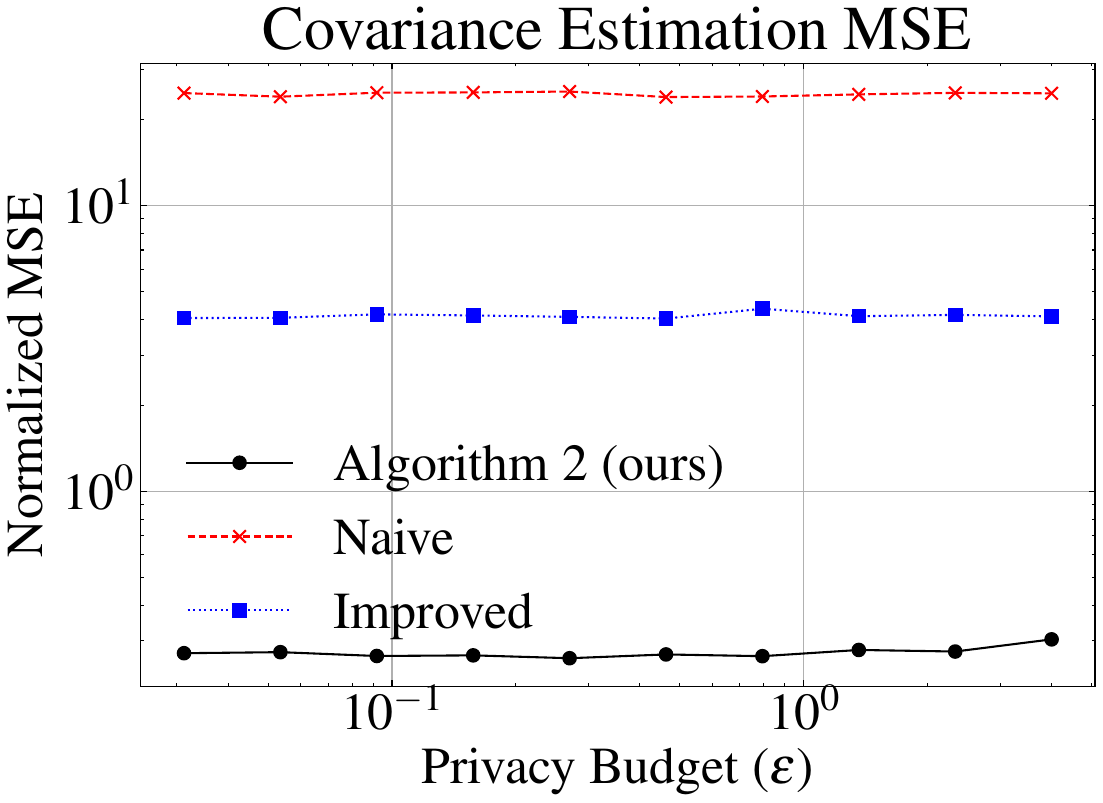}
    \end{minipage}
    \begin{minipage}[t]{0.23\textwidth}
        \centering
        \includegraphics[width=\textwidth]{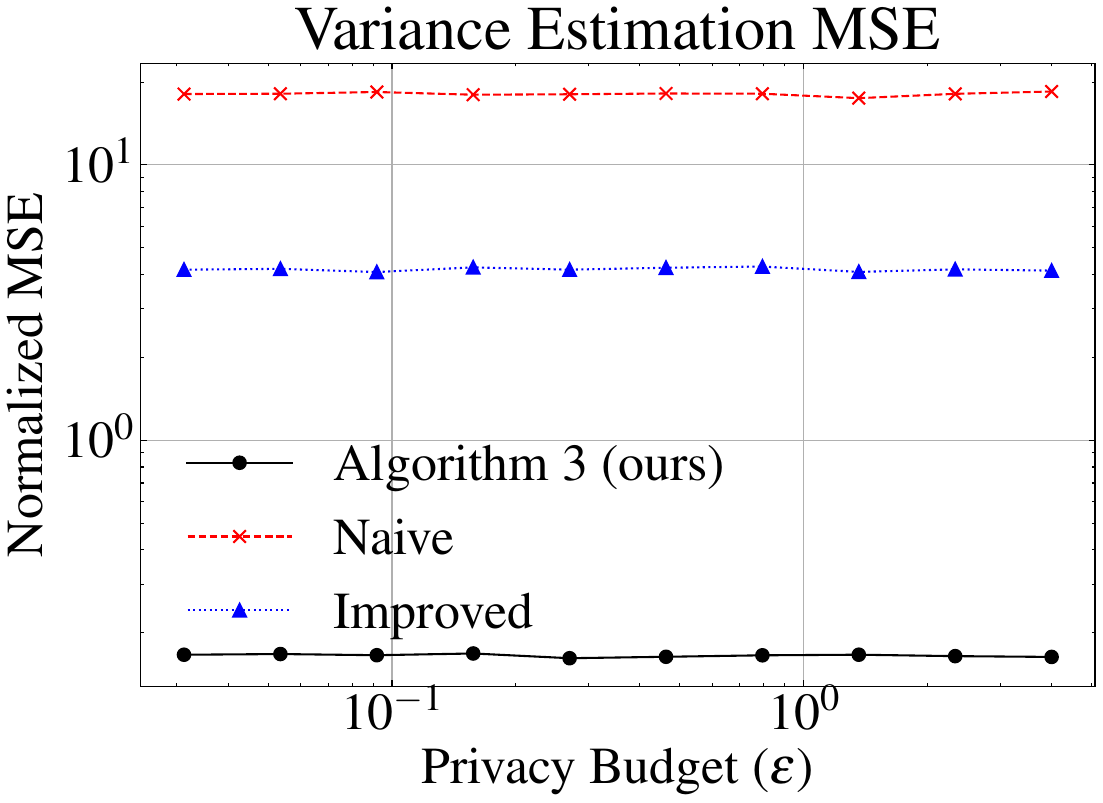}
    \end{minipage}
    \label{fig:comparison}
    \caption{Comparison of covariance and variance estimation. B\'{e}zier mechanism consistently outperforms baselines for different $\varepsilon$.}
\end{figure}
\begin{figure}[t]
    \centering
    \begin{minipage}[t]{0.23\textwidth}
        \centering
        \includegraphics[width=\textwidth]{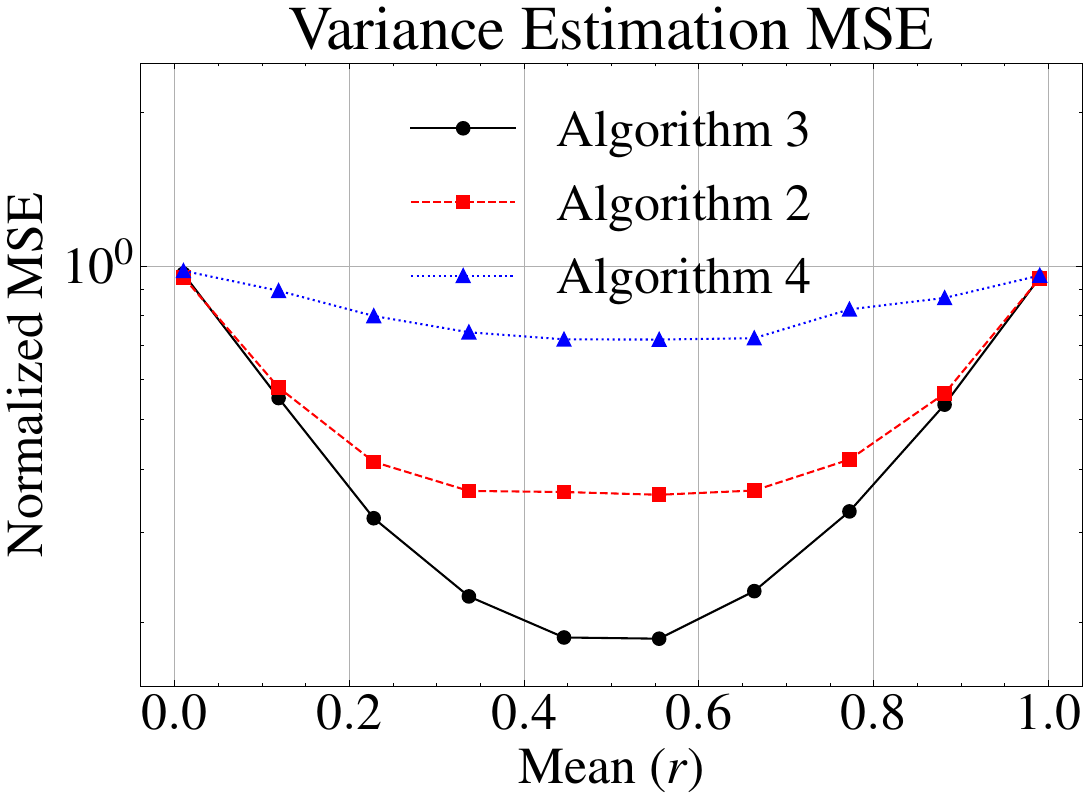}
    \end{minipage}
    \begin{minipage}[t]{0.23\textwidth}
        \centering
        \includegraphics[width=\textwidth]{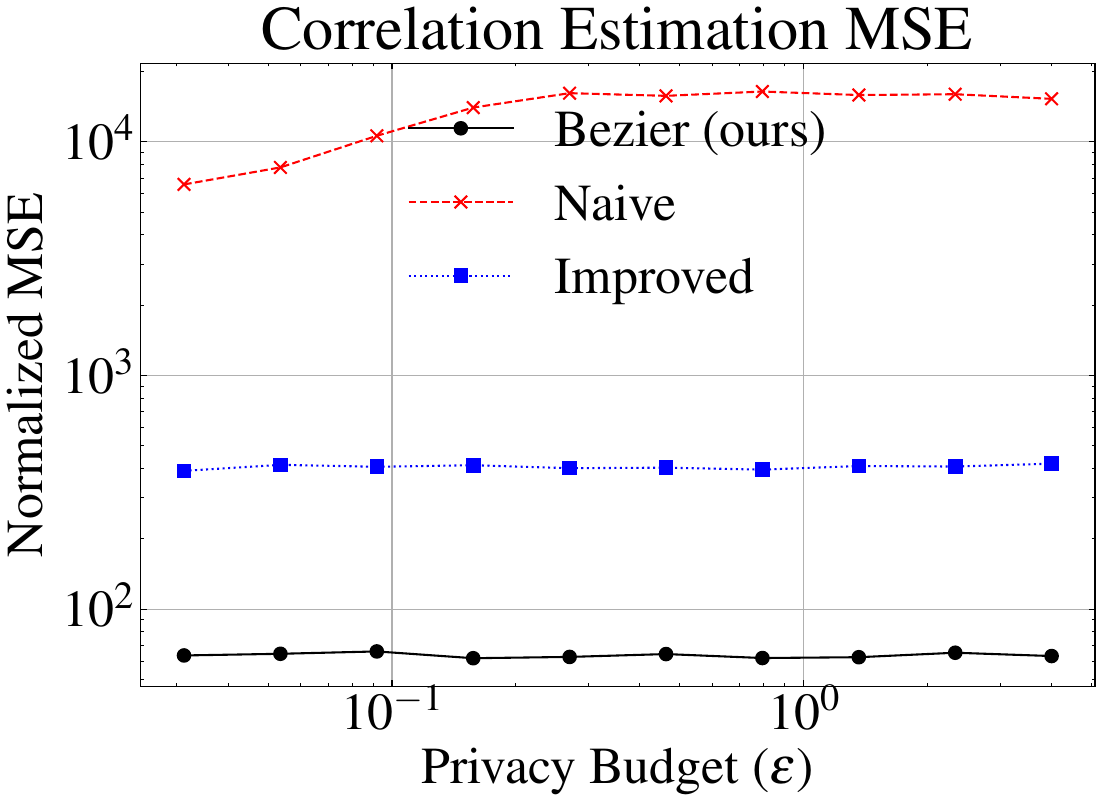}
    \end{minipage}
    \label{fig:comparison-variance-correlation}
    \caption{Comparison of optimal mechanisms for variance estimation (left) and correlation estimation (right). B\'{e}zier mechanism consistently outperforms baselines.}
\end{figure}
\paragraph{B\'{e}zier Mechanism Outperforms Baselines in Covariance and Variance Estimation}
We compare our proposed mechanism in Algorithm~\ref{alg:covariance-estimation-add-remove} and Algorithm~\ref{alg:variance-estimation-add-remove}
with baselines including 1) naive approach: applying the Laplace mechanism to $n$, $\sum_{i=1}^n x_i$, $\sum_{i=1}^n y_i$, and $\sum_{i=1}^n x_i y_i$ (or $n$, $\sum x_i$, $\sum x_i^2$) separately,
and 2) improved approach: applying Laplace mechanism to estimate $n$ and $\ucov(D)$ (or $\uvar(D)$) separately.
We generate a dataset of size $n = 10,000$ with $x_i, y_i$ sampled from uniform distribution on $[0, 1]$.
Fig.~\ref{fig:comparison} shows that our proposed mechanisms reduces the error to less than about 1/10 compared to baselines.

\paragraph{Comparison of Optimal Mechanisms for Variance Estimation}
We compare our proposed mechanisms for variance estimation in
Algorithm~\ref{alg:covariance-estimation-add-remove},
Algorithm~\ref{alg:variance-estimation-add-remove},
and Algorithm~\ref{alg:variance-estimation-transformed} with different mean $r$.
Note that we use Algorithm~\ref{alg:covariance-estimation-add-remove} as a mechanism for variance estimation, as described in Section~\ref{sec:comparison-variance}.
We generate a dataset of size $n = 10,000$ with $x_i$ sampled from beta distribution with parameters $(r / 2, (1 - r) / 2)$ to control the mean of datasets.
We fix the privacy budget $\varepsilon = 1$ and compute the error of the three mechanisms.
While both achieve the optimal error in the worst case, Fig.~\ref{fig:comparison-variance-correlation} (left) shows that Algorithm~\ref{alg:variance-estimation-add-remove}, based on B\'{e}zier mechanism, is more efficient than the other algorithms especially in the case $r \simeq 0.5$.

\paragraph{B\'{e}zier Mechanism is Effective Beyond Variance and Covariance}
As an example of moment-based statistics other than variance and covariance,
we consider the Pearson correlation coefficient defined as
\begin{align*}
    \corr(D) = \frac{\cov([x, y])}{\sqrt{\vvar([x]) \cdot \vvar([y])}}.
\end{align*}
This quantity can be computed with mixed moments up to the $k=2$ order.
Thus, we can use the two-dimensional B\'{e}zier mechanism with degree $k=2$ to estimate the correlation.
Figure~\ref{fig:comparison-variance-correlation} (right) shows the results of comparing with baselines including 1) naive approach: applying Laplace mechanism to privatize the moments independently,
and 2) improved approach: applying Algorithm~\ref{alg:covariance-estimation-add-remove} and~\ref{alg:variance-estimation-add-remove} to privatize $\cov([x, y])$ and $\vvar([x])$, $\vvar([y])$ separately.
We see that Algorithm~\ref{alg:general-B\'{e}zier} achieves a better utility-privacy trade-off with large margin.

\section{Conclusion}
In this paper, we study the problem of variance and covariance estimation with differential privacy in the add-remove model.
We propose efficient mechanisms for variance and covariance estimation based on a novel framework called the B\'{e}zier mechanism, and prove that our methods are minimax optimal in the high-privacy regime.
We further present two alternative optimal mechanisms and show that, beyond worst-case guarantees, the B\'{e}zier-based estimator achieves uniformly smaller leading constants for any data distribution.
Finally, we demonstrate that the B\'{e}zier mechanism is also effective for the estimation of other statistics, such as correlation.

\bibliography{main}

\appendix
\onecolumn

% \tableofcontents

\section{Auxiliary Results} \label{app:auxiliary-results}
\begin{lemma}
    For $x, y \in [0, 1]^n$, let $c := \frac{1}{n}\sum_{i=1}^n (x_i - \frac{1}{n}\sum_{i=1}^n x_i)(y_i - \frac{1}{n}\sum_{i=1}^n y_i)$
    and $v := \frac{1}{n}\sum_{i=1}^n (x_i - \frac{1}{n}\sum_{i=1}^n x_i)^2$.
    Then, we have
    \begin{align*}
        -1/4 \leq c \leq 1/4, \\
        0 \leq v \leq 1/4.
    \end{align*}
\end{lemma}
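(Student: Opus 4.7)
The plan is to handle the variance bound first by a direct algebraic manipulation, and then reduce the covariance bound to the variance bound via Cauchy--Schwarz.

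For the variance, write $\bar x = \frac{1}{n}\sum_i x_i$ and expand to get $v = \frac{1}{n}\sum_i x_i^2 - \bar x^2$. The lower bound $v \geq 0$ is immediate from the sum-of-squares form. For the upper bound, I would use the fact that $x_i \in [0,1]$ implies $x_i^2 \leq x_i$, so $\frac{1}{n}\sum_i x_i^2 \leq \bar x$. This gives $v \leq \bar x - \bar x^2 = \bar x(1-\bar x)$, and then AM--GM (or just the observation that $t(1-t)$ is maximized at $t=1/2$) yields $v \leq 1/4$. The same argument applies to $\frac{1}{n}\sum_i (y_i - \bar y)^2 \leq 1/4$.

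For the covariance, I would apply the Cauchy--Schwarz inequality to the centered sequences $\{x_i - \bar x\}$ and $\{y_i - \bar y\}$:
\begin{align*}
|c| = \left|\tfrac{1}{n}\sum_i (x_i-\bar x)(y_i - \bar y)\right| \leq \sqrt{\tfrac{1}{n}\sum_i (x_i-\bar x)^2} \cdot \sqrt{\tfrac{1}{n}\sum_i (y_i - \bar y)^2}.
\end{align*}
By the variance bound already established, each factor is at most $1/2$, so $|c| \leq 1/4$.

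There is no real obstacle here; the only subtlety is the step $x_i^2 \leq x_i$, which crucially uses $x_i \in [0,1]$ and not just boundedness. Everything else is standard.
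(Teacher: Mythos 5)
Your proof is correct and follows essentially the same route as the paper: the bound $v \leq \bar x(1-\bar x) \leq 1/4$ via $x_i^2 \leq x_i$, and Cauchy--Schwarz on the centered sequences to reduce the covariance bound to the variance bound. If anything, your statement of the Cauchy--Schwarz step is slightly more careful, since the paper loosely writes $c^2 \leq v^2$ where the two factors are really the (possibly different) variances of $x$ and $y$, each bounded by $1/4$ as you note.
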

\begin{proof}
    From the Cauchy-Schwarz inequality, we have
    \begin{align*}
        c^2 \leq v^2.
    \end{align*}
    Thus, it suffices to show that $0 \leq v \leq 1/4$.
    From the definition of $v$, we have $0 \leq v$.
    Let $r = \frac{1}{n} \sum_{i=1}^n x_i$.
    Since $x_i^2 \leq x_i$ from $x_i \in [0, 1]$, we have
    \begin{align*}
        \frac{1}{n} \sum_{i=1}^n x_i^2 \leq r
    \end{align*}
    Therefore, we obtain
    \begin{align*}
        v & = \frac{1}{n}\sum_{i=1}^n x_i^2 - r^2 \\
          & \leq r - r^2 \leq 1/4
    \end{align*}
    since $r \in [0, 1]$.
\end{proof}
\begin{lemma}\label{lem:X+Y}
    Let $X, Y$ be random variables with bounded second moments.
    Then, we have
    \begin{align*}
        (1 - p)\Expec{X^2} + (1 - 1/p)\Expec{Y^2} \leq \Expec{(X+Y)^2} \leq (1 + p)\Expec{X^2} + (1 + 1/p)\Expec{Y^2}
    \end{align*}
    for any $p > 0$.
\end{lemma}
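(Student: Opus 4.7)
The plan is to expand $(X+Y)^2 = X^2 + 2XY + Y^2$, take expectations, and control the cross term $2\,\Expec{XY}$ using a parameterized AM-GM (Young) inequality. Since this is just a two-sided bound on $\Expec{(X+Y)^2}$ after the expansion, the entire argument reduces to bounding $\pm 2 XY$ pointwise by a weighted sum $pX^2 + Y^2/p$.

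Concretely, I would first note that for any real numbers $a, b$ and any $p > 0$, Young's inequality gives $2|ab| \leq p a^2 + b^2/p$, which follows directly from $\bigl(\sqrt{p}\,a - b/\sqrt{p}\bigr)^2 \geq 0$. Applying this pointwise with $a = X$ and $b = Y$ yields the two-sided bound
\begin{align*}
-p X^2 - Y^2/p \;\leq\; 2XY \;\leq\; p X^2 + Y^2/p.
\end{align*}
Taking expectations (justified because the second moments are assumed finite, so by Cauchy–Schwarz $\Expec{|XY|}$ is also finite) preserves the inequalities.

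Then I would substitute into the expansion $\Expec{(X+Y)^2} = \Expec{X^2} + 2\Expec{XY} + \Expec{Y^2}$ and regroup coefficients. The upper bound gives $\Expec{(X+Y)^2} \leq (1+p)\Expec{X^2} + (1 + 1/p)\Expec{Y^2}$, and the lower bound gives $\Expec{(X+Y)^2} \geq (1-p)\Expec{X^2} + (1 - 1/p)\Expec{Y^2}$, matching the statement.

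There is no real obstacle here; the only subtlety worth mentioning is that the lower bound is vacuous (or only useful) when $p < 1$, since otherwise $1 - 1/p \leq 0$ and the right-hand side may be negative — but the inequality still holds as stated. No finiteness issues arise beyond the standing assumption of bounded second moments.
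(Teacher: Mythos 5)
Your proof is correct and follows essentially the same route as the paper: expand $\Expec{(X+Y)^2}$ and control the cross term $2\Expec{XY}$ by a weighted AM--GM bound of the form $pX^2 + Y^2/p$. The only (cosmetic) difference is that you apply Young's inequality pointwise before taking expectations, whereas the paper first bounds $\Expec{XY}$ by $\sqrt{\Expec{X^2}\Expec{Y^2}}$ via Cauchy--Schwarz and then applies AM--GM to the expectations; both are valid and yield the identical bounds.
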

\begin{proof}
    For any $p>0$, we have
    \begin{align*}
        \Expec{(X+Y)^2} & = \Expec{X^2} + 2\Expec{XY} + \Expec{Y^2}                       \\
                        & \leq \Expec{X^2} + 2\sqrt{\Expec{X^2}\Expec{Y^2}} + \Expec{Y^2} \\
                        & \leq (1 + p)\Expec{X^2} + (1 + 1/p)\Expec{Y^2}.
    \end{align*}
    The first inequality follows from the Cauchy-Schwarz inequality, and the second inequality follows from the AM-GM inequality.
    On the other hand, we have
    \begin{align*}
        \Expec{(X+Y)^2} & = \Expec{X^2} + 2\Expec{XY} + \Expec{Y^2}                       \\
                        & \geq \Expec{X^2} - 2\sqrt{\Expec{X^2}\Expec{Y^2}} + \Expec{Y^2} \\
                        & \geq (1 - p)\Expec{X^2} + (1 - 1/p)\Expec{Y^2}.
    \end{align*}
    The first inequality follows from the Cauchy-Schwarz inequality, and the second inequality follows from the AM-GM inequality.
\end{proof}

\begin{lemma}\label{lem:rxy}
    For $x, y \in [0, 1]^n$, we have
    \begin{align*}
        \max(0, r_x + r_y - 1) \leq r_{xy} \leq \min(r_x, r_y),
    \end{align*}
    where $r_x = \frac{1}{n} \sum_{i=1}^{n} x_i$, $r_y = \frac{1}{n} \sum_{i=1}^{n} y_i$, and $r_{xy} = \frac{1}{n} \sum_{i=1}^{n} x_i y_i$.
\end{lemma}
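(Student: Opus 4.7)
The plan is to prove the two-sided bound by pointwise inequalities and then average over $i$, exploiting only the fact that $x_i, y_i \in [0,1]$.

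For the upper bound $r_{xy} \leq \min(r_x, r_y)$, I would observe that $y_i \leq 1$ together with $x_i \geq 0$ yields $x_i y_i \leq x_i$, and symmetrically $x_i y_i \leq y_i$. Summing over $i$ and dividing by $n$ gives $r_{xy} \leq r_x$ and $r_{xy} \leq r_y$, hence $r_{xy} \leq \min(r_x, r_y)$.

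For the lower bound, the nonnegativity $r_{xy} \geq 0$ is immediate from $x_i, y_i \geq 0$. For the other part $r_{xy} \geq r_x + r_y - 1$, I would use the pointwise inequality $(1 - x_i)(1 - y_i) \geq 0$, which holds because both factors lie in $[0,1]$. Expanding gives $x_i y_i \geq x_i + y_i - 1$; averaging over $i$ produces $r_{xy} \geq r_x + r_y - 1$. Combining with $r_{xy} \geq 0$ yields $r_{xy} \geq \max(0, r_x + r_y - 1)$.

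There is no real obstacle here: the bounds are the standard Fréchet--Hoeffding-type extremes for products of $[0,1]$ random variables, realized pointwise coordinate by coordinate, so the proof is purely a matter of writing down the two one-line pointwise inequalities and averaging. The only care needed is to note that both bounds are tight (achieved, e.g., by the sequences being perfectly comonotone or antimonotone on $\{0,1\}^n$), which confirms that the statement cannot be sharpened further without additional assumptions.
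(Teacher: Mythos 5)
Your proposal is correct and follows essentially the same route as the paper: bound $x_iy_i\leq x_i$ and $x_iy_i\leq y_i$ for the upper bound, and use nonnegativity of $x_iy_i$ together with $(1-x_i)(1-y_i)\geq 0$ (equivalently $1-r_x-r_y+r_{xy}\geq 0$ after averaging) for the lower bound. The only cosmetic difference is that you argue pointwise and then average, while the paper works directly with the sums; the tightness remark is a nice but inessential addition.
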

\begin{proof}
    For the upper bound, we have
    \begin{align*}
        r_{xy} & = \frac{1}{n} \sum_{i=1}^{n} x_i y_i               \\
               & = \frac{1}{n} \sum_{i=1}^{n} x_i \cdot y_i         \\
               & \leq \frac{1}{n} \sum_{i=1}^{n} x_i \cdot 1 = r_x.
    \end{align*}
    Similarly, we can show that $r_{xy} \leq r_y$.
    For the lower bound, we have
    \begin{align*}
        \frac{1}{n} \sum_{i=1}^{n} (1 - x_i)(1-y_i) & = \frac{1}{n} \sum_{i=1}^{n} (1 - x_i - y_i + x_i y_i) \\
                                                    & = 1 - r_x - r_y + r_{xy} \geq 0,
    \end{align*}
    which implies that
    \begin{align*}
        r_{xy} \geq r_x + r_y - 1.
    \end{align*}
    Thus, we conclude that
    \begin{align*}
        \max(0, r_x + r_y - 1) \leq r_{xy} \leq \min(r_x, r_y).
    \end{align*}
\end{proof}
\begin{lemma}\label{lem:proxy-bound}
    For $f: \R^d \to \R$ and $A_n = \{\norm{Z}_\infty \leq n^{1/4}\} \in \R^d$, assume that
    $f$ is twice continuously differentiable on $\{Z / n \mid Z \in A_n\}$ and
    $\grad f(Z/n), \grad^2 f(Z / n) = O(1)$ for $Z \in A_n$,
    and a random variable $Z$ satisfies $\Prob{Z \in A_n} = 1 - o(1 / n^2)$ and $\Expec{\norm{Z}^2} = O(1)$.
    Then, we have
    \begin{align*}
        n^2 \cdot \Expec[Z]{\norm{\clip(f_n(Z / n), [l, u]) - f(0)}^2} \leq \Expec[Z]{\norm{\Delta(Z)}^2} + o(1)
    \end{align*}
    for any $l, u$ such that $f(0) \in [l, u]$,
    where $\Delta(Z) = \langle \grad f(0), Z\rangle$.
\end{lemma}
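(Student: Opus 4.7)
The plan is to split the expectation on the good event $A_n$ and its complement, reducing the clipping to two simple observations: since $f(0) \in [l, u]$, for every $a \in \R$ we have both the contraction inequality $|\clip(a, [l, u]) - f(0)| \le |a - f(0)|$ (clipping toward a point inside the interval can only shrink distance) and the trivial diameter bound $|\clip(a, [l, u]) - f(0)| \le u - l$. I apply the diameter bound on $A_n^c$ and the contraction on $A_n$, where Taylor's theorem will give a sharp estimate.

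For the tail, the diameter bound gives $n^2 \Expec{(\clip(f(Z/n), [l, u]) - f(0))^2 \mathbf{1}_{A_n^c}} \le n^2 (u-l)^2 \Prob{A_n^c} = o(1)$ by the hypothesis $\Prob{A_n^c} = o(1/n^2)$. For the good event, I Taylor-expand $f$ at $0$: for $Z \in A_n$, $\|Z/n\|_\infty \le n^{-3/4}$, so the segment from $0$ to $Z/n$ lies inside $A_n/n$, and there exists $\xi$ on this segment with $f(Z/n) - f(0) = n^{-1} \Delta(Z) + R(Z)$ where $|R(Z)| \le \tfrac{1}{2} \|\grad^2 f(\xi)\| \cdot \|Z\|^2/n^2 \le C \|Z\|^2/n^2$ by the uniform Hessian bound on $A_n/n$. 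On $A_n$ one has $\|Z\|^2 \le d\, n^{1/2}$, hence $(n R(Z))^2 \le O(n^{-1})$ uniformly on $A_n$. Applying Lemma~\ref{lem:X+Y} with $X = \Delta(Z) \mathbf{1}_{A_n}$, $Y = n R(Z)\mathbf{1}_{A_n}$ and $p = n^{-1/4}$ yields
\begin{align*}
\Expec{(\Delta + n R)^2 \mathbf{1}_{A_n}} \le (1 + n^{-1/4}) \Expec{\Delta(Z)^2} + (1 + n^{1/4}) \cdot O(n^{-1}),
\end{align*}
and since $\Expec{\Delta(Z)^2} \le \|\grad f(0)\|^2\, \Expec{\|Z\|^2} = O(1)$ by the hypotheses, both correction terms are $o(1)$. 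Combining this with the tail bound gives the claim.

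The main obstacle is handling the Taylor remainder without a global Hessian bound: I exploit the fact that for $Z \in A_n$ the entire segment $[0, Z/n]$ lies in the region where $\grad^2 f$ is controlled, and I dispose of the complement via the crude diameter bound coupled with the strong tail assumption $\Prob{A_n^c} = o(1/n^2)$. The other delicate point is the choice of $p$ in Lemma~\ref{lem:X+Y}: a direct Cauchy--Schwarz bound on the cross term $2\Expec{\Delta(Z)\cdot n R(Z)\mathbf{1}_{A_n}}$ would produce an $O(n^{-1/2})$ correction rather than $o(1)$, so one must amortize it against a small multiple of $\Expec{\Delta^2}$, which the choice $p = n^{-1/4}$ accomplishes by balancing $p \cdot O(1)$ against $p^{-1} \cdot O(n^{-1})$.
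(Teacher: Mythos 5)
Your proof is correct and takes essentially the same route as the paper's: split on $A_n$, use the diameter bound and the tail assumption off the event, use the clipping contraction and a second-order Taylor expansion along the segment $[0,Z/n]$ on the event, and finish with Lemma~\ref{lem:X+Y} for an $n$-dependent $p$ (the paper takes $p=n^{-1/2}$, you take $p=n^{-1/4}$; both yield an $o(1)$ correction since $\Expec{\Delta(Z)^2}=O(1)$). One minor quibble: your closing remark that a direct Cauchy--Schwarz bound on the cross term ``would produce an $O(n^{-1/2})$ correction rather than $o(1)$'' is inaccurate, since $O(n^{-1/2})$ is already $o(1)$, so that cruder bound would also suffice — but this does not affect the validity of your argument.
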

\begin{proof}
    From the intermediate value theorem, we have
    \begin{align*}
        f(Z / n) - f(0) & = \langle \grad f(0), Z/n \rangle + \frac{1}{2n^2} Z^\top \grad^2 f(\xi) Z \\
                        & = \langle \grad f(0), Z/n \rangle + O(\norm{Z}_2^2 / n^2)                  \\
                        & = \Delta(Z/n) + O(n^{-3/2}).
    \end{align*}
    for some $\xi$ on the line segment between $0$ and $Z / n$.
    Therefore, we obtain
    \begin{align*}
        n^2 \cdot \Expec{\norm{\clip(f(Z / n), [l, u]) - f(0)}^2} & = n^2 \cdot \Expec{\norm{\clip(f(Z / n), [l, u]) - f(0)}^2 \cdot 1_A}                                     \\
                                                                  & \quad + n^2 \cdot \Expec{\norm{\clip(f(Z / n), [l, u]) - f(0)}^2 \cdot (1 - 1_A)},                        \\
                                                                  & \leq n^2\cdot \Expec{\norm{f(Z / n) - f(0)}^2 \cdot 1_A} + n^2 \cdot \abs{u - l} \cdot \Prob{Z \notin A}, \\
                                                                  & \leq n^2\cdot \Expec{\norm{f(Z / n) - f(0)}^2 \cdot 1_A} + \abs{u - l} \cdot o(1).
    \end{align*}
    For the first inequality, we use
    The first term can be bounded as follows:
    \begin{align*}
        n^2 \cdot \Expec[Z \sim \nu_n]{\norm{f(Z) - f(0)}^2 \cdot 1_A} & = n^2 \cdot \Expec{\ab(\Delta(Z/n) + O(n^{-3/2}))^2 \cdot 1_A}           \\
                                                                       & \leq \Expec{\ab(\Delta(Z) + O(n^{-1/2}))^2}                              \\
                                                                       & \leq (1 + n^{-1/2}) \Expec{\norm{\Delta(Z)}^2} + (1 + n^{1/2}) O(n^{-1}) \\
                                                                       & \leq \Expec{\norm{\Delta(Z)}^2} + o(1).
    \end{align*}
    The second inequality follows from Lemma~\ref{lem:X+Y} and the last inequality follows from $\Expec{\norm{\Delta(Z)}^2} = O(1)$.
    This completes the proof.
\end{proof}

\section{Proofs for Section~\ref{sec:lower-bound}}
\subsection{Proof of Theorem~\ref{thm:lower-bound}}\label{app:proof-lower-bound}
\begin{proof}
    We prove the theorem by reducing the problem to the sum estimation problem.

    First, we consider the variance estimation.
    Given $\varepsilon$-DP mechanism $\hat v:\mathcal{D}^* \to \R$,
    we construct the following mechanism $\hat s_k:\mathcal{D}^* \to \R$ for some $k \in \N$:
    \begin{align*}
        \hat s_k(D) & = k\hat v(D_k),
    \end{align*}
    where $D = [x]$ and $D_k = [[\sqrt{x_1}, \sqrt{x_2}, \ldots, \sqrt{x_m}, \underbrace{0, \dots, 0}_{k}]^\top]$.
    We show that $\hat s_k$ is an $\epsilon$-DP estimator of the sum of the dataset $x$.
    For neighboring datasets $D, D' \in \mathcal{D}^*$ and $S \subset \R$, we have
    \begin{align*}
        \Prob{\hat s_k(D) \in S} & = \Prob{k\hat v(D_k) \in S}                                      \\
                                 & = \Prob{\hat v(D_k) \in \{s / k \mid s \in S\}}                  \\
                                 & \leq e^{\epsilon} \Prob{\hat v(D_k') \in \{s / k \mid s \in S\}} \\
                                 & = e^{\epsilon} \Prob{k\hat v(D_k') \in S}                        \\
                                 & = e^{\epsilon} \Prob{\hat s_k(D') \in S},
    \end{align*}
    where the inequality follows from the definition of differential privacy
    and the fact that $D_k$ and $D_k'$ are neighboring datasets if $D$ and $D'$ are neighboring datasets.

    From Lemma 5.1 in~\citet{kulesza2024mean}, we have
    \begin{align*}
        \sup_{D \in \mathcal{D^*}}\Expec{(\hat{s}_k(D) - \Sum(D))^2} \geq \sigma(\varepsilon)^2(1 - o(1)),
    \end{align*}
    where $\Sum(D)$ is the true sum of dataset $D$.
    Thus, there exists $D \in \mathcal{D}^*$ such that
    \begin{align}
        \Expec{(\hat{s}_k(D) - \Sum(D))^2} \geq \sigma(\varepsilon)^2(1 - o(1)). \label{eq:sum-lower-bound}
    \end{align}

    The error $\abs{k \cdot \vvar(D_k) - \Sum(D)}$ can be evaluated as follows:
    \begin{align*}
        \abs{k\cdot \vvar(D_k) - \Sum(D)}
         & = \abs{\frac{k}{\abs{D} + k}\sum_{i=1}^{\abs{D}} x_i - \frac{k}{(\abs{D} + k)^2} \ab(\sum_{i=1}^{\abs{D}} \sqrt{x_i})^2 - \sum_{i=1}^{\abs{D}} x_i} \\
         & \leq \frac{1}{k} \ab(\sum_{i=1}^{\abs{D}} \sqrt{x_i})^2 + \abs{\frac{\abs{D}}{\abs{D} + k} \sum_{i=1}^{\abs{D}} x_i}                                \\                                           \\
         & \leq \frac{2\abs{D}^2}{k}
    \end{align*}

    Therefore, we obtain
    \begin{align*}
        \Expec{\abs{D_k}^2 (\hat{v}(D_k) - \vvar(D_k))^2}
         & \geq \Expec{k^2 (\hat{v}(D_k) - \vvar(D_k))^2}                                                \\
         & = \Expec{(\hat{s}_k(D) - \Sum(D) + \Sum(D) - k\vvar(D_k))^2}                                  \\
         & \geq (1 - p)\Expec{(\hat{s}_k(D) - \Sum(D))^2} + (1 - 1 / p)\Expec{(\Sum(D) - k\vvar(D_k))^2} \\
         & \geq (1 - p)\Expec{(\hat{s}_k(D) - \Sum(D))^2} - \frac{4(1 - 1/p)\abs{D}^4}{k^2}              \\
         & \geq (1 - p)\sigma(\varepsilon)^2(1 - o(1)) - \frac{4(1 - 1/p)\abs{D}^4}{k^2}
    \end{align*}
    for any $p>0$.
    The second inequality follows from Lemma~\ref{lem:X+Y} and the last inequality follows from Eq.~\eqref{eq:sum-lower-bound}.
    By setting $k = \Omega(\abs{D}^4)$, $p = 1/k$, and $n_0 = k$, we have
    \begin{align*}
        R(\hat v, n_0) \geq \Expec{\abs{D_k}^2 (\hat{v}(D_k) - \vvar(D_k))^2}
         & \geq \sigma(\varepsilon)^2(1 - o(1)).
    \end{align*}

    Since variance estimation is a special case of covariance estimation where $D = [x, x]$,
    we can use the same argument to show the lower bound for covariance estimation.
    This completes the proof.
\end{proof}

\section{Proofs for Section~\ref{sec:swap-model}} \label{app:proof-swap-model}

\subsection{Proof for Lemma~\ref{lem:swap-model-sensitivity}} \label{proof:swap-model-sensitivity}
\subsubsection{Variance Case}
\begin{proof}
    Let $[x], [y]$ be neighboring datasets, i.e., $x_i = y_i$ for all $i \neq j$ and $x_j \neq y_j$.
    Without loss of generality, we assume $j = n$ and $x_n \geq y_n$.
    To simplify the notation, we denote $r = \frac{1}{n} \sum_{i=1}^{n-1}x_i$.
    Then, we have
    \begin{align*}
        \vvar(x) - \vvar(y) & = \frac{1}{n}\sum_{i=1}^n x_i^2 - \left(\frac{1}{n}\sum_{i=1}^n x_i\right)^2 - \ab(\frac{1}{n}\sum_{i=1}^n y_i^2 - \left(\frac{1}{n}\sum_{i=1}^n y_i\right)^2) \\
                            & = \frac{1}{n} (x_n^2 - y_n^2) - \ab((r + x_n / n)^2 - (r + y_n / n)^2)                                                                                         \\
                            & = \frac{1}{n} (x_n^2 - y_n^2) - (2rx_n / n + x_n^2 / n^2 - 2ry_n / n - y_n^2 / n^2)                                                                            \\
                            & = \ab(\frac{1}{n} - \frac{1}{n^2}) (x_n^2 - y_n^2) - \frac{2r}{n}(x_n - y_n).
    \end{align*}
    Since $r \in [0, (n-1)/n]$ and $x_n \geq y_n$, we obtain
    \begin{align*}
        -1/n \leq & - \frac{n-1}{n^2}                                                                  \\
                  & \leq \frac{n-1}{n^2}\ab((x_n - 1)^2 - (y_n - 1)^2)                                 \\
                  & = \ab(\frac{1}{n} - \frac{1}{n^2}) (x_n^2 - y_n^2) - \frac{2(n-1)}{n^2}(x_n - y_n) \\
                  & \leq \vvar(x) - \vvar(y)                                                           \\
                  & \leq \ab(\frac{1}{n} - \frac{1}{n^2})                                              \\
                  & \leq \frac{1}{n}.
    \end{align*}
    This completes the proof.
\end{proof}

\subsubsection{Covariance Case}
\begin{proof}
    Let $[x, y], [z, w]$ be neighboring datasets, i.e., $(x_i, y_i) = (z_i, w_i)$ for all $i \neq j$ and $(x_j, y_j) \neq (z_j, w_j)$.
    Without loss of generality, we assume $j = n$ and $x_n \geq z_n$.
    To simplify the notation, we denote $r_x = \frac{1}{n} \sum_{i=1}^{n-1}x_i$ and $r_y = \frac{1}{n} \sum_{i=1}^{n-1}y_i$.
    Then, we have
    \begin{align*}
        \cov(x, y) - \cov(z, w) & = \frac{1}{n}\sum_{i=1}^n x_i y_i - \left(\frac{1}{n}\sum_{i=1}^n x_i\right)\left(\frac{1}{n}\sum_{i=1}^n y_i\right) - \ab(\frac{1}{n}\sum_{i=1}^n z_i w_i - \left(\frac{1}{n}\sum_{i=1}^n z_i\right)\left(\frac{1}{n}\sum_{i=1}^n w_i\right)) \\
                                & = \frac{1}{n} (x_n y_n - z_n w_n) - \ab((r_x + x_n / n)(r_y + y_n / n) - (r_x + z_n / n)(r_y + w_n / n))                                                                                                                                       \\
                                & = \frac{1}{n} (x_n y_n - z_n w_n) - (r_x y_n / n + r_y x_n / n + x_n y_n / n^2 - r_x w_n / n - r_y z_n / n - z_n w_n / n^2)                                                                                                                    \\
                                & = \ab(\frac{1}{n} - \frac{1}{n^2}) (x_n y_n - z_n w_n) - \frac{1}{n}(r_x (y_n - w_n) + r_y (x_n - z_n)).
    \end{align*}
    If $y_n \geq w_n$, we have
    \begin{align*}
        -1/n \leq & - \frac{n-1}{n^2}                                                \\
                  & \leq \frac{n-1}{n^2}\ab((x_n - 1)(y_n - 1) - (z_n - 1)(w_n - 1)) \\
                  & \leq \cov(x, y) - \cov(z, w)                                     \\
                  & \leq \ab(\frac{1}{n} - \frac{1}{n^2}) (x_n y_n - z_n w_n)        \\
                  & \leq \ab(\frac{1}{n} - \frac{1}{n^2})                            \\
                  & \leq \frac{1}{n}.
    \end{align*}
    On the other hand, if $y_n < w_n$, we have
    \begin{align*}
        -1/n \leq & - \frac{n-1}{n^2}                                                   \\
                  & \leq \frac{n-1}{n^2}\ab(x_n(y_n - 1) - z_n(w_n - 1))                \\
                  & \leq \cov(x, y) - \cov(z, w)                                        \\
                  & \leq \ab(\frac{1}{n} - \frac{1}{n^2})\ab((x_n-1)y_n - (z_n - 1)w_n) \\
                  & \leq \ab(\frac{1}{n} - \frac{1}{n^2})                               \\
                  & \leq \frac{1}{n}.
    \end{align*}
    Combining both cases, we obtain the result.
\end{proof}

\section{Proofs for Section~\ref{sec:add-remove-model}}

\subsection{Proof for Proposition~\ref{prop:naive-utility}}\label{proof:naive-utility}
\subsubsection{Covariance Case}
\begin{proof}
    Let $Z = [Z_n, Z_x, Z_y, Z_{xy}]$ and $f(Z / n) = \frac{r_{xy} + Z_{xy}/n}{1 + Z_n / n} - \frac{r_{x} + Z_{x} / n}{1 + Z_n / n}\frac{r_y + Z_y / n}{1 + Z_n / n}$.
    Note that $f(0)$ is the true covariance and $\hat c(D) = \clip(f(Z / n), [-1/4, 1/4])$,
    where $Z \sim \Lap(4/\varepsilon)$.
    It is easy to see that $\norm{\grad f(Z / n)}, \norm{\grad^2 f(Z / n)} = O(1)$ for $Z \in A_n$.
    Then, from Lemma~\ref{lem:proxy-bound}, it suffices to show
    1) $\Prob{Z \in A_n} = 1 - o(1/n^2)$,
    2) $\Expec{\norm{\Delta(Z)}^2} \leq \frac{128}{\varepsilon^2}$.
    We prove these three conditions in the following lemmas:
    \begin{lemma}\label{lem:probability-bound}
        We have $\Prob{Z \in A_n} = 1 - o(1/n^2)$.
    \end{lemma}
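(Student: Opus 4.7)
The plan is straightforward: exploit the exponentially decaying tail of the Laplace distribution together with a simple union bound. Recall that if $L \sim \Lap(b)$, then $\Prob{|L| > t} = e^{-t/b}$ for $t \geq 0$. In our setting, each of the four components $Z_n, Z_x, Z_y, Z_{xy}$ is an independent $\Lap(4/\varepsilon)$ random variable, so for the threshold $t = n^{1/4}$ I have $\Prob{|Z_i| > n^{1/4}} = \exp(-\varepsilon n^{1/4}/4)$ for each $i$.

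A union bound over the four coordinates then gives
\begin{align*}
    \Prob{Z \notin A_n} = \Prob{\norm{Z}_\infty > n^{1/4}} \leq 4\exp(-\varepsilon n^{1/4}/4).
\end{align*}
The final step is to observe that the right-hand side decays faster than any polynomial in $1/n$; in particular, $n^2 \exp(-\varepsilon n^{1/4}/4) \to 0$ as $n \to \infty$ because $n^{1/4}$ eventually dominates $\log(n^2)$. Hence $\Prob{Z \notin A_n} = o(1/n^2)$, and equivalently $\Prob{Z \in A_n} = 1 - o(1/n^2)$.

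There is essentially no obstacle here: the argument is a routine tail bound. The only mild subtlety is verifying the polynomial-vs-subexponential growth comparison, which is immediate from l'Hôpital (or from the fact that $e^{cx^{1/4}}$ dominates $x^k$ for any fixed $k$ and any $c>0$). No structural assumptions about $D$ or the dataset size enter the calculation, since the bound depends only on the independent Laplace noise.
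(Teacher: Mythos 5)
Your proof is correct and takes essentially the same route as the paper: a union bound over the four Laplace coordinates combined with the exact tail $\Prob{\abs{Z_i} > n^{1/4}} = \exp(-\varepsilon n^{1/4}/4)$, which decays faster than any polynomial and is therefore $o(1/n^2)$. You even use the correct scale $4/\varepsilon$ for the naive mechanism (the paper's proof loosely writes $\Lap(1/\varepsilon)$), but the asymptotic conclusion is identical.
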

    \begin{proof}
        Since $Z_n, Z_x, Z_y, Z_{xy} \sim \Lap(1/\varepsilon)$, we have
        \begin{align*}
            \Prob{Z \notin A_n} & \leq \Prob{\abs{Z_n} > n^{1/4}} + \Prob{\abs{Z_x} > n^{1/4}} + \Prob{\abs{Z_y} > n^{1/4}} + \Prob{\abs{Z_{xy}} > n^{1/4}} \\
                                & = O(\exp(-\varepsilon n^{1/4})) = o(1/n^2).
        \end{align*}
        This completes the proof.
    \end{proof}
    \begin{lemma}
        We have
        \begin{align*}
            \Expec{\norm{\Delta(Z)}^2} & \leq \frac{128}{\varepsilon^2}.
        \end{align*}
    \end{lemma}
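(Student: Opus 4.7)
The plan is to expand $\Delta(Z)$ as an explicit linear combination of the four independent Laplace noises, then bound the resulting sum of squared coefficients using the feasibility constraints on the moments $r_x, r_y, r_{xy}$.

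First I would view $f$ as a function of the auxiliary variables $u = (u_n, u_x, u_y, u_{xy})$, namely $f(u) = (r_{xy} + u_{xy})/(1 + u_n) - (r_x + u_x)(r_y + u_y)/(1 + u_n)^2$, and differentiate at $u = 0$. A direct computation yields $\grad f(0) = (2 r_x r_y - r_{xy},\, -r_y,\, -r_x,\, 1)$, so that $\Delta(Z) = (2 r_x r_y - r_{xy}) Z_n - r_y Z_x - r_x Z_y + Z_{xy}$.

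Next, since the four noise coordinates are independent $\Lap(4/\varepsilon)$ variables with common second moment $32/\varepsilon^2$, independence and linearity give $\Expec{\Delta(Z)^2} = (32/\varepsilon^2) \bigl[ (2 r_x r_y - r_{xy})^2 + r_x^2 + r_y^2 + 1 \bigr]$. It therefore suffices to prove the scalar inequality $g(r_x, r_y, r_{xy}) := (2 r_x r_y - r_{xy})^2 + r_x^2 + r_y^2 + 1 \leq 4$ on the feasible set $r_x, r_y \in [0,1]$ and $\max(0, r_x + r_y - 1) \leq r_{xy} \leq \min(r_x, r_y)$ supplied by Lemma~\ref{lem:rxy}.

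The main obstacle is this final polynomial inequality. Since $g$ is a convex quadratic in $r_{xy}$ for fixed $r_x, r_y$, its maximum on the feasible interval is attained at one of the two endpoints, so I would handle them separately. In the case $r_{xy} = \min(r_x, r_y)$, taking WLOG $r_x \leq r_y$, substitution gives $r_x^2 [(2 r_y - 1)^2 + 1] + r_y^2 + 1 \leq 2 r_x^2 + r_y^2 + 1 \leq 4$ since $(2 r_y - 1)^2 + 1 \leq 2$ on $[0,1]$. In the case $r_{xy} = \max(0, r_x + r_y - 1)$, I would use the identity $2 r_x r_y - (r_x + r_y - 1) = r_x r_y + (1 - r_x)(1 - r_y) \in [0, 1]$ (valid when $r_x + r_y > 1$) to get $(2 r_x r_y - r_{xy})^2 \leq 1$ and hence $g \leq 4$; when $r_x + r_y \leq 1$ the bound $r_x r_y \leq 1/4$ from AM-GM reduces $g$ to at most $9/4 < 4$. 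Equality $g = 4$ is attained at $r_x = r_y = r_{xy} = 1$, showing that the upper bound $128/\varepsilon^2$ is tight.
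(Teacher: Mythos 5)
Your proposal is correct and follows essentially the same route as the paper: compute $\grad f(0)$ to get $\Delta(Z) = (2r_xr_y - r_{xy})Z_n - r_yZ_x - r_xZ_y + Z_{xy}$, use independence of the $\Lap(4/\varepsilon)$ noises (second moment $32/\varepsilon^2$) to reduce to bounding $(2r_xr_y-r_{xy})^2 + r_x^2 + r_y^2 + 1 \leq 4$, and invoke Lemma~\ref{lem:rxy}. The only cosmetic difference is that the paper bounds $\abs{2r_xr_y - r_{xy}} \leq 1$ directly from the two endpoints of the feasible interval, whereas you run a convexity-in-$r_{xy}$ endpoint case analysis reaching the same conclusion.
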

    \begin{proof}
        From simple calculation, we have
        \begin{align*}
            \Delta(Z) = (-r_{xy} + 2r_xr_y)Z_n - r_y Z_x - r_x Z_y + Z_{xy}.
        \end{align*}
        From Lemma~\ref{lem:rxy}, we have
        \begin{align*}
            2r_xr_y - r_{xy} & \leq 2r_xr_y - \max(0, r_x + r_y - 1)       \\
                             & = r_xr_y + \min(r_xr_y, (1 - r_x)(1 - r_y)) \\
                             & \leq r_x r_y + (1 - r_x)(1 - r_y)           \\
                             & \leq 1,                                     \\
            2r_xr_y - r_{xy} & \geq 2r_xr_y - \min(r_x, r_y) \geq -1.
        \end{align*}
        Since $Z \sim \Lap(4/\varepsilon)$, we obtain
        \begin{align*}
            \Expec{\norm{\Delta(Z)}^2} & = 32/\varepsilon^2 \ab((-r_{xy} + 2r_x r_y)^2 + r_y^2 + r_x^2 + 1) \\
                                       & \leq 128 / \varepsilon^2.
        \end{align*}
        The equality holds when $r_x = r_y = r_{xy} = 1$.
    \end{proof}
\end{proof}
\subsubsection{Variance Case}
Let $Z = [Z_n, Z_{x}, Z_{xx}]$ and $f(Z / n) = \frac{r_{xx} + Z_{xx}/n}{1 + Z_n / n} - \ab(\frac{r_{x} + Z_{x} / n}{1 + Z_n / n})^2$.
As in the case of covariance, it suffices to show
1) $\Prob{Z \in A_n} = 1 - o(1/n^2)$,
2) $\Expec{\norm{\Delta(Z)}^2} = \frac{108}{\varepsilon^2}$.
The proof of 1) is the same as Lemma~\ref{lem:probability-bound}. Thus, we prove 2) in the following lemma:
\begin{lemma}
    We have
    \begin{align*}
        \Expec{\norm{\Delta(Z)}^2} & \leq \frac{108}{\varepsilon^2}.
    \end{align*}
\end{lemma}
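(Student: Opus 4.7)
The plan is to mirror the structure of the covariance case proof that immediately precedes this lemma. First, I would write out the relevant function and compute its gradient at the origin explicitly. Setting $f(u_n, u_x, u_{xx}) = \frac{r_{xx} + u_{xx}}{1 + u_n} - \left(\frac{r_x + u_x}{1 + u_n}\right)^2$ and differentiating gives $\partial_{u_n} f|_0 = 2r_x^2 - r_{xx}$, $\partial_{u_x} f|_0 = -2 r_x$, and $\partial_{u_{xx}} f|_0 = 1$, so
\begin{align*}
\Delta(Z) = (2r_x^2 - r_{xx})\, Z_n - 2 r_x\, Z_x + Z_{xx}.
\end{align*}

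Since the naive variance estimator splits the budget three ways, the noise components are independent with each $Z_i \sim \Lap(3/\varepsilon)$ of variance $18/\varepsilon^2$. Linearity then gives
\begin{align*}
\Expec{\Delta(Z)^2} = \frac{18}{\varepsilon^2}\bigl((2r_x^2 - r_{xx})^2 + 4 r_x^2 + 1\bigr).
\end{align*}
It remains to bound each squared coefficient by its worst case. The only nontrivial piece is the coefficient of $Z_n$; to handle it I would use the two moment inequalities $r_x^2 \leq r_{xx}$ (nonnegativity of the empirical variance) and $r_{xx} \leq r_x$ (from $x_i^2 \leq x_i$ on $[0,1]$). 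Combined with $r_x \in [0,1]$, these give
\begin{align*}
2r_x^2 - r_x \;\leq\; 2r_x^2 - r_{xx} \;\leq\; r_x^2 \;\leq\; 1,
\end{align*}
and the lower side is easily seen to exceed $-1$ (its minimum on $[0,1]$ is $-1/8$ at $r_x = 1/4$), so $(2r_x^2 - r_{xx})^2 \leq 1$. Together with $4r_x^2 \leq 4$ and the trivial last term, the coefficient sum is at most $1 + 4 + 1 = 6$, producing $\Expec{\Delta(Z)^2} \leq 108/\varepsilon^2$, with equality essentially attained on the all-ones dataset ($r_x = r_{xx} = 1$).

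I do not anticipate a genuine obstacle: the argument is the same bookkeeping as in the covariance case, and the only mildly subtle point is that sharpness of the bound $|2r_x^2 - r_{xx}| \leq 1$ requires invoking \emph{both} moment constraints $r_x^2 \leq r_{xx} \leq r_x$ rather than just $r_{xx} \in [0,1]$.
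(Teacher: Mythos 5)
Your proof is correct and follows essentially the same route as the paper: the same linearization $\Delta(Z) = (2r_x^2 - r_{xx})Z_n - 2r_x Z_x + Z_{xx}$, the same per-coordinate variance $18/\varepsilon^2$, and the same worst case $r_x = r_{xx} = 1$. Your explicit use of the moment constraint $r_x^2 \leq r_{xx} \leq r_x$ to justify $(2r_x^2 - r_{xx})^2 \leq 1$ is a welcome bit of care that the paper leaves implicit.
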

\begin{proof}
    From simple algebra, we have
    \begin{align*}
        \Delta(Z) = (-r_{xx} + 2r_x^2)Z_n - 2r_x Z_x + Z_{xx}.
    \end{align*}
    Since $Z \sim \Lap(3/\varepsilon)$, we obtain
    \begin{align*}
        \Expec{\norm{\Delta(Z)}^2} & = 18/\varepsilon^2 \ab((-r_{xx} + 2r_x^2)^2 + 4r_x^2 + 1) \\
                                   & \leq 108 / \varepsilon^2.
    \end{align*}
    Here, the equality holds when $r_x = r_{xx} = 1$.
\end{proof}

\subsection{Proof for Proposition~\ref{prop:improved-utility}}\label{proof:improved-utility}
Let $f([Z_n, Z_u] / n) = \frac{u + Z_u}{n + Z_n}$.
As in the proof of Proposition~\ref{prop:naive-utility}, it suffices to show
1) $\Prob{Z \notin A} = o(1/n^2)$,
and 2) $\Expec{\norm{\Delta(Z)}^2} = \frac{17/2}{\varepsilon^2}$.
The proof of 1) can be done in a similar way as Lemma~\ref{lem:probability-bound}. Thus, we prove 2) in the following lemma:
\begin{lemma}
    We have
    \begin{align*}
        \Expec{\norm{\Delta(Z)}^2} & \leq \frac{17/2}{\varepsilon^2}.
    \end{align*}
\end{lemma}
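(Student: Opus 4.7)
The plan is to follow the same template used for the naive mechanism in the proof of Proposition~\ref{prop:naive-utility}: identify the scalar function $f$ whose clipped value is the estimator, compute its gradient at the origin, read off $\Delta(Z)$, and then evaluate $\Expec{\norm{\Delta(Z)}^2}$ using the variance of the Laplace noise plus a crude bound on the true value of the statistic. All of the heavy lifting has already been done by Lemma~\ref{lem:proxy-bound}; what remains here is purely a short computation.

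Concretely, since $\hat v(D)=\tilde u/\tilde n=(u+Z_u)/(n+Z_n)$, I would write $\hat v(D)=f([Z_n,Z_u]/n)$ with
\[
f(w_1,w_2)=\frac{c+w_2}{1+w_1},\qquad c:=u/n,
\]
so that $f(0)=c$ is the true value of variance (or covariance) and $c\in[-1/4,1/4]$ by the auxiliary result at the beginning of Appendix~\ref{app:auxiliary-results}. A direct differentiation gives
\[
\partial_{w_1}f(0)=-c,\qquad \partial_{w_2}f(0)=1,
\]
hence $\Delta(Z)=\langle\grad f(0),Z\rangle=-cZ_n+Z_u$.

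Because $Z_n,Z_u\sim\Lap(2/\varepsilon)$ are independent and $\Var[\Lap(2/\varepsilon)]=8/\varepsilon^2$, independence gives
\[
\Expec{\Delta(Z)^2}=c^2\Expec{Z_n^2}+\Expec{Z_u^2}=\frac{8(c^2+1)}{\varepsilon^2}.
\]
Using $c^2\le 1/16$ from the bound $\abs{c}\le 1/4$, the right-hand side is at most $\tfrac{8}{\varepsilon^2}\cdot\tfrac{17}{16}=\tfrac{17/2}{\varepsilon^2}$, which is the stated bound. The accompanying tail condition $\Prob{Z\notin A_n}=o(1/n^2)$ needed to invoke Lemma~\ref{lem:proxy-bound} follows from the exponential tails of the Laplace distribution exactly as in Lemma~\ref{lem:probability-bound}, so no new argument is required there.

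I do not foresee a real obstacle: the only mild subtlety is to express the estimator as $f(Z/n)$ rather than $f(Z)$ so that the gradient produced by Lemma~\ref{lem:proxy-bound} has the right scaling, and to remember that $\abs{c}\le 1/4$ (rather than $\abs{c}\le 1$) is what yields the sharp constant $17/2$; replacing this crude bound with anything tighter is unnecessary for matching the claim but could be useful for the instance-wise comparison in Theorem~\ref{thm:comparison}, which indeed reports the leading constant as $v^2+(1-v)^2$ rather than as a global maximum.
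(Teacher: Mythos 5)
Your proposal is correct and follows essentially the same argument as the paper: compute $\Delta(Z)=-cZ_n+Z_u$ with $c=u/n\in[-1/4,1/4]$, use independence and $\Var[\Lap(2/\varepsilon)]=8/\varepsilon^2$ to get $8(c^2+1)/\varepsilon^2$, and bound $c^2\le 1/16$ to obtain $\tfrac{17/2}{\varepsilon^2}$. Your explicit introduction of $c=u/n$ in fact cleans up a mild notational overload in the paper's version, which writes $u$ for the normalized quantity inside the lemma.
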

\begin{proof}
    From simple algebra, we have
    \begin{align*}
        \Delta(Z) = -u\cdot Z_n + Z_u
    \end{align*}
    Since $Z \sim \Lap(2/\varepsilon)$, we obtain
    \begin{align*}
        \Expec{\norm{\Delta(Z)}^2} & = 8/\varepsilon^2 \ab(u^2 + 1)  \\
                                   & \leq \frac{17/2}{\varepsilon^2}
    \end{align*}
    Here, the equality holds when $u = 1/4$.
\end{proof}

\subsection{Proof of Lemma~\ref{lem:add-remove-model-sensitivity}} \label{proof:add-remove-model-sensitivity}
\subsubsection{Variance Case}
\begin{proof}
    Let $a \in \R^n, a' \in \R^{n+1}$ be neighboring datasets, i.e., $a_i = a'_i$ for all $i \leq n$ and $a'_{n+1} = x$.
    To simplify the notation, we denote $r = \frac{1}{n} \sum_{i=1}^{n}a_i$
    Then, we have
    \begin{align*}
        \uvar(a') - \uvar(a)
         & = \sum_{i=1}^{n+1} a'_i - \frac{1}{n+1}\left(\sum_{i=1}^{n+1} a'_i\right)^2 - \sum_{i=1}^n a_i + \frac{1}{n}\left(\sum_{i=1}^n a_i\right)^2 \\
         & = x^2 - \frac{1}{n+1}\left(nr + x\right)^2 + \frac{1}{n}\left(nr\right)^2                                                                   \\
         & = x^2 - \frac{1}{n+1}\left(nr + x\right)^2 + \frac{1}{n+1}n(n+1)r^2                                                                         \\
         & = x^2 + \frac{1}{n+1}[n(n+1)r^2 -\left(nr + x\right)^2]                                                                                     \\
         & = x^2 + \frac{1}{n+1}[n(n+1)r^2 -n^2r^2 - 2nrx - x^2]                                                                                       \\
         & = \frac{1}{n+1}[nr^2 - 2nrx + nx^2]                                                                                                         \\
         & = \frac{n}{n+1}(r - x)^2 .
    \end{align*}
    Thus, we have
    \begin{align*}
        0 \leq \uvar(a') - \uvar(a) \leq 1,
    \end{align*}
    which completes the proof.
\end{proof}

\subsubsection{Covariance Case}
\begin{proof}
    Let $(a, b) \in \R^n \times \R^n, (a', b') \in (\R^{n+1} \times \R^{n+1})$ be neighboring datasets, i.e.,
    $a_i = a'_i$ for all $i \leq n, a'_{n+1} = x$ and $b_i = b'_i$ for all $i \leq n, b'_{n+1} = y$.
    To simplify the notation, we denote $r_a = \frac{1}{n} \sum_{i=1}^{n}a_i$ and $r_b = \frac{1}{n} \sum_{i=1}^{n}b_i$.
    Then, we have
    \begin{align*}
        \ucov(a', b') - \ucov(a, b)
         & = \sum_{i=1}^{n+1} a'_i b'_i - \frac{1}{n+1}\left(\sum_{i=1}^{n+1} a'_i\right)\left(\sum_{i=1}^{n+1} b'_i\right) - \sum_{i=1}^n a_i b_i + \frac{1}{n}\left(\sum_{i=1}^n a_i\right)\left(\sum_{i=1}^n b_i\right) \\
         & = xy - \frac{1}{n+1}\left(nr_a + x\right)\left(nr_b + y\right) + \frac{1}{n}\left(nr_a\right)\left(nr_b\right)                                                                                                  \\
         & = xy - \frac{1}{n+1}\left(n^2r_ar_b + nr_ay + nr_bx + xy\right) + \frac{1}{n+1}n(n+1)r_ar_b                                                                                                                     \\
         & = - \frac{1}{n+1}\left(-nr_ar_b + nr_ay + nr_bx - nxy\right)                                                                                                                                                    \\
         & = \frac{n}{n+1}(r_a - x)(r_b - y)                                                                                                                                                                               \\
    \end{align*}
    Thus, we have
    \begin{align*}
        -1 & \leq \ucov(a', b') - \ucov(a, b) \leq 1,
    \end{align*}
    which completes the proof.
\end{proof}

\subsection{Proof of Theorem~\ref{thm:bezier-utility}}\label{app:proof-bezier-utility}
From the definition of $\hat \mu_j$, we have
\begin{align*}
    \hat \mu_j & = [A^{-1} (A \mu + \xi)]_j \\
               & = \mu_j + [A^{-1} \xi]_j.
\end{align*}
Since $\xi_j \sim \Lap(1/\varepsilon)$, we arrive at
\begin{align*}
    E[(\hat \mu_j - \mu_j)^2] & = \frac{2}{\varepsilon^2} \sum_{l=0}^k (A^{-1}_{jl})^2                           \\
                              & = \frac{2}{\varepsilon^2} \sum_{l=j}^k \ab(\frac{\binom{l}{j}}{\binom{k}{j}})^2,
\end{align*}
where, we used the fact that $A^{-1}_{jl} = \frac{\binom{l}{j}}{\binom{k}{j}}$~\citep{simsek2013functional}.
For $j = k$, we have
\begin{align}
    E[(\hat \mu_k - \mu_k)^2] & = \frac{2}{\varepsilon^2}. \label{eq:mu-k-utility}
\end{align}

For any $\varepsilon$-DP estimator $\tilde \mu_k$,
we can construct an $\varepsilon$-DP estimator $\hat s$ of $\mu_1$ as follows:
\begin{align*}
    \hat s(D) & = \tilde \mu_k(\tilde D),
\end{align*}
where $D = [x]$ and $\tilde D = [[x_1^{1/k}, \dots, x_n^{1/k}]^\top]$.
Note that $\mu_1(D) = \mu_k(\tilde D)$.
The utility of $\hat s$ can be evaluated as
\begin{align*}
    E[(\hat s(D) - \mu_1(D))^2] & = E[(\tilde \mu_k(\tilde D) - \mu_k(\tilde D))^2] \geq \sigma(\varepsilon)^2 (1 - o(1))
\end{align*}
from Lemma 6 in~\citet{kulesza2024mean}.
In the high-privacy regime, the lower bound becomes $2/\varepsilon^2(1 - o(1))$.
Thus, Eq.~\ref{eq:mu-k-utility} is asymptotically optimal for estimation of $\mu_k$.

\subsection{Proof of Lemma~\ref{lem:multi-dim-sensitivity}}\label{proof:multi-dim-sensitivity}

\begin{proof}
    For neighboring datasets $D$ and $D'$, we have
    \begin{align*}
        \norm{b(D') - b(D)}_1 & = \sum_{\alpha \in \{0, \dots, k\}^d} \abs{B_{\alpha_1}^{k}(x^{(1)}_{n+1})\cdots B_{\alpha_d}^{k}(x^{(d)}_{n+1})}          \\
                              & = \sum_{\alpha \in \{0, \dots, k\}^d} B_{\alpha_1}^{k}(x^{(1)}_{n+1})\cdots B_{\alpha_d}^{k}(x^{(d)}_{n+1})                \\
                              & = \ab(\sum_{\alpha_1=0}^k B_{\alpha_1}^{k}(x^{(1)}_{n+1})) \cdots \ab(\sum_{\alpha_d=0}^k B_{\alpha_d}^{k}(x^{(d)}_{n+1})) \\
                              & = 1,
    \end{align*}
    where the first equality follows from the non-negativity of Bernstein basis and the last equality follows from the partition of unity property.
\end{proof}

\subsection{Proof of Theorem~\ref{thm:covariance-estimation-add-remove-utility}} \label{app:proof-covariance-estimation-add-remove-utility}
Let $Z = [Z_n, Z_x, Z_y, Z_{xy}]$ and $f(Z / n) = \frac{r_{xy} + Z_{xy} / n}{1 + Z_n / n} - (\frac{r_x + Z_x / n}{1 + Z_n / n})(\frac{r_y + Z_y / n}{1 + Z_n / n})$.
Note that $f(0)$ is the true covariance and $\hat c = \clip(f(Z/n), [-1/4, 1/4])$, where
$Z_n = \sum_{i} Z_i, Z_x = Z_{1} + Z_{3}, Z_y = Z_{2} + Z_{3}, Z_{xy} = Z_{3}$
and $Z_i\sim \Lap(1/\varepsilon)$.
From simple calculation, we have $\grad f(Z/n), \grad^2 f(Z/n) = O(1)$ for $Z \in A_n$.
Thus, from Lemma~\ref{lem:proxy-bound}, it suffices to show
1) $\Prob{A_n} = 1 - o(1/n^2)$,
and 2) $\Expec{\norm{\Delta(Z)}^2} \leq 2 / \varepsilon^2$.
We prove these conditions in the following lemmas:
\begin{lemma}\label{lem:laplace-clip-probability}
    We have
    \begin{align*}
        \Prob{Z \in A_n} & = 1 - O(\exp(-n^{1/4} / (4b))) = 1 - o(1/n^2).
    \end{align*}
\end{lemma}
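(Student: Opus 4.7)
The plan is a straightforward concentration argument combining a union bound with the exponential tail of the Laplace distribution, mirroring the structure of Lemma~\ref{lem:probability-bound} in the naive-mechanism analysis.

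First, I would apply a union bound over the four coordinates of $Z = (Z_n, Z_x, Z_y, Z_{xy})$, giving
\begin{align*}
\Prob{Z \notin A_n} \leq \Prob{\abs{Z_n} > n^{1/4}} + \Prob{\abs{Z_x} > n^{1/4}} + \Prob{\abs{Z_y} > n^{1/4}} + \Prob{\abs{Z_{xy}} > n^{1/4}}.
\end{align*}
Next, I would read off from Algorithm~\ref{alg:covariance-estimation-add-remove} how each of these coordinates decomposes into the four independent noises $Z_{i,j} \sim \Lap(1/\varepsilon)$ used in the mechanism: $Z_n$ is the sum of all four, $Z_x$ and $Z_y$ are each sums of two, and $Z_{xy} = Z_{1,1}$ is a single $\Lap(1/\varepsilon)$. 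For any such partial sum $W = \sum_{\ell=1}^m Z_\ell$ with $m \leq 4$, the triangle inequality and a further union bound yield $\Prob{\abs{W} > t} \leq m \Prob{\abs{Z_1} > t/m} = m \exp(-\varepsilon t / m)$, using the standard tail bound $\Prob{\abs{\Lap(1/\varepsilon)} > s} = \exp(-\varepsilon s)$.

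Combining the pieces at $t = n^{1/4}$ gives $\Prob{Z \notin A_n} \leq 16 \exp(-\varepsilon n^{1/4} / 4) = O(\exp(-n^{1/4}/(4b)))$ with $b = 1/\varepsilon$, which is the first claimed equality. The second equality $O(\exp(-n^{1/4}/(4b))) = o(1/n^2)$ is then immediate, since $n^{1/4}$ grows faster than any constant multiple of $\log n$, so the exponential factor dominates any polynomial decay $n^{-k}$. I do not foresee any genuine obstacle; the only bookkeeping worth doing carefully is the composition of each aggregated coordinate of $Z$ into a fixed, small number of iid Laplace variables, and this is read off directly from the algorithm.
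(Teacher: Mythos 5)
Your argument is correct and is essentially the paper's proof: a union bound combined with the Laplace tail $\Pr[\abs{\Lap(1/\varepsilon)}>s]=e^{-\varepsilon s}$ at threshold on the order of $n^{1/4}/4$, followed by the observation that the resulting exponential decay is $o(1/n^2)$. The only cosmetic difference is that you union-bound over the aggregated coordinates and then decompose each into the raw noises $Z_{i,j}$, whereas the paper notes directly that $\abs{Z_{i,j}}\le \tfrac14 n^{1/4}$ for all four raw noises suffices for $A_n$; the resulting bound differs only in the harmless constant in front of the exponential.
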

\begin{proof}
    Since $\abs{Z_i} \leq 1/4 n^{1/4}$ for all $i$
    is sufficient for the event $A_n$ to hold, we can bound the probability of $Z \in A_n$ as follows:
    \begin{align*}
        \Prob{Z \in A_n} & \leq 1 - \sum_{i=1}^4 \Prob{\abs{Z_i} > 1/4 n^{1/4}} \\
                         & \leq 1 - 4\exp\ab(-\frac{n^{1/4}}{4b})
    \end{align*}
    This completes the proof.
\end{proof}

\begin{lemma}\label{lem:first-order-variance}
    Let $r_x = \frac{1}{n} \sum_{i=1}^{n} x_i$, $r_y = \frac{1}{n} \sum_{i=1}^{n} y_i$, and $c = \frac{1}{n}\sum_{i=1}^n (x_i - r_x)(y_i - r_y)$.
    Then, we have
    \begin{align*}
        \Expec{\Delta(Z)^2} = \frac{2}{\varepsilon^2} C(r_x, r_y, c) \leq \frac{2}{\varepsilon^2},
    \end{align*}
    where $C(r_x, r_y, c) := (1 - 2r_x + 2r_x^2)(1-2r_y + 2r_y^2) - 2c(1-2r_x)(1-2r_y) + 4c^2$.
\end{lemma}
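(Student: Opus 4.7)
The plan is to compute $\Delta(Z)$ explicitly, re-express it in terms of the four independent underlying Laplace noises, read off $\Expec{\Delta(Z)^2}$ from the sum of squared coefficients, and finally bound the resulting quadratic in $c$ uniformly by $1$.

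First I would differentiate $f$ at the origin: writing $f(w) = \tfrac{r_{xy}+w_{xy}}{1+w_n} - \tfrac{(r_x+w_x)(r_y+w_y)}{(1+w_n)^2}$, a direct computation gives $\grad f(0) = (2r_xr_y - r_{xy},\,-r_y,\,-r_x,\,1)$, hence $\Delta(Z) = (2r_xr_y - r_{xy})Z_n - r_y Z_x - r_x Z_y + Z_{xy}$. Next, substituting $Z_n = Z_1+Z_2+Z_3+Z_4$, $Z_x = Z_1+Z_3$, $Z_y = Z_2+Z_3$, $Z_{xy} = Z_3$, together with the identity $r_{xy} = r_x r_y + c$, turns the coefficient of each $Z_i$ into $s_i - c$, where $s_1 = -(1-r_x)r_y$, $s_2 = -r_x(1-r_y)$, $s_3 = (1-r_x)(1-r_y)$, $s_4 = r_x r_y$. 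Since the $Z_i$ are i.i.d.\ with variance $2/\varepsilon^2$, $\Expec{\Delta(Z)^2} = \tfrac{2}{\varepsilon^2}\bigl(\sum_i s_i^2 - 2c\sum_i s_i + 4c^2\bigr)$; short grouping gives $\sum_i s_i = (1-2r_x)(1-2r_y)$ and $\sum_i s_i^2 = (1-2r_x+2r_x^2)(1-2r_y+2r_y^2)$ (by pairing $s_1^2+s_4^2$ and $s_2^2+s_3^2$ and extracting a common factor from each pair), matching $C(r_x,r_y,c)$ exactly.

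For the upper bound $C \leq 1$, I would substitute $u=1-2r_x$ and $v=1-2r_y\in[-1,1]$, so that $1-2r_x+2r_x^2 = (1+u^2)/2$ and $C = \tfrac{1}{4}(1+u^2)(1+v^2) - 2uv\,c + 4c^2$. Since $C$ is convex in $c$, its maximum is attained at an endpoint of the feasible interval; the Cauchy--Schwarz bound $c^2 \leq v_x v_y$ combined with $v_x = \tfrac{1}{n}\sum x_i^2 - r_x^2 \leq r_x(1-r_x) = (1-u^2)/4$ gives $|c| \leq \tfrac{1}{4}\sqrt{(1-u^2)(1-v^2)}$. Plugging in $c$ at this extreme with the sign opposite to $uv$, and using the identity $(1+u^2)(1+v^2)+(1-u^2)(1-v^2) = 2(1+u^2v^2)$, reduces $C \leq 1$ to $u^2v^2 + |uv|\sqrt{(1-u^2)(1-v^2)} \leq 1$. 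This factors as $|uv|\bigl(|uv| + \sqrt{(1-u^2)(1-v^2)}\bigr) \leq 1$, and the parenthesised factor is at most $1$ by Cauchy--Schwarz applied to the unit vectors $(|u|,\sqrt{1-u^2})$ and $(|v|,\sqrt{1-v^2})$, while $|uv|\leq 1$.

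The main obstacle is this last step: Steps~1--2 are essentially mechanical, but bounding the four-parameter quadratic $C$ uniformly by $1$ requires noticing that the feasibility constraint on $c$ and the quadratic structure of $C$ interact through exactly the same Pythagorean pair $(|u|,\sqrt{1-u^2})$, which is why equality occurs precisely at the extreme configurations $r_x, r_y \in \{0,1\}$. A naive AM--GM applied too early loses a constant factor; the right sequence is two Cauchy--Schwarz applications, one to bound $|c|$ and one to bound the remaining sum.
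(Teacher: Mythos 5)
Your proposal is correct, and its first half coincides with the paper's own computation: the gradient $(2r_xr_y-r_{xy},-r_y,-r_x,1)$, the substitution $Z_n=\sum_i Z_i$, $Z_x=Z_1+Z_3$, $Z_y=Z_2+Z_3$, $Z_{xy}=Z_3$, and the per-noise coefficients $s_i-c$ with $\sum_i s_i=(1-2r_x)(1-2r_y)$ and $\sum_i s_i^2=(1-2r_x+2r_x^2)(1-2r_y+2r_y^2)$ are exactly what the paper does, yielding the same $C(r_x,r_y,c)$. Where you genuinely diverge is the inequality $C\leq 1$. The paper restricts $c$ to the interval induced by the Fréchet-type bounds $\max(0,r_x+r_y-1)\leq r_{xy}\leq\min(r_x,r_y)$ (its Lemma on $r_{xy}$), uses convexity in $c$, and then grinds through four endpoint subcases with explicit polynomial factorizations in $(r_x,r_y)$. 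You instead relax to the symmetric interval $\lvert c\rvert\leq\sqrt{v_xv_y}\leq\tfrac14\sqrt{(1-u^2)(1-v^2)}$ (Cauchy--Schwarz plus $v_x\leq r_x(1-r_x)$, with $u=1-2r_x$, $v=1-2r_y$), use convexity to sit at the worst endpoint, and close with the identity $(1+u^2)(1+v^2)+(1-u^2)(1-v^2)=2(1+u^2v^2)$ and a second Cauchy--Schwarz giving $\lvert uv\rvert+\sqrt{(1-u^2)(1-v^2)}\leq 1$. This is valid: every attainable $c$ lies in your relaxed interval, so bounding $C$ there suffices, and in fact you prove the slightly stronger statement that $C\leq 1$ on that larger set, with equality pinned at $r_x,r_y\in\{0,1\}$, consistent with the paper. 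The trade-off is that your route is shorter and structurally more transparent (two Cauchy--Schwarz steps replace the case analysis), while the paper's endpoint enumeration works directly with the exact attainable range of $r_{xy}$ and requires no variance relaxation; both give the same constant. One cosmetic caution: you overload $v$ for both $1-2r_y$ and the variances $v_x,v_y$, which should be disentangled if written up.
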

\begin{proof}
    Let $r_{xy} = \frac{1}{n} \sum_{i=1}^n x_iy_i$.
    From simple calculation, we have
    \begin{align*}
        \Delta(Z) = Z_{xy} - r_{xy}Z_n + 2r_xr_yZ_n - r_y Z_x - r_x Z_y
    \end{align*}
    To simplify the expression, we rewrite $\Delta(Z)$ as follows:
    \begin{align*}
        \Delta(Z) = \underbrace{(Z_{xy} - r_y Z_x - r_x Z_y + r_x r_y Z_n)}_{=:X} - c Z_n.
    \end{align*}
    Substituting the definition of $Z_x, Z_y$ and $Z_{xy}$ into the expression for $X$, we have $X = \sum_{i=1}^4 x_i Z_i$, where
    \begin{align*}
        x_1 & = (1-r_x)(1-r_y) \\
        x_2 & = -r_y(1-r_x)    \\
        x_3 & = -r_x(1-r_y)    \\
        x_4 & = r_x r_y.
    \end{align*}
    In addition, $Y$ can be expressed as $Y = \sum_{i=1}^4 (x_i - c) Z_i$ since $Z_n = Z_1 + Z_2 + Z_3 + Z_4$.
    Thus, we have
    \begin{align*}
        \frac{\Expec{Y^2}}{2/\varepsilon^2} = \sum_{i=1}^4 (x_i - c)^2 = \sum_{i=1}^4 x_i^2 - 2c \sum_{i=1}^4 x_i + 4c^2.
    \end{align*}
    Here, $\sum_{i=1}^4 x_i^2$ and $\sum_{i=1}^4 x_i$ can be computed as follows:
    \begin{align*}
        \sum_{i=1}^4 x_i^2 & = (1-r_x)^2(1-r_y)^2 + r_y^2(1-r_x)^2 + r_x^2(1-r_y)^2 + r_x^2r_y^2 \\
                           & = \left((1-r_x)^2 + r_x^2\right)\left((1-r_y)^2 + r_y^2\right)      \\
                           & = (1-2r_x+2r_x^2)(1-2r_y+2r_y^2)                                    \\
        \sum_{i=1}^4 x_i   & = (1-r_x)(1-r_y) - r_y(1-r_x) - r_x(1-r_y) + r_xr_y                 \\
                           & = (1-2r_x)(1-2r_y).
    \end{align*}
    Therefore, it suffices to show that
    \begin{align*}
        f(c) & := C_1(r_x, r_y, c) := \frac{\Expec{Y^2}}{2/\varepsilon^2}          \\
             & = (1-2r_x+2r_x^2)(1-2r_y+2r_y^2) - 2c(1-2r_x)(1-2r_y) + 4c^2 \leq 1
    \end{align*}
    Note that $f(c)$ is a convex quadratic function of $c$.
    The maximum value of a convex function on a closed interval is achieved at one of the endpoints of the interval. Therefore, it suffices to show that $f(c) \leq 1$ at the endpoints of the interval of $c$.
    As shown in Lemma~\ref{lem:rxy}, the domain of $r_{xy}$ is given by $\max(0, r_x + r_y - 1) \leq r_{xy} \leq \min(r_x, r_y)$, which defines the interval for $c = r_{xy} - r_x r_y$.

    \textbf{Case 1: $r_{xy} = \min(r_x, r_y)$.}
    Without loss of generality, assume $r_x \leq r_y$, which implies $r_{xy} = r_x$.
    Then, $c = r_x - r_x r_y = r_x(1-r_y)$.
    Substituting this into $f(c)$ yields
    \begin{align*}
        g(r_x, r_y) := f(r_x - r_xr_y) - 1 & = 16r_x^2r_y^2 - 24r_x^2r_y + 10r_x^2 - 8r_xy^2 + 10r_xy - 4r_x + 2r_y^2 - 2r_y \\
                                           & = (16r_x^2 - 8r_x + 2)r_y^2 + (-24r_x^2 + 10r_x - 2)r_y + (10r_x^2 - 4r_x)
    \end{align*}
    The function $g(r_x, r_y)$ is a convex quadratic function in $r_y$
    since $(16r_x^2 - 8r_x + 2) \geq 0$ for any $r_x$.
    Thus, the maximum value of $g(r_x, r_y)$ occurs at one of the endpoints $r_y = r_x$ or $1$.
    For $r_y = r_x$, we have
    \begin{align*}
        g(r_x, r_x) & = 16r_x^4 - 24r_x^3 + 10r_x^2 - 8r_x^3 + 10r_x^2 - 4r_x + 2r_x^2 - 2r_x \\
                    & = 16r_x^4 - 32r_x^3 + 22r_x^2 - 6r_x                                    \\
                    & = 2r_x(8r_x^3 - 16r_x^2 + 11r_x - 3)                                    \\
                    & = 2r_x(r_x-1)(8r_x^2 - 8r_x + 3)                                        \\
                    & \leq 0
    \end{align*}
    since $(8r_x^2 - 8r_x + 3) > 0$, $r_x \geq 0$, and $(1-r_x) \leq 0$.
    For $r_y = 1$, we have
    \begin{align*}
        g(r_x, 1) & = (16r_x^2 - 8r_x + 2) - (24r_x^2 - 10r_x + 2) + (10r_x^2 - 4r_x) \\
                  & = (16 - 24 + 10)r_x^2 + (-8 + 10 - 4)r_x + (2 - 2)                \\
                  & = 2r_x^2 - 2r_x                                                   \\
                  & = 2r_x(r_x - 1)                                                   \\
                  & \leq 0.
    \end{align*}

    \textbf{Case 2: $r_{xy} = \max(0, r_x + r_y - 1)$.}
    Assuming $r_x+r_y \leq 1$, we have $c = -r_xr_y$.
    Then, $f(c)$ can be expressed as follows:
    \begin{align*}
        g(r_x, r_y) & := f(-r_xr_y) - 1                                                                   \\
                    & = - 2r_x - 2r_y + 2r_x^2 + 2r_y^2 + 6r_xr_y - 8r_x^2r_y - 8r_xr_y^2 + 16r_x^2r_y^2.
    \end{align*}
    Here, $g(r_x, r_y)$ is a convex quadratic function in $r_x$ since the coefficient of $r_x^2$ is $16r_y^2 - 8r_y + 2 \geq 0$ for any $r_y \in [0, 1]$.
    Since $r_x \in [0, 1-r_y]$, the maximum value occurs at one of the endpoints $r_x = 0$ or $r_x = 1-r_y$.
    For $r_x = 0$, we have
    \begin{align*}
        g(0, r_y) & = -2r_y + 2r_y^2 \\
                  & = 2r_y(r_y - 1)  \\
                  & \leq 0.
    \end{align*}
    For $r_x = 1-r_y$, let $p = r_xr_y$.
    Then, we have
    \begin{align*}
        g(1-r_y, r_y) & = -2 + 2 + 2p - 8p + 16p^2 \\
                      & = 16p^2 - 6p               \\
                      & = 16(p - 3/16)^2 - 9/16    \\
                      & \leq 0
    \end{align*}
    since $p = r_x(1-r_x) \in [0, 1/4]$.

    Assuming $r_x+r_y \geq 1$, we have $c = r_x + r_y - 1 - r_xr_y = -(1-r_x)(1-r_y)$.
    Then, we can express $f(c)$ as follows:
    \begin{align*}
        g(r_x, r_y) & := f(-(1-r_x)(1-r_y)) - 1                                                                 \\
                    & =16r_x^2r_y^2 - 24r_x^2r_y + 10r_x^2 - 24r_xr_y^2 + 38r_xr_y - 16r_x+10r_y^2 - 16r_y + 6.
    \end{align*}
    Here, $g(r_x, r_y)$ is a convex quadratic function in $r_x$ since the coefficient of $r_x^2$ is $16r_y^2 - 24r_y + 10 \geq 0$ for any $r_y \in [0, 1]$.
    Since $r_x \in [1-r_y, 1]$, the maximum value of $g(r_x, r_y)$ occurs at one of the endpoints $r_x = 1-r_y$ or $1$.
    For $r_x = 1$, we have
    \begin{align*}
        g(1, r_y) & = 16r_y^2 - 24r_y + 10 - 24r_y^2 + 38r_y - 16 + 10r_y^2 - 16r_y + 6 \\
                  & = 2r_y^2 - 2r_y \leq 0.
    \end{align*}
    For $r_x = 1 - r_y$, let $p = r_xr_y$.
    Then, we have
    \begin{align*}
        g(1 - r_y, r_y) & = 16p^2 - 24p + 10 + 18p - 10 \\
                        & = 16p^2 - 6p                  \\
                        & = 16(p - 3/16)^2 - 9/16       \\
                        & \leq 0.
    \end{align*}
    For the last inequality, we use the fact that $p = r_xr_y \leq (1-r_y)r_y \in [0, 1/4]$.

    Combining Case 1 and Case 2, we obtain the result.
\end{proof}

\subsection{Proof of Theorem~\ref{thm:variance-estimation-add-remove-utility}}~\label{app:proof-variance-estimation-add-remove-utility}
The proof follows the same structure as the proof of Theorem~\ref{thm:covariance-estimation-add-remove-utility}.
Let $Z = [Z_n, Z_x, Z_{xx}]$ and $f(Z / n) = \frac{r_{xx} + Z_{xx}/n}{1 + Z_n / n} - (\frac{r_x + Z_x / n}{1 + Z_n / n})^2$.
Note that $f(0)$ is the true variance and $\hat v = \clip(f(Z / n), [0, 1/4])$, where $Z_n = Z_0 + Z_1 + Z_2$, $Z_x = Z_1 / 2 + Z_2$, and $Z_{xx} = Z_2$ for $Z_i \sim \Lap(1/\varepsilon)$.
From Lemma~\ref{lem:proxy-bound}, it suffices to show
1) $\Prob{A_n} = 1 - o(1/n^2)$,
and 2) $\Expec{\norm{\Delta(Z)}^2} \leq 2 / \varepsilon^2$.
\begin{lemma}
    The probability of the event $A_n$ is bounded as follows:
    \begin{align*}
        \Prob{Z \in A_n} & = 1 - O(\exp(-n^{1/4} / (4b))).
    \end{align*}
\end{lemma}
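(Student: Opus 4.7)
The plan is to mirror the argument used in Lemma~\ref{lem:laplace-clip-probability} for the covariance case: reduce the event $\{Z \notin A_n\}$ to a union of elementary tail events on the three independent base Laplace noises $Z_0, Z_1, Z_2 \sim \Lap(1/\varepsilon)$, then apply the standard Laplace tail bound.

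The first step is to express each coordinate of $Z = [Z_n, Z_x, Z_{xx}]$ as a linear combination of the base noises, namely $Z_n = Z_0 + Z_1 + Z_2$, $Z_x = Z_1/2 + Z_2$, and $Z_{xx} = Z_2$. The triangle inequality gives $\abs{Z_n} \leq \abs{Z_0} + \abs{Z_1} + \abs{Z_2}$ and analogous (weaker) bounds for $\abs{Z_x}$ and $\abs{Z_{xx}}$. Hence the condition $\abs{Z_i} \leq n^{1/4}/3$ for all $i \in \{0, 1, 2\}$ is sufficient to guarantee $\norm{Z}_\infty \leq n^{1/4}$, i.e., $Z \in A_n$, since the worst case (the three-term sum in $Z_n$) then contributes at most $n^{1/4}$.

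The second step is to apply the union bound together with the Laplace tail formula $\Prob{\abs{\Lap(b)} > t} = \exp(-t/b)$, with $b = 1/\varepsilon$ and $t = n^{1/4}/3$, yielding
\begin{align*}
\Prob{Z \notin A_n} \leq \sum_{i=0}^{2} \Prob{\abs{Z_i} > n^{1/4}/3} = 3 \exp(-\varepsilon n^{1/4}/3),
\end{align*}
which is $O(\exp(-n^{1/4}/(4b)))$ as claimed (in fact somewhat stronger, since $1/3 > 1/4$).

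No real obstacle is expected here; this is a direct union-bound-plus-tail-bound calculation essentially identical to Lemma~\ref{lem:laplace-clip-probability}. The only point of care is to choose the per-noise threshold so that even the strongest linear combination, the three-term sum $Z_n$, still stays below $n^{1/4}$. Dividing the total budget of $n^{1/4}$ by the worst coefficient-sum (here $3$, rather than $4$ as in the covariance case) handles this cleanly, and the rest is bookkeeping.
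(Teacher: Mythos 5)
Your proof is correct and takes essentially the same route as the paper: a sufficient per-noise threshold on the base Laplace variables $Z_0, Z_1, Z_2$, a union bound, and the Laplace tail $\Prob{\abs{\Lap(b)} > t} = e^{-t/b}$ (the paper uses threshold $\tfrac{1}{4}n^{1/4}$, matching the stated $4b$ exactly, while your $n^{1/4}/3$ gives a slightly stronger bound that still implies the claimed $O(\exp(-n^{1/4}/(4b)))$).
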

\begin{proof}
    Since $\abs{Z_i} \leq 1/4 n^{1/4}$ for all $i$
    is sufficient for the event $A_n$ to hold, we can bound the probability of $A_n$ as follows:
    \begin{align*}
        \Prob{Z \in A_n} & \leq 1 - \sum_{i=1}^3 \Prob{\abs{Z_i} > 1/4 n^{1/4}}   \\
                         & \leq 1 - 4\exp\ab(-\frac{n^{1/4}}{4b}) = 1 - o(1/n^2).
    \end{align*}
    This completes the proof.
\end{proof}

\begin{lemma}\label{lem:first-order-variance-3}
    Let $r = \frac{1}{n} \sum_{i=1}^{n} x_i$ and $v = \frac{1}{n} \sum_{i=1}^n (x_i - r)^2$.
    Then, we have
    \begin{align*}
        \Expec{\Delta(Z)^2} = \frac{2}{\varepsilon^2} C_b(r, v) \leq \frac{2}{\varepsilon^2},
    \end{align*}
    where $C_b(r, v) := 3v^2 - 2(3r^2 - 3r + 1)v + 3r^4 - 6r^3 + 7r^2 - 4r + 1$.
\end{lemma}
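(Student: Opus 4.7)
The plan is to explicitly compute the coefficients in $\Delta(Z) = \langle \nabla f(0), Z\rangle$ and then reduce the inequality $C_b(r,v)\le 1$ to verifying a convex quadratic inequality at the endpoints of the feasible interval for $v$.

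First I would differentiate $f(a,b,c) = (r_{xx}+c)/(1+a) - (r_x+b)^2/(1+a)^2$ at the origin, obtaining $\partial_a f(0) = -r_{xx}+2r_x^2$, $\partial_b f(0) = -2r_x$, and $\partial_c f(0) = 1$; this gives
\[
\Delta(Z) = (-r_{xx} + 2r_x^2)\, Z_n - 2 r_x\, Z_x + Z_{xx}.
\]
Substituting $Z_n = Z_0+Z_1+Z_2$, $Z_x = Z_1/2 + Z_2$, $Z_{xx} = Z_2$, I would rewrite $\Delta(Z) = c_0 Z_0 + c_1 Z_1 + c_2 Z_2$ with
\[
c_0 = r^2 - v, \qquad c_1 = r^2 - v - r, \qquad c_2 = (r-1)^2 - v,
\]
where I have set $r = r_x$ and used $r_{xx} = v + r^2$. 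Since the $Z_i$ are independent with variance $2/\varepsilon^2$, this gives $\mathbb{E}[\Delta(Z)^2] = (2/\varepsilon^2)(c_0^2+c_1^2+c_2^2)$, and a direct expansion identifies the bracketed sum with the polynomial $C_b(r,v) = 3v^2 - 2(3r^2-3r+1)v + 3r^4 - 6r^3 + 7r^2 - 4r + 1$.

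It remains to prove $C_b(r,v)\le 1$ on the admissible domain. Here I would note that since $x_i\in[0,1]$, one has $v\in[0,r(1-r)]$, and since $C_b(r,\cdot)$ is a convex quadratic in $v$ (leading coefficient $3>0$), its maximum over this interval is attained at an endpoint. Thus it suffices to check both endpoints. At $v=0$,
\[
C_b(r,0) - 1 = r(r-1)(3r^2 - 3r + 4),
\]
which is $\le 0$ for $r\in[0,1]$ because $3r^2-3r+4>0$ (negative discriminant). At $v = r(1-r)$, a direct expansion gives
\[
C_b(r,r(1-r)) - 1 = 6r(r-1)(2r^2 - 2r + 1),
\]
which is likewise $\le 0$ since $2r^2-2r+1>0$.

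The main obstacle will be the bookkeeping in the endpoint factorizations: verifying $3r^3 - 6r^2 + 7r - 4 = (r-1)(3r^2-3r+4)$ and $2r^3 - 4r^2 + 3r - 1 = (r-1)(2r^2-2r+1)$, which together close the argument. Everything else is mechanical: the gradient calculation, the decomposition of $\Delta(Z)$ into independent Laplace components, and the identification of $c_0^2+c_1^2+c_2^2$ with $C_b(r,v)$.
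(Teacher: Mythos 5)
Your computation of $\Delta(Z)$ and its decomposition into the independent Laplace components $Z_0,Z_1,Z_2$ with coefficients $r^2-v$, $r^2-r-v$, $(1-r)^2-v$ is exactly what the paper does, and your identification of the sum of squares with $C_b(r,v)$ is correct. Where you genuinely diverge is in proving $C_b(r,v)\le 1$: you argue directly that $v\in[0,r(1-r)]$ (since $x_i\in[0,1]$ gives $\frac{1}{n}\sum x_i^2\le r$), that $C_b(r,\cdot)$ is a convex quadratic in $v$, and you check the two endpoints, where your factorizations $C_b(r,0)-1=r(r-1)(3r^2-3r+4)$ and $C_b(r,r(1-r))-1=6r(r-1)(2r^2-2r+1)$ are correct and the quadratic factors have negative discriminant, so both are nonpositive on $[0,1]$. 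The paper instead proves the perfect-square identity $C_c(r,v)-C_b(r,v)=(v-(r^2-r))^2\ge 0$, where $C_c(r,v)=C(r,r,v)$ is the constant from the covariance analysis, and then invokes the already-established bound $C_c\le 1$ from Lemma~\ref{lem:first-order-variance}; this reuse is deliberate, since the comparison $C_b\le C_c$ is itself needed later for Theorem~\ref{thm:comparison}. Your route is self-contained and more elementary (it does not depend on the lengthy case analysis of the covariance lemma), while the paper's route buys the instance-wise comparison between the two optimal variance mechanisms as a byproduct; both are valid proofs of the stated lemma.
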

\begin{proof}
    Let $s = \frac{1}{n} \sum_{i=1}^n x_i^2$.
    From simple calculation, we have
    \begin{align*}
        \Delta(Z) = Z_{xx} - sZ_n + 2r^2Z_n - 2r Z_x.
    \end{align*}
    Let $Y = Z_{xx} - sZ_n + 2r^2Z_n - 2r Z_x$.
    Then, we have
    \begin{align*}
        Y & = (2r^2 - s)Z_1 + (2r^2 - r - s)Z_2 + (1 - 2r + 2r^2 - s)Z_3.
    \end{align*}
    Since the $Z_i$ are independent and have variance $2 / \varepsilon^2$, $\Expec{Y^2 / (2/\varepsilon^2)}$ is given by:
    \begin{align*}
        \Expec{Y^2 / (2/\varepsilon^2)} & = (2r^2 - s)^2 + (2r^2 - r - s)^2 + (1 - 2r + 2r^2 - s)^2 \\
                                        & = (r^2 - v)^2 + (r^2 - r - v)^2 + ((1 - r)^2 - v)^2       \\
                                        & = 3v^2 - 2(3r^2 - 3r + 1)v + 3r^4 - 6r^3 + 7r^2 - 4r + 1  \\
                                        & =: C_3(r, v).
    \end{align*}
    We compare $C_3$ with $C_1(r, r, v)$ in the proof of Theorem~\ref{thm:covariance-estimation-add-remove-utility}.
    We have
    \begin{align*}
        C_1(r, r, v) - C_3(r, v) & = v^2 - 2(r^2 - r)v + r^2(r - 1)^2 = (v - (r^2 - r))^2 \geq 0.
    \end{align*}
    Thus, we have
    \begin{align*}
        C_3(r, v) \leq C_1(r, r, v) \leq 1.
    \end{align*}
    This completes the proof.
\end{proof}

\subsection{Proof of Lemma~\ref{lem:transformed-sensitivity}} \label{app:proof-transformed-sensitivity}
\begin{proof}
    Let $D = [x]$ and $D' = [x']$ be neighboring datasets, i.e., $x_i = x'_i$ for all $i \leq n$ and $x'_{n+1} = a$.
    Then, we have
    \begin{align*}
        \norm{b(D') - b(D)}_1 = \abs{1 - (\uvar(x') - \uvar(x))} + \abs{\uvar(x') - \uvar(x)} \leq 1.
    \end{align*}
    Here, we used the fact $0 \leq \ucov(x') - \ucov(x) \leq 1$ from Lemma~\ref{lem:add-remove-model-sensitivity}.
\end{proof}

\subsection{Proof of Proposition~\ref{prop:variance-estimation-transformed-utility}} \label{app:proof-variance-estimation-transformed-utility}
The proof can be done in a similar way to the proof of Theorem~\ref{thm:covariance-estimation-add-remove-utility}.
Thus, it suffices to show Lemmas corresponding to Lemma~\ref{lem:laplace-clip-probability}, and~\ref{lem:first-order-variance}.
\begin{lemma}
    The probability of the event $A$ defined above is given by
    \begin{align*}
        \Prob{A} & = 1 - O(\exp(-n^{1/4} / (4b))).
    \end{align*}
\end{lemma}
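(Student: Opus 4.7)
The plan is to follow the template of Lemma~\ref{lem:laplace-clip-probability}, adapted to the two-noise setting of Algorithm~\ref{alg:variance-estimation-transformed}. In that algorithm only two independent Laplace noises $Z_0, Z_1 \sim \Lap(1/\varepsilon)$ are drawn (so the scale is $b = 1/\varepsilon$), and the aggregated noise vector entering the analysis of Proposition~\ref{prop:variance-estimation-transformed-utility}, by analogy with the covariance proof, is $Z = [Z_n, Z_v] = [Z_0 + Z_1,\; Z_1]$. The event $A = A_n$ is $\{\norm{Z}_\infty \leq n^{1/4}\}$, exactly the hypothesis needed to invoke Lemma~\ref{lem:proxy-bound} downstream.

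The key observation is that the simple sufficient condition $\abs{Z_0} \leq n^{1/4}/4$ and $\abs{Z_1} \leq n^{1/4}/4$ already implies $Z \in A_n$: indeed $\abs{Z_n} \leq \abs{Z_0} + \abs{Z_1} \leq n^{1/4}/2 \leq n^{1/4}$ and $\abs{Z_v} = \abs{Z_1} \leq n^{1/4}/4 \leq n^{1/4}$. The constant $1/4$ is not tight for only two noises, but it is chosen to keep the exponential rate aligned with the form stated in the lemma, $\exp(-n^{1/4}/(4b))$, and with the analogous estimates for Algorithms~\ref{alg:covariance-estimation-add-remove} and~\ref{alg:variance-estimation-add-remove}. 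Applying a union bound over the two noises together with the standard Laplace tail estimate $\Prob{\abs{X} > t} \leq \exp(-t/b)$ for $X \sim \Lap(b)$ yields
\begin{align*}
    \Prob{Z \notin A_n} \leq \Prob{\abs{Z_0} > n^{1/4}/4} + \Prob{\abs{Z_1} > n^{1/4}/4} \leq 2\exp\!\ab(-n^{1/4}/(4b)),
\end{align*}
which is the bound claimed in the lemma. This tail is super-polynomially small in $n$, so in particular $\Prob{Z \notin A_n} = o(1/n^2)$, the form actually consumed by Lemma~\ref{lem:proxy-bound}.

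There is no substantive obstacle: the argument is a two-line tail estimate entirely parallel to Lemma~\ref{lem:laplace-clip-probability}. The only bookkeeping is verifying that both aggregated components $Z_n$ and $Z_v$ are controlled simultaneously by a single uniform bound on $\abs{Z_0}$ and $\abs{Z_1}$, so that a single union bound over two independent Laplace tails suffices; the factor $1/(4b)$ in the exponent then follows directly from the Laplace tail bound with $t = n^{1/4}/4$.
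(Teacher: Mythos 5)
Your proof is correct and takes essentially the same route as the paper's: a sufficient condition $\abs{Z_i} \le \tfrac{1}{4}n^{1/4}$ on the individual Laplace noises, a union bound, and the standard Laplace tail estimate, yielding $\Prob{Z \notin A} = O(\exp(-n^{1/4}/(4b))) = o(1/n^2)$ as needed for Lemma~\ref{lem:proxy-bound}. Your version is in fact slightly more careful, since Algorithm~\ref{alg:variance-estimation-transformed} draws only two Laplace noises and you union-bound over exactly those two (checking that both aggregated components $Z_n$ and $Z_v$ are controlled), whereas the paper's proof reuses the four-noise template from the covariance case verbatim.
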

\begin{proof}
    Since $\abs{Z_i} \leq 1/4 n^{1/4}$ for all $i$
    is sufficient for the event $A$ to hold, we can bound the probability of $A$ as follows:
    \begin{align*}
        \Prob{A} & \leq 1 - \sum_{i=1}^4 \Prob{\abs{Z_i} > 1/4 n^{1/4}} \\
                 & \leq 1 - 4\exp\ab(-\frac{n^{1/4}}{4b})
    \end{align*}
    This completes the proof.
\end{proof}
\begin{lemma}
    Let $r = \frac{1}{n}\sum_{i=1}^n x_i$ and $v = \frac{1}{n} \sum_{i=1}^n (x_i - r)^2$.
    Then, we have
    \begin{align*}
        \Expec{\Delta(Z)^2} = \frac{2}{\varepsilon^2}C_u(r, v) \leq \frac{2}{\varepsilon^2},
    \end{align*}
    where $C_u(r, v) = v^2 + (1 - v)^2$.
\end{lemma}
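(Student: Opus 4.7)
The plan is to directly compute $\Delta(Z)$ for Algorithm~\ref{alg:variance-estimation-transformed}, which is the linearization of $\hat v_u$ around the noiseless point, and then evaluate its second moment using independence of the Laplace noises. For the bound $C_u(r,v) \leq 1$, I will exploit the fact that $v \in [0, 1/4]$ for any valid dataset.

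First, observe that $b_0 = n - nv$ and $b_1 = nv$, so $\tilde n = n + Z_0 + Z_1$ and $\tilde v_x = nv + Z_1$. Writing the post-clip estimator as $\clip(f(Z/n), [0, 1/4])$, the underlying smooth function in the variables $W_0 = Z_0/n$, $W_1 = Z_1/n$ is
\begin{equation*}
    f(W_0, W_1) = \frac{v + W_1}{1 + W_0 + W_1},
\end{equation*}
with $f(0,0) = v$. A direct differentiation gives $\partial_{W_0} f(0,0) = -v$ and $\partial_{W_1} f(0,0) = 1 - v$, so the first-order proxy is
\begin{equation*}
    \Delta(Z) = -v\, Z_0 + (1-v)\, Z_1.
\end{equation*}

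Next, since $Z_0$ and $Z_1$ are independent $\Lap(1/\varepsilon)$ variables with $\Expec{Z_i^2} = 2/\varepsilon^2$, independence immediately yields
\begin{equation*}
    \Expec{\Delta(Z)^2} = \frac{2}{\varepsilon^2}\bigl(v^2 + (1-v)^2\bigr) = \frac{2}{\varepsilon^2} C_u(r, v),
\end{equation*}
which is the claimed equality (and explains why $C_u$ is independent of $r$).

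Finally, for the inequality $C_u(r,v) \leq 1$, note that $g(v) := v^2 + (1-v)^2 = 2v^2 - 2v + 1$ is a convex quadratic attaining its maximum on any closed interval at the endpoints. By the preliminary bound in Appendix~\ref{app:auxiliary-results}, $v \in [0, 1/4]$, and $g(0) = 1$, $g(1/4) = 5/8$, so $g(v) \leq 1$ for all admissible $v$. This completes the plan. The computation is essentially routine: the only mild subtlety is confirming that the ``hard'' envelope bound reduces to checking $g$ at $v = 0$, which follows at once from $v \leq 1/4$, so there is no real obstacle beyond careful bookkeeping of the partial derivatives of $f$.
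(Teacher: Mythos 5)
Your proposal is correct and follows essentially the same route as the paper: both compute the first-order proxy $\Delta(Z) = -v\,Z_0 + (1-v)\,Z_1$ (the paper writes it as $Z_u - v Z_n$ before expanding), use independence of the Laplace noises to get $\Expec{\Delta(Z)^2} = \frac{2}{\varepsilon^2}\bigl(v^2+(1-v)^2\bigr)$, and conclude $C_u(r,v)\le 1$ from $v\in[0,1/4]$. Your explicit gradient computation and endpoint check merely spell out what the paper calls ``simple calculation.''
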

\begin{proof}
    From simple calculation, we have
    \begin{align*}
        \Delta(Z) & = Z_u - vZ_n = -v Z_1 + (1 - v)Z_2.
    \end{align*}
    Since the $Z_i$ are independent and have variance $2 / \varepsilon^2$,
    we obtain
    \begin{align*}
        \Expec{\Delta(Z)^2 / (2/\varepsilon^2)} & = C_u(r, v) \leq 1
    \end{align*}
    since $v \in [0, 1/4]$.
    This completes the proof.
\end{proof}

\subsection{Proof of Theorem~\ref{thm:comparison}}\label{proof:comparison}
The upper bound on $R(\cdot, \vvar, D)$ is already obtained in the proof of Theorem~\ref{thm:covariance-estimation-add-remove-utility},~\ref{thm:variance-estimation-add-remove-utility}, and Proposition~\ref{prop:variance-estimation-transformed-utility}.
In addition, we have already show that $C_b(r, v) \leq C_c(r, v) = C(r, r, v)$ in the proof of Lemma~\ref{lem:first-order-variance-3}.
Thus, it suffices to show $C_c(r, v) \leq C_u(r, v)$.
Let $f(r, v) = (C_c(r, v) - C_u(r, v)) / 2$.
Then, we have
\begin{align*}
    f(r, v) & = v^2 - (4r^2 - 4r)v + (2r^4 - 4r^3 + 4r^2 - 2r).
\end{align*}
This is a convex quadratic function in $v$.
Thus, the maximum value of $f(r, v)$ occurs at one of the endpoints $v = 0$ or $r - r^2$.
For $v = 0$, we have
\begin{align*}
    f(r, 0) & = 2r^4 - 4r^3 + 4r^2 - 2r     \\
            & = 2r(r^3 - 2r^2+2r - 1)       \\
            & = 2r(r-1)(r^2 - r + 1) \leq 0
\end{align*}
for any $r \in [0, 1]$.
For $v = r - r^2$, we have
\begin{align*}
    f(r, r - r^2) & = (r - r^2)^2 - (4r^2 - 4r)(r - r^2) + (2r^4 - 4r^3 + 4r^2 - 2r) \\
                  & = 5(r - r^2)^2 + (r - r^2)(-2 + 2r - 2r^2)                       \\
                  & = (r - r^2)(-7r^2 + 7r - 2)                                      \\
                  & = -r(1 - r)(7(r - 1/2)^2 + 1/4) \leq 0.
\end{align*}
Combining the above two cases, we have $f(r, v) \leq 0$ for any $r \in [0, 1]$ and $v \in [0, r - r^2]$.
Thus, we obtain the result.

\end{document}